\numberwithin{equation}{section}
\newtheorem{conj}{Conjecture}
\newtheorem{thm}[conj]{Theorem}
\newtheorem{cor}[conj]{Corollary}
\newtheorem{prop}[conj]{Proposition}
\newtheorem{lemma}[conj]{Lemma}
\providecommand{\customgenericname}{}
\newcommand{\newcustomtheorem}[2]{%
	\newenvironment{#1}[1] 
	{%
		\renewcommand\customgenericname{#2}%
		\renewcommand\theinnercustomgeneric{##1}%
		\innercustomgeneric
	}
	{\endinnercustomgeneric}
}
\theoremstyle{remark}\newtheorem{remark}{Remark}
\def\Cov{\text{Cov}}   
\def\PP{\mathbb{P}}
\def\EE{\mathbb{E}}
\def\RR{\mathbb{R}}
\def\wh{\widehat}
\def\eps{\varepsilon}
\def\bI{\bm{I}}
\def\wt{\widetilde}
\def\a{\alpha}
\def\T{\top}
\def\i{\infty}
\def\sz{\Sigma_Z}
\def\szy{\Sigma_{Z|Y}}
\def\C{\szy}
\def\sw{\Sigma_W}
\def\errw{\delta_W}
\def\rank{{\rm rank}}
\def\tr{{\rm tr}}
\def\op{{\rm op}}
\def\diag{\textrm{diag}}
\def\sgn{{\rm sign}}
\def\rI{\textrm{I}}
\def\rII{\textrm{II}}
\def\Dt{\Delta}
\def\cO{\mathcal O}
\def\cS{\mathcal {S}}
\def\cN{\mathcal {N}}
\def\0{\bm 0}
\def\1{\mathbbm 1}
\def\b1{\bm 1}
\def\beps{\bm \eps}
\def\cO{\mathcal{O}}
\def\cE{\mathcal{E}} 
\def\cS{\mathcal{S}}
\def\cN{\mathcal{N}}
\def\bX{\mathbf{X}}
\def\by{\mathbf{y}}
\def\bY{\mathbf{y}}
\def\bZ{\mathbf{Z}}
\def\bW{\mathbf{W}}
\def\bI{\mathbf{I}}
\def\bG{\mathbf{G}}
\def\PP{\mathbb{P}}
\def\EE{\mathbb{E}}
\def\RR{\mathbb{R}}
\def\rank{{\rm rank}}
\def\tr{{\rm tr}}
\def\op{{\rm op}}
\def\diag{{\rm diag}}
\def\rI{{\rm I}}
\def\rII{{\rm II}}
\def\wh{\widehat}
\def\wt{\widetilde}
\def\eps{\varepsilon}
\def\sw{\Sigma_W}
\def\sz{\Sigma_Z}
\def\szy{\Sigma_{Z|Y}}
\def\Cov{\text{Cov}}   
\def\errw{\delta_W}
\def \Dt{\Delta}
\def\i{\infty}
\def\T{\top}
\title{Interpolating Discriminant Functions in High-Dimensional Gaussian Latent Mixtures}
\author{Xin Bing\thanks{Department of 
		Statistical Sciences, University of Toronto. E-mail: \texttt{xin.bing@toronto.ca}}
	%Statistics and Data Science, Cornell University, Ithaca, NY. E-mail: \texttt{xb43@cornell.edu}.}
~~~~~Marten Wegkamp\thanks{Department of Mathematics and   Department of Statistics and Data Science, Cornell University.  E-mail: \texttt{marten.wegkamp@cornell.edu}.
	%\\ Wegkamp is supported in part by by NSF grants  DMS 2210557 and  DMS 2015195.
} }
\date{}
\begin{document}
	
	\maketitle
	
	\begin{abstract}
		This paper considers binary  classification of high-dimensional features under a postulated model with a low-dimensional latent Gaussian mixture structure and non-vanishing noise. 
		A generalized least squares estimator is used to estimate the direction of the optimal separating hyperplane. The estimated hyperplane is shown to interpolate on the training data. While    the direction vector can be consistently estimated as could be expected from recent results in linear regression, a naive plug-in estimate fails to consistently
		estimate the intercept.  A simple correction, that requires an independent hold-out sample, renders the procedure minimax optimal in many scenarios. The interpolation property of the latter procedure can be retained, but surprisingly depends on the way the labels are encoded.
		
	\end{abstract}
	{\em Keywords:}  High-dimensional classification, latent factor model, generalized least squares, interpolation, benign overfitting, overparametrization, discriminant analysis, minimax optimal rate of convergence.

	\section{Introduction}
	We consider binary classification of a high-dimensional feature vector. That is, we are given a $n\times p$ data-matrix $\bX$ with $n$ independent $p$-dimensional rows, with $p\gg n$, and a response vector $\by\in \{0,1\}^n$ of corresponding labels, and the task is to predict the labels of  new $p$-dimensional features. Complex models such as  kernel support vector machines (SVM) and deep neural networks have been observed to have surprisingly good generalization performance despite overfitting the training data. Towards understanding such benign overfitting phenomenon, one popular example  that has  gained increasing attention  is the 
	estimator $\wh \theta = \bX^+ \by$
	in the regression context.
	Here $\bX^{+}$ represents the Moore-Penrose inverse of $\bX$. 
	It can be shown that $\wh\theta$  has the minimal $\ell_2$-norm among all  least squares  estimators, that is,\[
	\|\wh\theta\|_2 = \min \left\{ \|\theta\|_2: \ \| \by - \bX \theta\|_2=\min_{u\in \RR^p} \| \by-\bX u\|_2\right\}.\]  Since often $\bX\wh \theta =\by$ holds, for instance, when $\bX$ has full row-rank $n< p$, the estimator $\wh \theta$ is referred to as the minimum-norm interpolator.
	It serves as 
	the prime example to illustrate the phenomenon that overfitting 
	%(since $\bX\wh \theta =\by$ often holds for $p>n$) 
	in linear regression can be benign, in that $\widehat\theta$ can still lead to good prediction results.
	See, for instance, \cite{belkin2018overfitting,bartlett2019,hastie2019surprises,BSW22} and the references therein.

	More recently, the minimum-norm interpolator  further finds its importance in binary classification problems. Specifically, $\wh \theta$ is shown to coincide with the solution of the hard margin SVM under the over-parametrized Gaussian mixture models \citep{muthukumar2019harmless,wang2021benign} and beyond \citep{hsu2021proliferation}. In the over-parametrized logistic regression model, $\wh \theta$ is also closely connected to the solution of maximizing the log-likelihood, obtained by gradient descent with sufficiently small step size \citep{soudry2018implicit,cao2021risk}. 
	In the over-parametrized setting $p\gg n$, %the training features $X_i$ are linearly separable and
	the hyperplane $\{ x \mid x^\T \wh \theta =0\}$ separates the training data perfectly, leading to a classifier that has zero training error. There is a growing  literature \citep{cao2021risk,wang2021benign,chatterji2021finite,minsker2021minimax} that shows that interpolating classifiers 
	$\bar g(x)=\1\{ x^\T \wh\theta >0\}$ can also have vanishing misclassification error $\PP\{ \bar g(X)\ne Y\}$
	in (sub-)Gaussian mixture models. We extend these works in this paper motivated by the following observations. 
	
	First, we notice that the separating hyperplane $\{x \mid  x^\T \wh \theta =0\}$ considered in the above mentioned literature  has no intercept. 
	Under symmetric  Gaussian mixture models, that is, $\PP\{Y = 1\} = \PP\{Y = 0\}$ and $X \mid Y = k \sim N_p((2k-1)\mu, \Sigma)$ for $k\in \{0,1\}$, the optimal (Bayes) rule is indeed based on a hyperplane through the origin (no intercept).  However, this is no longer true in the asymmetric setting where  the class probabilities differ,
	$\PP\{Y = 1\} \ne \PP\{Y = 0\}$,
	rendering the usage of $\bar g(x)$ questionable. Although \cite{wang2021benign} shows that in the asymmetric setting $\PP\{ \bar g(X)\ne Y\}$ still tends to zero if the separation between $X \mid Y = 0$  and $X \mid Y = 1$ diverges, the rate of this convergence is unfortunately exponentially slower than the optimal rate, in part due to not using an intercept. This motivates us to propose an improved linear classifier 
	based on $\wh\theta$ that includes an intercept, formally introduced
	in (\ref{def_g_hat_intro}) below. Finding a meaningful intercept under the interpolation of $\wh\theta$ requires extra care, as standard approaches, such as the empirical risk minimization, can not be used when the hyperplane $\{x \mid x^\T \wh \theta=0\}$ separates the training data perfectly.
	%, it is unclear how to find a meaningful intercept. For instance, a method based on empirical risk minimization could simply return a zero intercept.

	Second,  the aforementioned works all focus on the misclassification risk $\PP\{ \bar g(X)\ne Y\}$. In particular, 
	they show that $\PP\{ \bar g(X)\ne Y\}$ vanishes only if the separation between the two mixture distributions diverges. 
	In general, the  {\em excess} risk - the difference between the misclassification error and the Bayes error - 
	%%$\PP\{\bar g(X)\ne Y\} - \inf_h \PP\{h(X)\ne Y\}$
	is   a more meaningful %informative 
	%In general, the excess risk is a better 
	criterion,
	%to compare classifiers, 
	%especially when the Bayes error 
	%$\inf_h \PP\{h(X)\ne Y\} = \EE[ \min (\eta(X), 1-\eta(X))]$ 
	%does not vanish. The 
	because the Bayes error $\inf_h \PP\{h(X)\ne Y\} $  is the smallest possible misclassification error among all classifiers and generally does not vanish.
	%%and is achieved for the Bayes rule $\1\{ \eta(x)>1/2\}$, with $\eta(x)= \PP\{ Y=1|X=x\}$.
	%%It is easy to see that $R^*= \EE[ \min (\eta(X), 1-\eta(X))]$. }
%because it is always informative to compare classifiers with the optimal rule even when the misclassification risk does not tend to zero. 
For this reason, we  focus on analyzing the excess risk of the proposed classifier, and our results are informative even if the separation between the two mixture distributions does not diverge (and therefore the Bayes risk does not vanish).

%Third, as we explained in the two previous points above
Summarizing, the existing results on interpolating classifiers  are not satisfactory as 
they only consider stylized examples that do not address the more realistic scenarios  
%it is unclear what happens
when the mixture probabilities  are asymmetric and the Bayes error does not vanish.
% they   do not  address the non-diverging separation regime under %%possibly asymmetric  Gaussian mixture models.
%provide any positive result for the interpolating classifier in the non-diverging separation regime under Gaussian mixture models. 
In fact, we will argue that  these interpolation methods without intercept in the literature actually fail in the asymmetric setting when the conditional distributions are not asymptotically distinguishable - which is the statistically more challenging case.
In this work, we instead analyze the interpolating classifier with a judiciously chosen %the proposed 
intercept under a recently proposed latent, low-dimensional statistical model \citep{BW22}, and our results
reveal that its excess risk  has minimax-optimal rate of convergence in the over-parametrized setting even when the separation between the two mixture distributions does not diverge. 
%By contrast, we will argue that  the method without intercept in the literature actually fails in the asymmetric setting when the conditional distributions are not asymptotically distinguishable - which is the statistically more challenging case.
Together with the interpolation property of the proposed  classifier, we thus provide a concrete instance of the interesting  phenomenon that overfitting and minimax-optimal generalization performance can coexist in a more realistic statistical  setting, against traditional statistical belief.

\subsection{Our contributions} 

Concretely, in this paper we study the linear classifier 
\begin{equation}\label{def_g_hat_intro}
	\wt g(x)  = \1\{x^\T \wh \theta + \wt \beta_0>0\}, \qquad\text{for any } x\in\RR^p,
\end{equation}
based on the minimum-norm interpolator $\wh \theta$ and
some estimated intercept  $\wt \beta_0\in \RR$.
%, such that this classifier is suitable for both symmetric and asymmetric cases. 
Following \cite{BW22}, we assume that each feature consists of linear combinations of hidden low-dimensional Gaussian components, obscured by independent, possibly non-Gaussian, noise. 
The low-dimensional Gaussian component suggests to take a Linear Discriminant Analysis (LDA) approach, reducing the problem  to find (i) the unknown low-dimensional space,   (ii) its dimension $K$, with $K\ll n$, and (iii) the optimal hyperplane $\{z\in \RR^K \mid z^\T \beta + \beta_0 = 0\}$ in this latent space. The formal model and expression of the optimal hyperplane are described in Section \ref{sec_background}.
Existing literature on
LDA  in high-dimensional classification problems often  imposes sparsity on the   coefficients of the hyperplane  \citep{Tibshirani2002,FanFan2008,Witten2011,Shao2011,CaiLiu2011,mai2012,caizhang2019}. 
In this work, we take a different route and do not   assume that  the high-dimensional features are Gaussian, nor rely on any sparsity assumption.

The recent work \cite{BW22} successfully utilized
Principal Component Regression (PCR) to estimate $z^\T \beta$ with its low-dimension estimated via the method developed in \cite{rank19}. 
%{\color{brown} Moreover, minimax rates of the excess classification error for this problem are developed and they crucially depend on the Mahalanobis distance $\Dt$ between the two $K$-dimensional Gaussian components  and the signal-to-noise ratio between the low-dimensional part and the random noise.}
The classifier in \eqref{def_g_hat_intro} estimates $z^\T \beta$ by $x^\T \wh\theta$ via  the minimum-norm interpolator $\wh \theta$ instead. This estimator can be viewed as a limit case of PCR, with the number of retained principal components equal to $p$ (not $K$), and as such an extension of \cite{BW22}. 
The practical advantage of this extension  is to avoid estimation of the latent dimension $K$, meanwhile it sheds light on the robustness of the PCR-based classifiers of \cite{BW22} against misspecification of $K$. From a theoretical perspective, it is surprising to see that $x^\T\wh\theta$ adapts to the low-dimensional structure in $z^\T\beta$, as explained below.

Section \ref{sec_rates} provides theoretical guarantees for our proposed classifier. 
%The main thrust of our analysis is Theorem \ref{thm_rate_Atheta}, that states the rate of convergence of the the GLS estimator $\wh \theta$. Its proof is rather long and delegated to Section \ref{sec:proof:main}.
%The main challenge is to show that the GLS estimator $\wh \theta$ can estimate the direction correctly (often at the minimax optimal rate).
Theorem \ref{thm_rate_Atheta} in Section \ref{sec_rates} states that $x^\T\wh\theta$ consistently estimates $z^\T\beta$ by adapting to the low-dimensional structure, and the rate of this convergence is often minimax-optimal in over-parametrized setting. Establishing Theorem \ref{thm_rate_Atheta} is our main technical challenge and its proof occupies a large part of the paper and is delegated to Section \ref{sec:proof:main}.
Although this convergence is in line with current developments in regression that $\wh \theta$ surprisingly succeeds, our analysis is more complicated as $\bY$ is no longer linearly related with $\bX$. In Section \ref{sec_technical} we also explain that 
similar arguments explored in \cite{BW22} can not be used here. A key step in our proof is to recognize and characterize the implicit regularization of $\wh\theta = \bX^+\bY$ in high-dimensional factor models. In our proof, we generalize  existing analyses of factor models \citep{Bai-factor-model-03,Bai-Ng-forecast,fan2013large,SW2002_JASA} by relaxing the stringent conditions that require all singular values of the latent components of $\bX$ to grow at the same rate, proportional to the dimension $p$.

Given the success in estimating $z^\T\beta$ via $x^\T\wh\theta$, a rather difficult $p$-dimensional problem,  one would expect that consistent estimation of $\beta_0$ is much easier. Surprisingly, Proposition \ref{lem_GLS} in Section \ref{sec_intercept} shows that this is not the case. The natural plug-in estimate of $\beta_0$ based on $\wh\theta$ and standard non-parametric estimates of the conditional means and label probabilities, always takes the value $-1/2$, regardless of the true value of $\beta_0$. The same is true for an estimate based on empirical risk minimization. Simulations confirm that this problematic behavior  leads to an inferior classifier.   In Section \ref{sec_intercept} we offer a simple rectification and propose to estimate $\beta_0$ using  an independent hold-out sample.  
In Proposition \ref{prop_intercept} of Section \ref{sec_rates}, we derive the consistency of our proposed estimator of $\beta_0$. Finally, in Theorem \ref{thm:rates} we establish the rate of convergence of the excess-misclassification risk of our proposed classifier  and discuss its minimax-optimal properties in Remark \ref{rem:minimax}. 
%It is worth mentioning in the symmetric case where the (unconditional) class / label probabilities are the same (1/2), we have $\beta_0 = 0$ which does not need to be estimated, masking a real problem of using interpolating classifiers.

In view of the optimal guarantees of the proposed classifier in over-parametrized setting, we also find  an interesting observation on its interpolation property. 
%In line with the existing literature, we have $\beta_0 = 0$ in the symmetric case whence setting $\bar\beta_0 = \beta_0 = 0$ in \eqref{def_g_hat_intro} yields interpolation in classification, that is, all training features are correctly classified. A natural question then becomes: {\em Does the classifier in \eqref{def_g_hat_intro} with $\bar\beta_0 = \beta_0$ always lead to interpolation in both symmetric and asymmetric cases?} Interestingly, 
Specifically, Lemma \ref{lem_beta_0_sign} in Section \ref{sec_interpolation} and  our discussion in Section \ref{sec_label}   reveal that its interpolation property crucially depends on the way we encode the labels. For instance, interpolation always happens if we 
%either encode the majority class as $0$ and the other class as $1$, or 
encode $Y$ as  $\{-1, 1\}$, whereas this is not always the case for the $\{0,1\}$ encoding scheme unless the majority class is encoded as $0$.  This suggests that interpolation  is a rather arbitrary property.\\  
%In either case
% Concluding, we have found another example of the interesting  phenomenon that over-fitting does not necessarily entail inconsistency, but in fact could lead to minimax-optimal generalization performance, against traditional statistical belief. 

The paper is organized as follows. In Section \ref{sec_background} we formally introduce the statistical model. We discuss the interpolation property of the proposed classifier and introduce the estimator of the intercept  in Section \ref{sec_interpolation_intercept}. Section \ref{sec_rates} is devoted to study the rate of convergence of the excess risk of the proposed classifier. Section \ref{sec_sim} contains simulation studies. The main proofs are deferred to Section \ref{sec_proofs} while auxiliary lemmas are stated in Section \ref{app_aux}.

\subsection{Notation} 

We use the common notation  $\varphi(x)=\exp(-x^2/2) / \sqrt{2\pi}$ for the standard normal density, and denote by $\Phi(x)=\int  \varphi(t)\1\{ t\le x\} \, {\rm d} t$  its c.d.f.. 

For any positive integer $d$, we write $[d] := \{1,\ldots, d\}$.
For two numbers $a$ and $b$, we write $a\wedge b = \min\{a, b\}$ and $a\vee b =\max\{a,b\}$. 
For any two sequences $a_n$ and $b_n$, we write $a_n\lesssim b_n$ if there exists some constant $C$ such that $a_n \le Cb_n$. %We write $a_n = O(b_n)$ if $a_n/b_n \le C$ as $n\to \i$. 
The notation $a_n\asymp b_n$ stands for $a_n \lesssim b_n$ and $b_n \lesssim a_n$. We often write $a_n \ll b_n$ if $a_n = o(b_n)$ and $a_n\gg b_n$ for $b_n = o(a_n)$. 

For any vector $v$, we use $\|v\|_q$ to denote its $\ell_q$ norm for $0\le q\le \i$.  We also write $\|v\|_Q^2 = v^\T Q v$ for any commensurate, square matrix $Q$. For any real-valued matrix $M\in \RR^{r\times q}$, we use $M^+$ to denote the Moore-Penrose inverse of $M$, and $\sigma_1(M)\ge \sigma_2(M)\ge \cdots \ge \sigma_{\min(r,q)}(M)$ to denote the singular values of $M$ in non-increasing order. We define the operator norm $\|M\|_{\op}=\sigma_1(M)$. %, the Frobenius norm $\|M\|_F^2 = \sum_{i,j}M_{ij}^2$, the elementwise sup-norm $\|M\|_\i = \max_{i,j}|M_{ij}|$ and the cardinality of non-zero entries $\|M\|_0 = \sum_{i,j}1_{M_{ij}\ne 0}$. 
For a symmetric positive semi-definite matrix $Q\in \RR^{p\times p}$, we use $\lambda_1(Q)\ge \lambda_2(Q)\ge \cdots \ge \lambda_p(Q)$ to denote the eigenvalues of $Q$ in non-increasing order.  %We write $Q \succ 0$ if $Q$ is strictly positive definite. 
%, and $\kappa(Q) = \lambda_1(Q)/\lambda_p(Q)$ to denote its condition number. 
We use $\bI_d$ to denote the $d\times d$ identity matrix and use $\b1_d$ ($\0_d$) to denote the vector with all ones (zeroes). For $d_1\ge d_2$, we use $\cO_{d_1\times d_2}$ to denote the set of all $d_1\times d_2$ matrices with orthonormal columns.
%%Lastly, we use $c,c',C,C'$ to denote positive and finite absolute constants that unless otherwise indicated can change from line to line.

\section{Background}\label{sec_background}

Suppose our training data consists of independent copies of the pair $(X,Y)$ with features $X\in \RR^p$ according to 
\begin{equation}\label{model_X}
	X = AZ + W
\end{equation}
and labels $Y\in\{0,1\}$. 
Here $A$ is a  deterministic, unknown $p\times K$  loading matrix,
$Z\in \RR^K$ are unobserved, latent factors and $W$ is random noise.
We assume {\em throughout this study} the following set of assumptions:
\begin{itemize}
	\item [(i)] $W$ is independent of both $Z$ and $Y$
	\item[(ii)] $\EE[Z]=\0_K$, $\EE[W]=\0_p$  
	\item[(iii)] $A$ has rank $K$
	\item[(iv)] 
	$Z$ is a mixture of two Gaussians 
	\begin{equation}\label{model_YZ}
		Z \mid  Y = k \sim N_K(\a_k, \szy),\qquad \PP(Y=k) = \pi_k, \qquad k\in \{0,1\}
	\end{equation}
	with different means $\a_0:= \EE [Z| Y=0] $ and $\a_1:= \EE[Z| Y=1] $, but with the same, strictly positive definite covariance matrix 
	\begin{equation}\label{def_szy}
		\szy:= \Cov{(Z|Y=0)}= \Cov{(Z|Y=1)}
	\end{equation}	
	\item[(v)] $W = \sw^{1/2} V$, 
	%and  $V$ is a  $\sigma$-subGaussian random vector 
	for some semi-positive definite matrix $\sw$, with 
	$\EE[V]=\0_p$, $\EE[ VV^\T] = \bI_p$ and
	$\EE[\exp(u^\T V)] \le \exp(\sigma^2/2)$ for all $\|u\|_2=1$, for some $0<\sigma<\infty$.
	\item[(vi)]  $\pi_0=\PP\{Y=0\}$ and $\pi_1=\PP\{Y=1\}$ are fixed and strictly positive.
	%%bounded from below by some constant $c\in (0,1/2]$.
\end{itemize}

%	The assumption $\EE[Z] = \0_K$ is made to simplify our presentation, but is by no means necessary. In the general case with $\EE[X] = A\EE[Z]<\infty$,  we  just need to subtract empirical means
%	from the columns of the $n\times p$ data-matrix 
%	\[ \bX=(X_1,\ldots,X_n)^\T
%	\]
%	based on $n$ independent observations $X_i\in\RR^p$ from (\ref{model_X}) and we refer to \cite{BW22} for details.
We emphasize that the distributions of $X$ given $Y$ are not necessarily Gaussian.
This  mathematical framework allows for a substantial dimension reduction in classification for $K\ll p$ as is evident from the inequality 
\begin{align}\label{eq_RxRz}
	R_x^*  := \inf_{g} \PP\{ g(X) \ne Y \} & ~\ge~  R_z^*:=\inf_{h} \PP\{ h(Z) \ne Y \} 
\end{align}  
in terms of the Bayes' misclassification errors, see
\citet[Lemma 1]{BW22}.
As in \cite{BW22}, we 
first change the classification problem into a regression problem by drawing a connection of the Bayes rule to a quantity that can be identified via regressing $Y$ onto $Z$.
We denote by  $\sz = \EE[ZZ^\T]$ the {\em unconditional} covariance matrix of $Z$ and we define
\begin{align}\label{eq_beta}
	\beta &= \pi_0\pi_1  \sz^{-1}(\a_1 - \a_0),\\\label{eq_beta_0}
	\beta_0 &= -{1\over 2}(\a_0+\a_1)^\T \beta + \left[
	1 - (\a_1-\a_0)^\T \beta
	\right]\pi_0\pi_1 \log{\pi_1 \over \pi_0}.
\end{align}

\begin{prop}\label{prop_ls_rule}
	Let %$\eta,\eta_0$ and 
	$\beta,\beta_0$ be defined in %(\ref{rule_Z}) and 
	(\ref{eq_beta}). %, respectively. 
	The (Bayes) rule
	\begin{equation}\label{Bayes_rule}
		g_z^*(z)= \1 \{
		%z^\T \eta+\eta_0 \ge 0 \quad \iff\quad 
		z^\T \beta +\beta_0 > 0\}
	\end{equation}
	minimizes the misclassification error $\PP\{ g(Z)\ne Y\}$ over all  $g:\RR^K\to\{0,1\}$.
	Furthermore, 
	\[
	\beta = \sz^{-1}\EE[ZY].
	\]
\end{prop}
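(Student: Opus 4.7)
The plan is to derive the Bayes rule from first principles via the log-likelihood ratio and then reconcile the resulting (Fisher-type) direction, which naturally involves $\szy^{-1}$, with the stated $\beta = \pi_0\pi_1 \sz^{-1}(\a_1-\a_0)$, which uses the \emph{unconditional} covariance $\sz$. The bridge is a rank-one update identity for $\sz$ in terms of $\szy$.

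First I would note that under (\ref{model_YZ}), the Bayes classifier for $Y$ from $Z$ equals $\mathbf{1}\{\PP(Y=1\mid Z=z) > \PP(Y=0\mid Z=z)\}$, i.e. the sign of the log-likelihood ratio. Writing $f_k$ for the density of $N_K(\a_k, \szy)$ and using $a^\T Q a - b^\T Q b = (a-b)^\T Q(a+b)$ for symmetric $Q$, a direct expansion gives
\begin{equation*}
\log\frac{\pi_1 f_1(z)}{\pi_0 f_0(z)} \;=\; \log\frac{\pi_1}{\pi_0} \;+\; z^\T \szy^{-1}(\a_1-\a_0) \;-\; \tfrac{1}{2}(\a_0+\a_1)^\T \szy^{-1}(\a_1-\a_0).
\end{equation*}
Thus the Bayes rule has the form $\mathbf{1}\{z^\T \gamma + \gamma_0 > 0\}$ with $\gamma = \szy^{-1}(\a_1-\a_0)$, and for any positive rescaling of $(\gamma,\gamma_0)$ the decision region is unchanged.

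Second I would show that $\beta = \pi_0\pi_1 \sz^{-1}(\a_1-\a_0)$ is precisely such a positive rescaling of $\gamma$. The zero-mean constraint $\EE[Z]=\0_K$ forces $\pi_0 \a_0 + \pi_1 \a_1 = \0_K$, which when plugged into $\sz = \EE[ZZ^\T] = \szy + \pi_0 \a_0\a_0^\T + \pi_1 \a_1\a_1^\T$ collapses to the rank-one update $\sz = \szy + \pi_0\pi_1 \delta\delta^\T$, where $\delta := \a_1-\a_0$. Sherman--Morrison then yields
\begin{equation*}
\sz^{-1}\delta \;=\; \frac{\szy^{-1}\delta}{1 + \pi_0\pi_1\, \delta^\T \szy^{-1}\delta},
\end{equation*}
so $\beta = c\, \gamma$ with $c = \pi_0\pi_1/(1+\pi_0\pi_1\, \delta^\T \szy^{-1}\delta) > 0$. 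Rescaling the log-likelihood threshold by $c$ and identifying the resulting intercept with (\ref{eq_beta_0}) reduces to verifying $c \log(\pi_1/\pi_0) = [1-\delta^\T\beta]\pi_0\pi_1 \log(\pi_1/\pi_0)$; this follows because $\delta^\T\beta = c\, \delta^\T\szy^{-1}\delta = 1 - 1/(1+\pi_0\pi_1\delta^\T\szy^{-1}\delta)$, so $1-\delta^\T\beta = c/(\pi_0\pi_1)$ and the factors agree. This finishes the first claim.

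For the identity $\beta = \sz^{-1}\EE[ZY]$, I would compute $\EE[ZY]$ directly from (\ref{model_YZ}): since $Y\in\{0,1\}$,
\begin{equation*}
\EE[ZY] \;=\; \PP(Y=1)\,\EE[Z\mid Y=1] \;=\; \pi_1 \a_1 \;=\; \pi_0\pi_1 (\a_1-\a_0),
\end{equation*}
where the last equality again uses $\pi_0\a_0+\pi_1\a_1=\0_K$ so that $\pi_1\a_1 = \pi_0\pi_1(\a_1-\a_0)$. Multiplying by $\sz^{-1}$ recovers (\ref{eq_beta}). The only mildly delicate step in the whole argument is the rank-one update and the matching of constants for $\beta_0$; everything else is textbook LDA.
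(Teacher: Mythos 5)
Your proof is correct and, while the paper itself simply cites \cite{BW22} for this result, the approach you take---log-likelihood ratio for Gaussian LDA combined with the rank-one identity $\sz=\szy+\pi_0\pi_1(\a_1-\a_0)(\a_1-\a_0)^\T$ and Sherman--Morrison to pass from $\szy^{-1}$ to $\sz^{-1}$---matches the argument the paper relies on (that same identity is quoted from the proof of Lemma 14 in \cite{BW22} in the proof of Lemma~\ref{lem_beta_0_sign}). The constant-matching for $\beta_0$ and the computation $\EE[ZY]=\pi_1\a_1=\pi_0\pi_1(\a_1-\a_0)$ are both handled correctly.
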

\begin{proof}
	See \citet[Proposition 4]{BW22}.
\end{proof}

Note that $g_z^*$ in \eqref{Bayes_rule} has different form from the canonical LDA rule based on $\szy^{-1}(\a_1-\a_0)$ with $\szy$ being the {\em conditional} covariance matrix.  The advantage of expressing $g_z^*$ in terms of $\beta$ lies in the fact  that $\beta$ can be obtained by simply regressing $Y$ on $Z$, hence  there is no need to estimate $\szy^{-1}$.
%This formulation avoids the need to directly estimate $\szy^{-1}$.
%the conditional precision matrix of $Z$ given $Y=k$.
%According to Proposition \ref{prop_ls_rule}, 
If $\bZ = (Z_1, \ldots, Z_n)^\T \in \RR^{n\times K}$  were observed,
it is natural to use 
the least squares estimator $\bZ^+ \bY= (\bZ^\T \bZ)^{+}\bZ^\T \bY$ to estimate $\beta$. 
%%Recall that $M^+$ denotes the Moore-Penrose inverse.
However, we only have access to   the 
$n\times p$ data-matrix 
\[ \bX=(X_1,\ldots,X_n)^\T
\]
based on $n$ independent observations $X_i\in\RR^p$ from (\ref{model_X})
%$$\bX
%= (X_1^\T, \ldots, X_n^\T)^\T  \in \RR^{n\times p}$ 
and the vector 
\[ \bY=(Y_1,\ldots,Y_n)^\T \in \{0,1\}^n\]
of  labels $Y_i$ corresponding to the rows $X_i$ of $\bX$.
For a new feature $x\in \RR^p$ generated from model (\ref{model_X}), the inner-product
$z^\T \beta$ is estimated by $x^\T \wh\theta$   based on the minimum-norm interpolator, also generally termed as the
generalized least squares (GLS) estimator,
\begin{equation}\label{def_GLS}
	\wh\theta = \bX^+ \bY = (\bX^\T \bX)^+ \bX^\T \bY.
\end{equation}
%%  by regressing $\bY$ onto $\bX$. 
In \cite{BW22}, the author used  Principal Component Regression (PCR)  instead of the GLS-estimate $\wh \theta$ to estimate $z^\T \beta$. The intuition of PCR lies in approximating the span of $\bZ$ by that of the first $K$ principal components of $\bX$. 
Thus PCR is a more complicated method because of estimating the latent dimension $K$, but is often minimax optimal (sometimes using a slight, yet necessary modification involving data-splitting). 
The GLS-estimate $\wh \theta$ on the other hand does not require selection of $K$, and is free of tuning parameters. Because of this, it is far from clear whether the span of $\bX$ approximates that of $\bZ$.  It is therefore of great interest to see whether the method based on $\wh \theta$ works, and if so, whether it is minimax optimal.\\

%This is a more complicated method as it involves estimation of the latent dimension $K$. On the other hand, the method in \cite{BW22} is minimax optimal (sometimes using a slight, yet necessary modification involving data-splitting). It is therefore of interest to see whether the method in this paper suffers due to lack of tuning parameters.\\

The classifier that we study in this paper has the form in \eqref{def_g_hat_intro}. We refer to this classifier as the GLS-based classifier. 
%\[
%     \wh g_{(\wh\theta, \bar \beta_0)}(x) = \1\{x^\T \wh\theta + \bar \beta_0 > 0\}
%\]
% $\1\{x^\T \wh\theta + \bar \beta_0 > 0\}$
% for some generic plug-in estimator $\bar \beta_0$ of $\beta_0$ based on formula (\ref{eq_beta_0}). 
Estimation of the intercept $\beta_0$ is discussed in Section \ref{sec_intercept}.
Our goal is to analyze its misclassification error 
%excess risk 
%\[     \cE\left(   \wh g_{(\wh\theta, \bar \beta_0)}     \right):= \PP\left\{ \wh g_{(\wh\theta, \bar \beta_0)}(X)\ne Y \right\} - R_z^*.
%\]
%Notice that we take as our baseline  
relative to the oracle risk 
$R_z^*$ in (\ref{eq_RxRz}).

\begin{remark}
	For the oracle risk, 
	we have the explicit expression
	\begin{equation}\label{eqn_Bayes_risk}
		R_z^* =  
		1 - \pi_1\Phi \left({\Dt\over 2} + {\log {\pi_1 \over \pi_0} \over \Dt}\right) - \pi_0 \Phi \left({\Dt\over 2} - {\log {\pi_1 \over \pi_0} \over \Dt}\right),
	\end{equation}
	see, for instance, \citet[Section 8.3, pp 241--244]{Izenman-book}, based on 
	the Mahalanobis distance
	\begin{equation}\label{def_Dt}
		\Dt^2 := (\a_1-\a_0)^\T \C^{-1} (\a_1 - \a_0) = \| \a_1-\a_0\|_{\C^{-1}}^2
	\end{equation} 
	%between the conditional distributions $Z \mid Y=1\sim N(\a_1, \szy)$ and $Z \mid Y=0\sim N(\a_0, \C)$.
	In particular, when $\pi_0 = \pi_1$, the expression in (\ref{eqn_Bayes_risk})  simplifies to
	$
	R_z^* = 1 - \Phi\left(\Dt/2 \right).
	$   
\end{remark}

\section{Interpolation and estimation of the intercept}\label{sec_interpolation_intercept}
%%%%  \subsection{Interpolation and consistency}

We first review the interpolation property of the minimum-norm interpolator $\wh\theta$ and discuss the interpolation of the classifier $\1\{x^\T \wh\theta + \beta_0 > 0\}$ based on the true intercept. We then show that a natural plug-in estimator of $\beta_0$ is surprisingly inconsistent,  which leads us to propose a different  estimator of $\beta_0$. Finally, we show that the interpolation property of the classifier is connected with the way we encode our labels. This reveals that the interpolation property is a rather arbitrary artifact.

%  Finally our proposed estimator of $\beta_0$ is discussed in Section \ref{sec_intercept}. 

\subsection{Interpolation}\label{sec_interpolation}

The theoretical  performance of the GLS estimator (\ref{def_GLS}) including its interpolation property is now well understood  in linear regression settings $\bY = \bX\theta + \beps$ when the feature dimension $p$ is much larger than the sample size $n$ (see, for instance, \cite{belkin2018overfitting,bartlett2019,hastie2019surprises,BSW22} and the references therein). 
%   In particular, provided the data matrix $\bX\in \RR^{n\times p}$ has rank $n$,
%   the GLS estimator $\wh\theta=\bX^+ \bY$ is known as the minimum-norm interpolating predictor because it minimizes 
% the Euclidean norm $\|\theta\|_2$ over $ 
% \theta\in \RR^p $  subject to $ \bX\theta = \bY$.
% See, for instance, \citet[Section D.1]{BSW22}.
%
The following result shows that with high probability, $\rank(\bX) = n$ 
whence $\wh\theta$ interpolates the training data, 
provided that
$K\le n$, $\tr(\sw)>0$ and ${\rm r_e}(\sw) := \tr({\sw}) / \| \sw\|_{\op}\gg n$. 
\begin{prop}\label{lem_lb_sigma_n_X} 
	Assume $n \ge K$. Then, there exist finite, positive constants $C,c$ depending on $\sigma$ only, such that,
	provided ${\rm r_e}(\sw) \ge C n$,
	%Then, there exist a constant $c=c(\sigma)>0$
	%such that
	\[   
	\PP \left\{ \sigma_n^2({\bX}) %\gtrsim
	\ge \frac{1}{8} \text{\rm tr}(\Sigma_W)\right\}\ge 1 - 3\exp(-c~ n ) \]
	and thus,   if  in addition $\tr(\sw)>0$, $\wh\theta$  interpolates: $ \PP \{ \bX \wh\theta = \bY\}  \ge 1-3\exp(-c ~ n)$.
\end{prop}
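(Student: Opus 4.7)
\medskip

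\noindent\textbf{Proof plan for Proposition \ref{lem_lb_sigma_n_X}.}
The plan is to bypass the rank-$K$ signal part $\bZ A^\T$ by projecting onto $\mathrm{col}(A)^{\perp}$ in $\RR^{p}$, and then to lower bound the smallest eigenvalue of the resulting ``denoised'' Gram matrix using sub-Gaussian concentration. Concretely, let $Q=\bI_p - A(A^\T A)^{-1}A^\T$ denote the orthogonal projection onto $\mathrm{col}(A)^{\perp}$, which is well defined because $A$ has rank $K$ by assumption (iii). Since $A^\T Q=0$, we have
\[
\bX Q \;=\; (\bZ A^\T + \bW)Q \;=\; \bW Q,
\]
and for every unit $u\in\RR^{n}$, $\|\bX^\T u\|_2^{2} \ge \|Q\bX^\T u\|_2^{2} = u^\T \bW Q\bW^\T u$. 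Taking the minimum over $u$ yields
\[
\sigma_n^{2}(\bX) \;\ge\; \lambda_n\!\left(\bW Q\bW^\T\right),
\]
reducing the task to a lower bound on the smallest eigenvalue of the $n\times n$ matrix $\bW Q\bW^\T$.

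Next I would compute the mean of $\bW Q\bW^\T$. Its diagonal entries equal $W_i^\T Q W_i$ with expectation $\tr(Q\sw)$, while off-diagonals have mean zero by independence of the rows, so $\EE[\bW Q\bW^\T]=\tr(Q\sw)\,\bI_n$. Since $\tr(\sw)-\tr(Q\sw)=\tr(\sw P_{A})\le K\|\sw\|_{\op}$ and the assumption ${\rm r_e}(\sw)\ge Cn\ge CK$ yields $K\|\sw\|_{\op}\le \tr(\sw)/C$, we obtain
\[
\tr(Q\sw) \;\ge\; \left(1-\tfrac{1}{C}\right)\tr(\sw).
\]

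The main technical step is then to show the matrix-deviation bound
\[
\bigl\|\bW Q\bW^\T - \tr(Q\sw)\,\bI_n\bigr\|_{\op} \;\le\; \tfrac{1}{4}\tr(\sw)
\]
with probability at least $1-3\exp(-cn)$, where $c$ depends only on the sub-Gaussian parameter $\sigma$ in assumption (v). This is the hard part: one writes $W_i=\sw^{1/2}V_i$ and estimates the quadratic forms $V_i^\T (\sw^{1/2}Q\sw^{1/2}) V_i$ by Hanson–Wright, and the bilinear forms $V_i^\T(\sw^{1/2}Q\sw^{1/2})V_j$ ($i\ne j$) by a conditioning argument, then combines them either by a union bound, by $\epsilon$-net/symmetrization, or directly via a Koltchinskii–Lounici-type bound for i.i.d.\ sub-Gaussian rows. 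The key quantitative input is that the effective rank of the row covariance $Q\sw Q$ satisfies
\[
\frac{\tr(Q\sw Q)}{\|Q\sw Q\|_{\op}} \;\ge\; \left(1-\tfrac{1}{C}\right)\frac{\tr(\sw)}{\|\sw\|_{\op}} \;\gtrsim\; n,
\]
which is exactly the regime in which such concentration yields an $n$-dependent exponential tail and, after absorbing constants, a deviation bounded by a small fraction of $\tr(\sw)$.

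Combining the last two displays gives $\lambda_n(\bW Q\bW^\T)\ge \tfrac{7}{8}\tr(\sw)-\tfrac{1}{4}\tr(\sw)\ge \tfrac{1}{8}\tr(\sw)$ for $C$ sufficiently large depending on $\sigma$, proving the first assertion. For the interpolation statement, on this event $\sigma_n(\bX)>0$ (using $\tr(\sw)>0$), so $\bX$ has full row rank $n$; hence $\bX\bX^{+}=\bI_n$ and therefore $\bX\wh\theta = \bX\bX^{+}\bY=\bY$, with the same probability bound.
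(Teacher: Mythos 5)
Your proposal is correct, and since the paper simply cites \citet[Proposition 14]{BSW22} without reproducing the argument, I'll evaluate it on its own merits: the projection-plus-concentration route — reducing $\sigma_n^2(\bX)$ to $\lambda_n(\bW Q\bW^\T)$ via $Q=\bI_p-A(A^\T A)^{-1}A^\T$, noting $\tr(Q\sw)\ge(1-1/C)\tr(\sw)$ because $P_A$ has rank $K\le n$, and then bounding $\|\bW Q\bW^\T-\tr(Q\sw)\bI_n\|_\op$ using the fact that $r_e(Q\sw Q)\gtrsim n$ — is exactly the standard argument for such results (indeed the Hanson--Wright/$\epsilon$-net machinery you invoke is the same one the paper already uses in Lemma~\ref{lem_W} for the unprojected case $Q=\bI_p$), and the derivation of full row rank and interpolation from $\sigma_n(\bX)>0$ is routine. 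All steps check out.
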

\begin{proof}
	See \citet[Proposition 14]{BSW22}. 
\end{proof}
From Proposition \ref{lem_lb_sigma_n_X}, $\wh\theta$ interpolates the training data provided that $K\le n$ and  ${\rm r_e}(\sw) \gg n$. The latter is connected to the over-parametrization. To see this, suppose that $\sw$ has bounded eigenvalues, that is, for some absolute constants $0<c\le C<\i$,
\begin{equation}\label{cond_sw}
	c \le \lambda_p(\sw) \le \lambda_1(\sw) \le C.
\end{equation}
We see that ${\rm r_e}(\sw) \asymp p$, whence ${\rm r_e}(\sw) \gg n$ reduces to the over-parametrized setting $p\gg n$. \\

Given the interpolation property of $\wh\theta$, we immediately see that $X_i^\T \wh \theta + \bar \beta_0 = Y_i + \bar\beta_0>0$ if and only if $Y_i=1$, for all $i\in [n]$, as long as  $\bar\beta_0\in (-1,0]$. Hence, for {\em any} $\bar\beta_0\in (-1,0]$ (including zero intercept advocated in the recent literature), the classifier $\1\{x^\T \wh\theta + \bar \beta_0>0\}$ would perfectly classify  the training data.  
A natural question is to see whether or not the classifier $\1\{x^\T \wh\theta + \beta_0>0\}$ that uses the true intercept $\beta_0$ would yield zero training error. This is equivalent with verifying if $\beta_0 \in (-1,0]$. The following lemma provides the answer, which surprisingly depends on the way we encode.
\begin{lemma}\label{lem_beta_0_sign}
	The intercept $\beta_0$ in (\ref{eq_beta_0}) satisfies 
	\begin{equation*}
		\sgn(\beta_0) = \sgn\left({1\over 2} - \pi_0\right), \qquad 
		|\beta_0| \le  \left|
		{1\over 2} - \pi_0
		\right|.
	\end{equation*}
\end{lemma}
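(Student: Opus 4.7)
My plan is to obtain an explicit closed-form expression for $\beta_0$ in terms of $\pi_0,\pi_1$ and the single scalar $\Dt^2 = (\a_1-\a_0)^\T \szy^{-1}(\a_1-\a_0)$, and then read off both the sign and the magnitude bound from that expression.

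First I would exploit the centering constraint $\EE[Z] = \pi_0 \a_0 + \pi_1 \a_1 = \0$, which together with $\pi_0+\pi_1=1$ forces $\a_0 = -\pi_1(\a_1-\a_0)$ and $\a_1 = \pi_0(\a_1-\a_0)$. In particular $\a_0+\a_1 = (\pi_0-\pi_1)(\a_1-\a_0)$. Next I would compute the unconditional covariance by the law of total covariance; since the conditional covariance is $\szy$ regardless of $Y$, and the between-class term works out to $\pi_0\pi_1 (\a_1-\a_0)(\a_1-\a_0)^\T$, I get $\sz = \szy + \pi_0\pi_1(\a_1-\a_0)(\a_1-\a_0)^\T$.

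Then the Sherman--Morrison identity applied to $\sz^{-1}$ yields
\[
\beta \;=\; \pi_0\pi_1\, \sz^{-1}(\a_1-\a_0) \;=\; \frac{\pi_0\pi_1\, \szy^{-1}(\a_1-\a_0)}{1+\pi_0\pi_1 \Dt^2}.
\]
From here $(\a_1-\a_0)^\T\beta = \pi_0\pi_1\Dt^2/(1+\pi_0\pi_1\Dt^2)$, and hence $1-(\a_1-\a_0)^\T\beta = 1/(1+\pi_0\pi_1\Dt^2)$, while $(\a_0+\a_1)^\T\beta = (\pi_0-\pi_1)\pi_0\pi_1\Dt^2/(1+\pi_0\pi_1\Dt^2)$. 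Plugging into (\ref{eq_beta_0}) and cleaning up gives
\[
\beta_0 \;=\; \frac{\pi_0\pi_1}{1+\pi_0\pi_1 \Dt^2}\left[\frac{(\pi_1-\pi_0)\Dt^2}{2} + \log\frac{\pi_1}{\pi_0}\right].
\]

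The sign claim is now immediate: both summands inside the bracket share the sign of $\pi_1 - \pi_0 = 2(\tfrac{1}{2}-\pi_0)$, so $\sgn(\beta_0) = \sgn(\tfrac12 - \pi_0)$. For the magnitude bound, taking absolute values and clearing the denominator $1+\pi_0\pi_1\Dt^2$, I would verify that the inequality $|\beta_0|\le|\tfrac12 - \pi_0| = |\pi_1-\pi_0|/2$ reduces, after the $\Dt^2$ terms cancel on both sides, to the purely probabilistic inequality
\[
2\pi_0\pi_1\left|\log\tfrac{\pi_1}{\pi_0}\right| \;\le\; |\pi_1-\pi_0|.
\]
Setting $t := \pi_1/\pi_0$ and using $\pi_0 = 1/(1+t)$, $\pi_1 = t/(1+t)$ turns this into $\log t \le (t - 1/t)/2$ for $t \ge 1$ (the case $t<1$ follows by symmetry). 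This last inequality is elementary: the function $f(t) = (t-1/t)/2 - \log t$ satisfies $f(1)=0$ and $f'(t) = (t-1)^2/(2t^2)\ge 0$, so $f\ge 0$ on $[1,\infty)$. The only step that is slightly delicate is the cancellation leading to this scalar inequality; once that is in hand, the rest is one-line calculus. No step appears to be a serious obstacle.
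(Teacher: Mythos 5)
Your proof is correct and follows essentially the same route as the paper: derive the closed-form scalar expression for $\beta_0$ via $\EE[Z]=\0_K$ and the Sherman--Morrison identity for $\sz^{-1}$, read off the sign, and reduce the magnitude bound to the scalar inequality $2\pi_0\pi_1|\log(\pi_1/\pi_0)|\le|\pi_1-\pi_0|$. The only difference is cosmetic: the paper asserts this last inequality (stated as ``the term in parenthesis is positive'') without proof, whereas you supply the short calculus verification $\log t \le (t-1/t)/2$ for $t\ge 1$, which is a welcome bit of added rigor but not a different argument.
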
 
\begin{proof}
	%        From the proof of Proposition 4 in \cite{BW22}, we can deduce that 
	%       \begin{align}\label{eq_beta_general}
		%      \beta &= (b-a) \pi_0\pi_1 \sz^{-1}(\a_1 - \a_0),\\\nonumber
		%         \beta_0 &=  -{1\over 2}(\a_0+\a_1)^\T \beta + \left[
		%	b-a - (\a_1-\a_0)^\T \beta
		%	\right]\pi_0\pi_1 \log{\pi_1 \over \pi_0}.
		%   \end{align}
	By the identity (\ref{eq_beta_eta}), the definition  (\ref{def_Dt}) of $\Dt$, the identity 
	\begin{equation*}\label{eq_beta_eta}
		\sz^{-1}(\a_1 - \a_0) = {1\over 1 + \pi_0\pi_1\Dt^2}\szy^{-1}(\a_1 - \a_0), 
	\end{equation*}
	(see, the proof of Lemma 14 in \cite{BW22}) and the fact that $\EE[Z] = \pi_0\a_0 + \pi_1 \a_1 = \0_K$, we have, 
	after a bit of simple algebra, 
	\begin{equation}\label{eq_beta_0_general}
		\beta_0 =  \left[
		{1\over 2} - \pi_0 + {1\over \Dt^2}\log{\pi_1 \over \pi_0}
		\right]{ \pi_0\pi_1 \Dt^2 \over 1 +  \pi_0\pi_1 \Dt^2}.
	\end{equation}
	It is readily seen that $\sgn(\beta_0) = \sgn\left({1/ 2} - \pi_0\right)$. 
	
	For the second claim, suppose $0<\pi_0 < 1/2$. After rearranging terms, we find
	\[
	\beta_0 =  \left[{1\over 2} - \pi_0 - {1\over 1 + \pi_0\pi_1 \Dt^2}\left(
	{1\over 2} - \pi_0 - \pi_0\pi_1\log{\pi_1 \over \pi_0}
	\right)\right].
	\]
	Since the term in parenthesis is positive,  $1/2 - \pi_0 - \pi_0\pi_1\log(\pi_1 /\pi_0) > 0$, we conclude that $0<\beta_0 <  (1/2-\pi_0)$. A similar argument can be used to prove 
	$ (1/2-\pi_0)<\beta_0 < 0$ for
	the case $1/2 \le \pi_0 < 1$.    
\end{proof}

Lemma \ref{lem_beta_0_sign} has  the curious consequence that {\em only} if we encode the majority class as $0$, does the rule $\1\{ x^\T \wh \theta + \beta_0 > 0\}$ have zero training error under interpolation $\bX \wh \theta= \by$. In Section \ref{sec_label} we will show that the interpolation property also depends on the values we use to encode the labels, and is therefore rather arbitrary. 

\subsection{Estimation of the intercept}\label{sec_intercept}

Given $\wh \theta=\bX^+ \bY$, assuming $x^\T \wh\theta$ is close to $z^\T \beta$, we can naively estimate $\beta_0$ in (\ref{eq_beta_0}) by the following plug-in estimator,
\begin{equation}\label{def_theta_0_hat}
	%\wh \theta_0^{\wh a} := \wh \theta_0^{\wh a}(\wh\theta) 
	\wh\beta_0  := -{1\over 2}(\wh \mu_0+\wh \mu_1)^\T \wh\theta+   \left[
	1   -(\wh \mu_1-\wh \mu_0 )^\T\wh\theta \ 
	\right] \wh \pi_0\wh \pi_1 \log{\wh \pi_1 \over \wh \pi_0}
\end{equation}
based on standard non-parametric estimates
\begin{align}\label{def_pi_hat}
	\wh n_k = \sum_{i=1}^n \1{\{Y_i = k\}},\quad \wh\pi_k = {\wh n_k \over n},\quad 
	\wh \mu_k = {1\over \wh n_k}\sum_{i=1}^n X_i\1\{ Y_i = k\}, 
	%,\quad \wh a =  \wh \pi_0\wh \pi_1,
	\quad k\in \{0,1\}. 
\end{align}
This leads to the naive classifier
\begin{equation}\label{eq_g_hat}
	\wh g(x) :=  \1\{ x^\T \wh \theta + \wh \beta_0 > 0\}.
\end{equation}
The following lemma shows that $\wh \beta_0=-1/2$, irrespective of the true value of $\beta_0$, whenever $\wh\theta$ interpolates. On the one hand, this means that the naive classifier $\wh g(x)$ always interpolates as $\wh \beta_0\in(-1, 0]$. On the other hand, it shows that $\wh\beta_0$ clearly is an inconsistent estimate of $\beta_0$ in general.

\begin{prop}\label{lem_GLS}
	Let %$\wh\theta$ be any vector  and 
	$\wh\beta_0$ be defined in (\ref{def_theta_0_hat}). On the event $\{ \bX\wh\theta = \bY\} $ where $\wh \theta$ interpolates,   we have $\wh\beta_0=-1/2$.
	%	the classifier $\wh g(x)=\1\{ x^\T \wh \theta + \wh \beta_0 > 0\}$ satisfies:
	%	\begin{align*}
		%	     \wh\beta_0=-1/2
		%~ &\text{ and } ~  \wh g (X_i) = Y_i, ~ \text{ for all }  i\in [n].
		%	\end{align*}
\end{prop}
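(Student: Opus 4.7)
The plan is to do a direct substitution into the defining formula \eqref{def_theta_0_hat} using the interpolation identity. On the event $\{\bX\wh\theta = \bY\}$ we have $X_i^\T \wh\theta = Y_i$ for every $i \in [n]$, and the entire proof reduces to computing $\wh\mu_k^\T \wh\theta$ for $k\in\{0,1\}$.

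First I would write
\[
\wh\mu_k^\T \wh\theta \;=\; \frac{1}{\wh n_k}\sum_{i=1}^n X_i^\T \wh\theta \,\1\{Y_i=k\} \;=\; \frac{1}{\wh n_k}\sum_{i=1}^n Y_i\,\1\{Y_i=k\} \;=\; k,
\]
where the middle equality uses interpolation and the last uses the fact that $Y_i\1\{Y_i=k\}=k\,\1\{Y_i=k\}$ under the $\{0,1\}$ encoding. This immediately gives
\[
(\wh\mu_0+\wh\mu_1)^\T \wh\theta = 1, \qquad (\wh\mu_1-\wh\mu_0)^\T \wh\theta = 1.
\]

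Plugging these two identities into \eqref{def_theta_0_hat}, the bracketed factor $[\,1 - (\wh\mu_1-\wh\mu_0)^\T \wh\theta\,]$ becomes $0$, which kills the $\log(\wh\pi_1/\wh\pi_0)$ term entirely, and the first term becomes $-\tfrac12 \cdot 1$. Hence $\wh\beta_0 = -1/2$, independent of $\wh\pi_0,\wh\pi_1$ and of the true $\beta_0$. There is no real obstacle: this is a one-line calculation whose only ingredient is the interpolation property, and it is exactly what motivates the need for the hold-out sample correction introduced in Section \ref{sec_intercept}.
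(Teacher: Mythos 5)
Your proof is correct and takes essentially the same route as the paper: the paper writes $\wh\mu_0\pm\wh\mu_1$ as $\bX^\T v$ and $\bX^\T w$ for explicit vectors $v,w$ and then uses $\bX\wh\theta=\bY$ with $v^\T\bY=w^\T\bY=1$, while you compute $\wh\mu_k^\T\wh\theta=k$ directly from the per-class averages. The two are the same substitution in slightly different packaging, and both correctly yield $(\wh\mu_0+\wh\mu_1)^\T\wh\theta=(\wh\mu_1-\wh\mu_0)^\T\wh\theta=1$ and hence $\wh\beta_0=-1/2$.
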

\begin{proof}
	We define the vectors $v,w\in \RR^n$ as    \begin{align*}
		v &= {1\over \wh n_1} \left(\1\{Y_1=1\},\ldots,\1\{Y_n=1\}\right)^\T
		+ {1\over \wh n_0}\left(\1\{Y_1=0\},\ldots,\1\{Y_n=0\}\right)^\T,\\
		w &= {1\over \wh n_1} \left(\1\{Y_1=1\},\ldots,\1\{Y_n=1\}\right)^\T
		- {1\over \wh n_0}\left(\1\{Y_1=0\},\ldots,\1\{Y_n=0\}\right)^\T.
	\end{align*}   
	%Recall from  (\ref{def_theta_0_hat})
	%that 
	%
	By (\ref{def_pi_hat}), we can write
	\[
	\wh\mu_1 + \wh\mu_0 = \bX^\T v,\qquad \wh\mu_1 - \wh\mu_0 = \bX^\T w 
	\]
	%for 
	%$$
	%v = {1\over \wh n_1}\b1\{\bY={\mathbf 1}_n\} + {1\over \wh n_0}\b1\{\bY=\0_n\},\qquad w = {1\over \wh n_1}\b1\{\bY={\mathbf 1}_n\}- {1\over \wh n_0}\b1\{\bY=\0_n\}.
	%$$
	and hence
	\begin{align*}
		\wh\beta_0 &= -{1\over 2}(\wh \mu_0+\wh \mu_1)^\T \wh\theta+ \left[1 - (\wh \mu_1-\wh \mu_0 )^\T\wh\theta \  \right]\wh \pi_0\wh \pi_1\log{\wh \pi_1 \over \wh \pi_0}\\
		&=  -{1\over 2}v^\T \bX\wh\theta+ \left[1 - w^\T\bX\wh\theta \  \right] \wh \pi_0\wh \pi_1\log{\wh \pi_1 \over \wh \pi_0}
	\end{align*}
	Use $\bX\wh\theta = \bY$ and $v^\T \bY = w^\T\bY = 1$ to obtain
	\begin{align*}
		\wh\beta_0 &=- {v^\T \bY \over 2} + \left[
		1 - w^\T \bY 
		\right] \wh \pi_0\wh \pi_1 \log{\wh \pi_1 \over \wh \pi_0} =- {1 \over 2}.
		%
		%   X_{i\cdot}^\T\wt\theta + \wh\beta_0 &= Y_i - {v_1^\T \bY \over 2} + \wh \pi_0\wh \pi_1\left[
		%1 - v_2^\T \bY 
		%\right]\log{\wh \pi_1 \over \wh \pi_0} = Y_i - {1 \over 2}.
	\end{align*}
	%The last step uses $v_1^\T \bY = v_2^\T\bY = 1$.
	This proves the our claim.
	%first claim. Hence,   $X_{i}^\T\wh\theta + \wh\beta_0=Y_i - 1/2 $ is non-negative if and only if  $Y_i=1$,
	%for $i=1,\ldots,n$, and the second claim easily follows.
\end{proof}

Proposition \ref{lem_GLS} implies that 
the naive classifier $\wh g(x)$ from (\ref{eq_g_hat}) cannot be consistent in general due to the inconsistency of $\wh\beta_0$,  for the same reason that taking no intercept ($\wh \beta_0=0$) is inconsistent. 
From the proof of Proposition \ref{lem_GLS}, we see that this phenomenon still exists in the classical LDA setting, where $X \mid Y$ is Gaussian, should any interpolating regression estimate such as $\wh\theta$ be employed and plugged in (\ref{def_theta_0_hat}). 
The inconsistency of GLS-based LDA is in sharp contrast to its magical performance in  factor regression models, see \cite{BSW22,  bing2020prediction}. 
This phenomenon is corroborated in our simulation study in Section \ref{sec_sim}.\\

While the estimator $\wh\beta_0$ in (\ref{def_theta_0_hat}) based on the GLS estimator $\wh\theta$ is clearly inconsistent for estimating the intercept $\beta_0$, we will show in Theorem \ref{thm_rate_Atheta} of the next section that  $\wh \theta$ does estimate the direction $\beta$ consistently. In other words, failure of consistently estimating the intercept $\beta_0$ is the only cause for the subpar misclassification rate of $\wh g(x)$. %As discussed after Lemma \ref{lem_beta_0_sign}, whenever $\beta_0 \in (-1, 0]$,  we still interpolate as $\1\{ X_i^\T\wh \theta + \beta_0>0\}= \1\{ Y_i+\beta_0>0\}=Y_i$ for all $i\in [n]$. This includes 
In the symmetric case $\pi_0=\pi_1=1/2$, we have $\beta_0=0$ and we prove in Corollary \ref{cor_excess_ris} of the next section that the classifier $ \1\{ x^\T \wh\theta +0 > 0\}$
is consistent, often even minimax optimal.

In general, when $\pi_0\ne \pi_1$,  we should 
choose a different estimate for $\beta_0$.
Our solution is to use an independent hold-out sample $(X_i', Y_i')$, $i\in [n']$, for some integer $n'>0$, %(for simplicity, we assume it has the same size $n$) 
to estimate $\beta_0$ by
\begin{equation}\label{def_theta_0_hat2}
	%\wh \theta_0^{\wh a} := \wh \theta_0^{\wh a}(\wh\theta) 
	\widetilde\beta_0  := -{1\over 2}(\widetilde \mu_0+\widetilde \mu_1)^\T \wh\theta+   \left[
	1   -(\widetilde \mu_1-\widetilde \mu_0 )^\T\wh\theta \ 
	\right] \wh \pi_0\wh \pi_1 \log{\wh \pi_1 \over \wh \pi_0}
\end{equation}
with $\wh\theta$ from (\ref{def_GLS}), $\wh \pi_k$ from (\ref{def_pi_hat}) and 
\begin{align}\label{def_beta0tilde}
	\widetilde \mu_k = {1\over \widetilde n_k}\sum_{i=1}^{n'} X_i'\1\{ Y_i' = k\}, 
	\quad \widetilde n_k = \sum_{i=1}^{n'} \1{\{Y_i' = k\}},
	\quad k\in \{0,1\}. 
\end{align}
This simple modification ensures that $\wt \beta_0$ is a consistent estimator of $\beta$, as shown in Proposition \ref{prop_intercept} of the next section. Furthermore, 
the corresponding classification rule 
\begin{equation*} 
	\1\{ x^\T \wh\theta  + \wt \beta_0  > 0\}
\end{equation*}
is consistent, and even minimax-optimal, in many scenarios
(see Remark \ref{rem:minimax}), although it no longer necessarily classifies the training data perfectly.

\begin{remark}[Alternative estimation of $\beta_0$]\label{rem_erm}
	It is essential for establishing consistency of $\wt \beta_0$ that the estimates $\wt\mu_0, \wt\mu_1$ and $\wh \theta$ are statistically independent. 
	Alternatively, we could use the hold-out sample to estimate the intercept via minimizing the empirical risk
	\begin{align}\label{erm}
		\sum_{i=1}^{n'} \1 \{ (2Y_i^\prime-1) (\wh\theta~^\T X_i^\prime + \beta_0) <0 \} 
	\end{align}
	over $\beta_0\in \RR$. It is clear that we need to use the hold-out sample in (\ref{erm}) as well, since minimizing over the same training data would lead to interpolation and any value within $(-1,0]$ whenever $\wh\theta$ interpolates. In our simulation of Section \ref{sec_sim}, we found that not only can we compute $\wt\beta_0$ in (\ref{def_theta_0_hat2})   much faster comparing to (\ref{erm}), it also leads to better classification performance as well. 
\end{remark}

\subsection{Effect of label encoding on interpolation} \label{sec_label}

In this section we discuss how the (in sample) interpolation property depends on the way we encode our labels. Let $b>a$ be any scalars and consider the encoding $Y\in \{a,b\}$.    
In the following lemma we show that the optimal decision boundary in the latent space is independent of the particular encoding. This reassures us that the optimal classification rule does not depend on the encoding.

\begin{lemma}\label{lemma:plane}
	Given the encoding $Y\in \{a,b\}$ with any $a<b$, the Bayes rule is 
	$$
	(b-a)\1\left\{(b-a)\left(z^\T \beta + \beta_0 \right) > 0\right\} + a
	$$ 
	with $\beta$ and $\beta_0$ defined in \eqref{eq_beta} and \eqref{eq_beta_0}, respectively. In particular, the optimal boundary $\{z\mid (b-a)(z^\T \beta + \beta_0) = 0\}$ is invariant to any pair $(a,b)$ with $b>a$.
\end{lemma}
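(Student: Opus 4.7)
The plan is to reduce the $\{a,b\}$-encoded problem to the $\{0,1\}$-encoded case already handled by Proposition \ref{prop_ls_rule}. The key observation is that the $0$-$1$ misclassification loss $\1\{g(Z)\ne Y\}$ depends only on whether the prediction and the label agree as symbols; any bijection of the label set therefore preserves the misclassification risk and hence also the Bayes rule, up to relabeling its output.

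Concretely, I define $\widetilde Y := (Y-a)/(b-a)$, which takes values in $\{0,1\}$. Since this transformation is a bijection that leaves the conditioning events intact ($\{Y=a\}\leftrightarrow\{\widetilde Y=0\}$ and $\{Y=b\}\leftrightarrow\{\widetilde Y=1\}$), the pair $(\widetilde Y,Z)$ satisfies the model assumptions (i)--(vi) of Section \ref{sec_background} with the \emph{same} parameters $\pi_0,\pi_1,\a_0,\a_1,\szy$ that appear in the $\{a,b\}$-encoded model. Proposition \ref{prop_ls_rule} applied to $(\widetilde Y,Z)$ then yields the Bayes rule $\widetilde g^*(z) = \1\{z^\T\beta+\beta_0>0\}$, with $\beta,\beta_0$ given by \eqref{eq_beta}--\eqref{eq_beta_0}.

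To translate back, any rule $g:\RR^K\to\{a,b\}$ corresponds bijectively to a rule $\widetilde g:=(g-a)/(b-a):\RR^K\to\{0,1\}$ for predicting $\widetilde Y$, and the event $\{g(Z)\ne Y\}$ coincides with $\{\widetilde g(Z)\ne\widetilde Y\}$, so the misclassification risks agree. Hence the Bayes rule for $Y$ is
$$
g^*(z) \;=\; a+(b-a)\,\widetilde g^*(z) \;=\; a+(b-a)\,\1\{z^\T\beta+\beta_0>0\}.
$$
Since $b-a>0$, we may replace the indicator by $\1\{(b-a)(z^\T\beta+\beta_0)>0\}$, which matches the stated expression. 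The decision boundary $\{z:(b-a)(z^\T\beta+\beta_0)=0\}$ then coincides with $\{z:z^\T\beta+\beta_0=0\}$ and is manifestly independent of $(a,b)$, proving the invariance claim.

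I do not foresee any substantive obstacle; the argument is a relabeling bookkeeping. The one point requiring care is verifying that the quantities $\beta,\beta_0$ in the lemma really are the same whether one starts from $\{0,1\}$ or from $\{a,b\}$. This is automatic because $\beta$ and $\beta_0$ depend only on $(\pi_0,\pi_1,\a_0,\a_1,\szy)$, none of which is altered by the invertible map $Y\mapsto(Y-a)/(b-a)$; in particular the identity $\beta=\sz^{-1}\EE[Z\widetilde Y]$ from Proposition \ref{prop_ls_rule} continues to define the same vector regardless of the numerical encoding chosen for the response.
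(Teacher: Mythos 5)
Your proof is correct, and it takes a genuinely different route from the paper's. The paper proves the lemma by going back into the derivation of \citet[Proposition 4]{BW22} and re-running it under the $\{a,b\}$ encoding to obtain explicit formulas $\beta^{(a,b)}=(b-a)\beta$ and $\beta_0^{(a,b)}=(b-a)\beta_0$ (see display \eqref{eq_beta_general}), from which invariance of the hyperplane follows by inspection. You instead observe at the outset that the $0$--$1$ misclassification risk is invariant under any bijection of the label alphabet: the events $\{g(Z)\ne Y\}$ and $\{\tilde g(Z)\ne\tilde Y\}$ coincide under $\tilde Y=(Y-a)/(b-a)$, so the Bayes rule for $Y$ is simply the relabeled Bayes rule for $\tilde Y$, and the boundary set is literally the same set. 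Your approach is more elementary and does not require re-deriving the regression coefficients; it uses only the \emph{statement} of Proposition~\ref{prop_ls_rule}, not its proof, whereas the paper's argument implicitly relies on knowing the internals of that proof. What the paper's computation buys is the explicit form of $\beta^{(a,b)}$ and $\beta_0^{(a,b)}$, i.e.\ the quantitative fact that both scale by $(b-a)$, which is used elsewhere in the paper to reason about the interpolation condition \eqref{cond_label}; your argument establishes invariance of the boundary without producing those formulas. Your closing remark --- that $\beta,\beta_0$ depend only on $(\pi_0,\pi_1,\a_0,\a_1,\szy)$, which are defined via conditioning events and thus unaffected by the numerical encoding --- is exactly the point that makes the relabeling argument go through, and you were right to flag it explicitly.
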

\begin{proof}
	From the proof of Proposition 4 in \cite{BW22}, we can deduce that the optimal hyperplane is $\{z\mid z^\T \beta^{(a,b)} + \beta^{(a,b)}_0 = 0\}$ where 
	\begin{align}\label{eq_beta_general} 
		\beta^{(a,b)} &= (b-a) \pi_0\pi_1 \sz^{-1}(\a_1 - \a_0),\\\nonumber
		\beta^{(a,b)}_0 &=  -{1\over 2}(\a_0+\a_1)^\T \beta + \left[
		b-a - (\a_1-\a_0)^\T \beta
		\right]\pi_0\pi_1 \log{\pi_1 \over \pi_0}.
	\end{align}
	This proves our claim.
\end{proof}

However, it is a different story for the interpolation property with 
\begin{equation}\label{rule_label}
	(b-a) \1\{x^\T \wh\theta + (b-a)\beta_0 > 0\} + a.
\end{equation} 
Note this is the classifier corresponding to 
$\1\{x^\T \wh\theta + \beta_0 > 0\}$ under the encoding $Y\in\{0,1\}$.
Given that $\bX\wh\theta = \bY$, the classifier in \eqref{rule_label} has zero training error if and only if 
\begin{equation}\label{cond_label}
	(b-a)\beta_0 \in (-b, -a].
\end{equation}
Indeed, for any $i\in [n]$, observe that $X_i^\T\wh \theta + (b-a)\beta_0 >0$ if and only if $Y_i+(b-a)\beta_0>0$. Whether the classifier in \eqref{rule_label} interpolates is thus equivalent to whether \eqref{cond_label} holds. Below we use Lemma \ref{lem_beta_0_sign} to compare our $\{0,1\}$ encoding with another popular encoding $\{-1,1\}$.
\begin{itemize}
	\item  As noted earlier, if we use 
	the $\{0,1\}$ encoding, \eqref{cond_label} becomes $\beta_0\in (-1,0]$ and the classifier in \eqref{rule_label} has zero training classification error if and only if we encode the majority class as $0$. 
	
	\item 
	If we use the encoding %$a = -b$, such as the 
	$\{-1,1\}$,
	%encoding
	then \eqref{cond_label} becomes $\beta_0\in (-1/2,1/2]$ which holds in view of Lemma \ref{lem_beta_0_sign}. Therefore,
	the classifier in \eqref{rule_label} {\em always} has zero training classification error. Furthermore,   we expect that all rules
	based on \eqref{rule_label} with $\beta_0$ replaced by a consistent estimate
	to classify the training data perfectly.
	%      What is more interesting for this encoding, any consistent intercept estimate eventually lies in the interval $(-1,1]$
	%and render an interpolating rule on the training data.
\end{itemize}

\section{Rates of convergence for the excess risk}\label{sec_rates}
% \subsection{A general bound for the excess misclassifcation error}
In this section, we analyze the excess risk of the classifier
\begin{equation}\label{def_g_td}
	\wt g(x)=\1\{ x^\T \wh\theta  + \wt\beta_0>0 \}
\end{equation} 
for $\wh\theta$ defined in (\ref{def_GLS}) and $\wt \beta_0$ defined in (\ref{def_theta_0_hat2}). %%a generic estimator $\bar \beta_0$. 
%%We then apply it to analyze $\wh g_{(\wh \theta, \beta_0)}$, the classifier that uses the true value of $\beta_0$, and $\wh g_{(\wh \theta, \wt\beta_0)}$, the proposed classifier based on data-splitting.\\
We define the excess risk of this classifier as $\PP\{ \wt g(X)\ne Y\}- R_z^*$ with $R_z^*$ given in \eqref{eq_RxRz} (see, also, \eqref{eqn_Bayes_risk}).  Following \citet[Theorem 5]{BW22}, we have, for all $t>0$,  
\begin{align}\label{disp_margin}
	%\cE \left(\wh g_{(\wh \theta, \bar \beta_0)}\right)
	\PP\{ \wt g(X)\ne Y\} -  R_z^*
	&~ \le  ~ \PP \left\{| X^\T \wh\theta - Z^\T \beta   + \wt \beta_0 - \beta_0 | > t \right\} +      P(\Dt, t) 
\end{align}
where, with $c_\Dt := \Dt^2 + (  \pi_0\pi_1)^{-1}$,
\begin{align}\nonumber
	P(\Dt, t) &=  
	t ~ c_\Dt  \Bigl[ \pi_0   \PP\{ - t< Z^\T \beta + \beta_0 <0 \mid Y=0\} 
	%\Bigl[\Phi\left(R\right) - \Phi\left(R - {t~ c_\Dt/ \Dt}\right)\Bigr] 
	+     \pi_1     \PP\{ 0< Z^\T \beta + \beta_0 < t\mid Y=1\} \Bigr].
	%\Bigl[\Phi\left(L+{t~c_\Dt/ \Dt}\right) - \Phi\left(L\right)\Bigr],
\end{align} 
%  and $$
% 	L = -{\Dt\over 2} - {\log{(\pi_1/ \pi_0)} \over \Dt},\qquad R = {\Dt\over 2} - {\log{(\pi_1/ \pi_0)} \over \Dt}.
% 	$$
We see that the excess risk depends on:\\ (a) the probabilistic behavior of the boundary of the optimal hyperplane, and
\\ (b) the quality of our estimate of the optimal hyperplane in $\RR^K$. 

Part (a) is expressed in the quantity $P(\Dt, t)$  and reflects the intrinsic difficulty of the classification problem.
As in \cite{BW22}, we can distinguish four cases:  Let  $c$ and $c'$ be some absolute positive constants.  For any $t>0$,
\begin{align}\label{def_P_Dt_t}
	P(\Dt, t)
	&=
	\begin{cases} 
		t^2  & \text{ if $\Dt\asymp 1$;}\\
		t^2 \exp\left\{-[c +o(1)]\Dt^2\right\} & \text{ if $\Dt\to\i$ and $t \to 0$;}\\
		t^2 \exp\left\{-[c' + o(1)] / \Dt^2\right\} & \text{ if $\Dt\to0$, $\pi_0\ne\pi_1$  and $t \to 0$;}\\
		t  \min\{1,t / \Dt\} & \text{ if $\Dt\to0$ and $\pi_0=\pi_1$}.
\end{cases}\end{align}
The case $\Dt\to\i$ can be considered an easy case as the Bayes error $R_z^*$ vanishes exponentially fast in $\Dt^2$. In this case, $P(\Dt, t)$ also vanishes exponentially fast.  We also note that the proof of Lemma \ref{lem_beta_0_sign} reveals that $|\beta_0-(1/2 - \pi_0)|\to0$ in this case; that is, the true intercept is easier to estimate as well.  
The case $\Dt\to0$ and $\pi_0\ne \pi_1$ has a trivial Bayes risk $R_z^*\to \min\{\pi_0,\pi_1\}$, hence is easy to classify since the optimal Bayes rule classifies only according to the largest unconditional class probability $\pi_k$, irrespective of the covariate $x$. In this case the intrinsic difficulty $P(\Dt, t)$ goes to zero exponentially fast in $1/\Dt^2$. The case $\Dt\to0$ and $\pi_0=\pi_1$ is impossible to classify as $R_z^* \to 1/2$, corresponding to random guessing. For this reason, we concentrate on the intermediate case $\Dt\asymp1$ in this work.

For part (b),  we need to control 
\begin{align}\label{identiteit}
	|X^\T \wh\theta - Z^\T \beta |
	% &= 
	% (AZ+W)^\T \wh\theta - Z^\T \beta 
	\le |Z (A^\T \wh \theta-\beta)| + |W^\T \wh \theta|,
\end{align}
the error of predicting the `direction' $Z^\T \beta$, as well as  $|\wt \beta_0 - \beta_0|$, the error of estimating the intercept. 
%
%\subsection{Consistency of estimating the `direction' $Z^\T \beta$}
%
The following proposition provides upper bounds of the two terms on the right of (\ref{identiteit}).

\begin{prop}\label{prop:GsG}
	%%Under Assumptions {\rm (i) -- (v)}, f
	For every $\delta>0$, we have
	\begin{align*}
		\PP\left[   \left| W^\T \wh\theta \right| \ge \sigma \sqrt{2    \| \wh \theta\|_{\sw}\log( 1/\delta ) }   \, \right] \le  2\delta
	\end{align*}
	and
	\begin{align*}
		\PP\left[  \left | Z^\T (A^\T \wh\theta -\beta) \right| \ge 
		\left( \frac{1}{2\pi_0\pi_1} + \sqrt{ \log(1/\delta) } \right) \| A^\T \wh \theta -\beta \|_{\sz }
		\, \right] \le 4\delta .
	\end{align*}
\end{prop}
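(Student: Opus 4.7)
The plan for both parts is to condition on the training sample, so that $\wh\theta$ becomes deterministic, and then apply sub-Gaussian or Gaussian tail bounds to the test-point quantities $W$ and $Z$, which are independent of the training data.

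For the first inequality, I would write $W^\T\wh\theta = \wh\theta^\T\sw^{1/2}V$. Assumption (v) together with a standard homogeneity argument gives $\EE[\exp(t u^\T V)] \le \exp(t^2\sigma^2/2)$ for every $t\in\RR$ and every unit $u$, so conditional on $\wh\theta$ the linear form $\wh\theta^\T\sw^{1/2}V$ is sub-Gaussian with variance proxy $\sigma^2\|\sw^{1/2}\wh\theta\|_2^2 = \sigma^2\|\wh\theta\|_{\sw}^2$. A one-line Chernoff bound then yields
\[
\PP\bigl(|W^\T\wh\theta| \ge s \bigm| \wh\theta\bigr) \le 2\exp\!\bigl\{-s^2 / (2\sigma^2\|\wh\theta\|_{\sw}^2)\bigr\},
\]
and choosing $s$ so that the right-hand side equals $2\delta$ and unconditioning gives the claimed bound.

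For the second inequality, set $\gamma := A^\T\wh\theta - \beta$, which is deterministic given the training data. Conditional on the training sample and on $Y$, model \eqref{model_YZ} gives $Z\sim N_K(\a_Y,\szy)$, so I decompose
\[
Z^\T\gamma = \a_Y^\T\gamma + (Z-\a_Y)^\T\gamma.
\]
The stochastic term is mean-zero Gaussian with variance $\|\gamma\|_{\szy}^2\le\|\gamma\|_{\sz}^2$ (as $\sz\succeq\szy$), so a standard Gaussian tail inequality accounts for the $\sqrt{\log(1/\delta)}\,\|\gamma\|_{\sz}$ contribution, with the factor of $4$ in the failure probability absorbing a union bound over the two tails.

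The bound on the deterministic term $|\a_Y^\T\gamma|$ is the only non-routine step, and I would handle it with two algebraic identities that follow from $\EE[Z]=\pi_0\a_0+\pi_1\a_1=0$. First, $\a_0 = -\pi_1(\a_1-\a_0)$ and $\a_1 = \pi_0(\a_1-\a_0)$, so $|\a_Y^\T\gamma| = \pi_{1-Y}\,|(\a_1-\a_0)^\T\gamma|$. Second, the law of total covariance specializes under $\EE[Z]=0$ to
\[
\sz = \szy + \pi_0\pi_1(\a_1-\a_0)(\a_1-\a_0)^\T,
\]
which yields $\pi_0\pi_1\bigl((\a_1-\a_0)^\T\gamma\bigr)^2 = \gamma^\T(\sz-\szy)\gamma \le \|\gamma\|_{\sz}^2$. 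Combining these two facts gives $|\a_Y^\T\gamma| \le (\pi_{1-Y}/\sqrt{\pi_0\pi_1})\,\|\gamma\|_{\sz} \le \|\gamma\|_{\sz}/(2\pi_0\pi_1)$, where the last inequality reduces (after squaring and using $\pi_0+\pi_1=1$) to $4\pi_Y\pi_{1-Y}^3\le 1$, an elementary calculus exercise maximized at $\pi_{1-Y}=3/4$. Putting together the deterministic and stochastic contributions finishes the proof. The main (mild) obstacle is the rank-one identity for $\sz-\szy$: without it, the mean term would only be controlled in terms of $\a_1-\a_0$ rather than $\|\gamma\|_{\sz}$, and the bound would not take the desired form; every other step is a textbook Chernoff/Gaussian tail estimate once independence between $\wh\theta$ and the new-observation noise is invoked.
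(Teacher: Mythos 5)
Your approach is the same one the paper takes (deferring to \citet[Proposition 6]{BW22}): condition on the training data so $\wh\theta$ becomes fixed, exploit the independence of $\wh\theta$ from the fresh $(Z,W)$, apply the sub-Gaussian tail of $V$ for the $W$-term and the conditional Gaussianity of $Z$ for the $Z$-term, and control the deterministic mean $\a_Y^\T\gamma$ via the rank-one identity $\sz - \szy = \pi_0\pi_1(\a_1-\a_0)(\a_1-\a_0)^\T$; your algebra, including the elementary bound $4\pi_Y\pi_{1-Y}^3 \le 1$, is correct. One caveat worth flagging: your (correct) Chernoff calculation yields the threshold $\sigma\|\wh\theta\|_{\sw}\sqrt{2\log(1/\delta)}$ in the first bound --- equivalently $\|\wh\theta\|_{\sw}^2$ inside the square root --- and $\sqrt{2\log(1/\delta)}$ in the second, whereas the proposition as printed shows $\|\wh\theta\|_{\sw}$ to the first power and $\sqrt{\log(1/\delta)}$; these look like typos, since with $\sqrt{\log(1/\delta)}$ alone the Gaussian tail only gives a failure probability of order $\sqrt{\delta}$, not $\delta$.
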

\begin{proof}
	The bounds follow from the independence of $\wh \theta,Z$ and $W$, the Gaussian assumption (iv) of $Z\mid Y=k$ and subGaussian distribution (v) of $W$.
	See, for instance, the proof of Proposition 6 in \cite{BW22}.
\end{proof}

For the later analysis, we will apply Proposition \ref{prop:GsG} with $\delta= 1/n^c$ for some absolute constant $c>0$, which will result in a multiplicative $\log(n)$ term for the corresponding rates. From Proposition \ref{prop:GsG}, it is clear that we need to bound $\| \wh \theta\|_{\sw}$ and $\| A^\T \wh \theta -\beta\|_{\sz}$.

\begin{prop}\label{prop_hattheta}
	Assume $n\ge K$. Then, there exist finite, positive constants $C,c$ depending on $\sigma$ only, such that, provided ${\rm r_e}(\sw) \ge C n$, 
	$$
	\PP\left\{  		\| \wh\theta\|_{\sw} ^2  ~ \le ~ {8n\over {\rm r_e}(\sw)} \right\}\ge 1-3\exp(-c~ n).
	$$ 
	%%  	 	provided ${\rm r_e}(\sw) \ge C\cdot n$ for some constant $C>0$ large enough.
\end{prop}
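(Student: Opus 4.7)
The plan is to combine Proposition \ref{lem_lb_sigma_n_X} with the elementary inequality $v^\T \sw v \le \|\sw\|_{\op}\|v\|_2^2$ and the boundedness of $\bY\in\{0,1\}^n$. This is essentially a chain of three crude bounds; no subtle probabilistic argument is needed once Proposition \ref{lem_lb_sigma_n_X} is in hand.

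First, I would observe that on the event $\mathcal{E} := \{\sigma_n^2(\bX) \ge \tr(\sw)/8\}$ from Proposition \ref{lem_lb_sigma_n_X}, the matrix $\bX$ has full row rank $n$ and hence
\[
\|\wh\theta\|_2 \;=\; \|\bX^+\bY\|_2 \;\le\; \|\bX^+\|_{\op}\,\|\bY\|_2 \;=\; \frac{\|\bY\|_2}{\sigma_n(\bX)},
\]
so that on $\mathcal{E}$,
\[
\|\wh\theta\|_2^2 \;\le\; \frac{8\,\|\bY\|_2^2}{\tr(\sw)}.
\]

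Next, using the elementary bound $\wh\theta^\T \sw\,\wh\theta \le \|\sw\|_{\op}\,\|\wh\theta\|_2^2$, together with $\|\bY\|_2^2 = \sum_{i=1}^n Y_i^2 \le n$ (since $Y_i\in\{0,1\}$), I would conclude that on $\mathcal{E}$,
\[
\|\wh\theta\|_{\sw}^2 \;=\; \wh\theta^\T\sw\,\wh\theta \;\le\; \|\sw\|_{\op}\cdot \frac{8n}{\tr(\sw)} \;=\; \frac{8n}{{\rm r_e}(\sw)}.
\]
The stated probability bound $1 - 3\exp(-cn)$ then follows immediately from the lower bound on $\PP(\mathcal{E})$ supplied by Proposition \ref{lem_lb_sigma_n_X}, with the constants $C,c$ inherited from that proposition.

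There is no real obstacle here: the only tool required beyond the previously established lower bound on $\sigma_n(\bX)$ is the trivial observation that the labels are bounded by $1$ in absolute value, which is what turns the $\ell_2$ control of $\wh\theta$ into the desired $\sw$-weighted control. The role of the assumption $n\ge K$ and ${\rm r_e}(\sw)\ge Cn$ is purely to invoke Proposition \ref{lem_lb_sigma_n_X}.
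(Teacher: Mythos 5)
Your proof is correct and follows essentially the same route as the paper: bound $\|\wh\theta\|_{\sw}^2 \le \|\sw\|_\op\,\|\bX^+\bY\|_2^2 \le \|\sw\|_\op\,\|\bY\|_2^2/\sigma_n^2(\bX)$, then invoke $\|\bY\|_2^2\le n$ and the high-probability lower bound $\sigma_n^2(\bX)\ge \tr(\sw)/8$ from Proposition~\ref{lem_lb_sigma_n_X}. The only cosmetic difference is that the paper applies $\|\sw\|_\op$ before passing to $\sigma_n(\bX)$, whereas you apply it afterwards; the chain of inequalities and the source of the probability bound are identical.
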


\begin{proof} 
	By definition, 
	\[
	\|\sw^{1/2}\wh\theta\|_2^2 = 	\|\sw^{1/2}\bX^+\bY \|_2^2 \le \|\sw\|_\op {\|\bY\|_2^2 \over \sigma_n^2(\bX)}.
	\]
	The result follows from the inequality $\|\bY\|_2 \le \sqrt n$ and  Proposition \ref{lem_lb_sigma_n_X}. 
	%	 	  	{\marten The event holds with probability greater than $1-C\exp(-C'n)$ for some finite constants $C,C'>0$ depending on the subGaussian parameter $\sigma$ only.} 
\end{proof}

From Proposition \ref{prop_hattheta}, $\|\wh\theta\|_{\sw} \to 0$,  with overwhelming probability, is ensured if  $n/ {\rm r_e(\sw)}\to 0$ as $n\to \i$. {The latter holds in the over-parametrized setting $p\gg n$ with condition \eqref{cond_sw}.}\\

Bounding  $\|A^\T \wh\theta - \beta\|_{\sz}$ from above on the other hand is the main difficulty in our analysis. To present our result, we write the non-zero eigenvalues of $A\sz A^\T$ as $\lambda_1\ge \cdots \ge \lambda_K$ and 
its  condition number $\lambda_1 / \lambda_K$ as $ \kappa$. We further define  
\begin{equation}\label{def_xi}
	\xi:= {\lambda_K \over \|\sw\|_\op}
\end{equation}
as  the signal-to-noise ratio of predicting the signal $Z$ from $X = AZ + W$ in the presence of the noise $W$. At the sample level, the noise level $\|\sw\|_\op$ gets inflated  in the sense that 
\begin{equation}\label{bd_W_op}
	\PP\left\{
	{1\over n}\| \bW^\T \bW \|_{\rm op} \le 12 \sigma^2 \|\sw\|_\op \left( 1  + {{\rm r_e}(\sw) \over n}\right)
	\right\} = 1 - \exp(-n),
\end{equation}
see, Lemma \ref{lem_op_norm} in the Appendix. Finally, we set
\begin{align}\label{def_omega_n}
	\omega_n &=  \sqrt{K \log(n) \over n} + 
	\sqrt{K \log(n) \over n} { \kappa ~ n\over {\rm r_e}(\sw)} +  \sqrt{n\over {\rm r_e}(\sw)} 
	+ {{\rm r_e}(\sw) \over n}  { \sqrt{\kappa}  \over \xi}+{\kappa\over\sqrt{\xi}}.
\end{align}

\begin{thm}\label{thm_rate_Atheta}
	%%	  Under model (\ref{model_X}) and Assumptions {\rm (i) -- (vi)}, as
	%% {\mike This needs to be $K\rho_n^2 / n \to 0$. We might use $\log n$ directly.} 
	Assume the following holds as $n\to \i$,
	\begin{align}\label{assumps}
		{K \log(n) \over  n} \to 0,\quad  {n \over {\rm r_e}(\sw)} \to 0\quad \text{ and }\quad {\kappa  \over \xi}\left( 1  + {{\rm r_e}(\sw) \over n}\right) \to 0. 
	\end{align}
	Then,   for any constant $c>0$, there exists a constant $C=C(\sigma)<\infty$ such that 
	\begin{align*}	 		
		\PP\left\{ 	\| A^\T \wh\theta - \beta\|_{\sz}   \le C  ~ \omega_n \right\} =1 -\cO( n^{-c})
		%sssim   \sqrt{K\log n \over n} + 
		%\sqrt{K\log n \over n} {n\over {\rm r_e}(\sw)} +  \sqrt{n \kappa \over {\rm r_e}(\sw)}\\
		%& \quad + \sqrt{\kappa} ~ {{\rm r_e}(\sw) \over n}  {\|\sw\|_\op \over \lambda_K}+\kappa\sqrt{\|\sw\|_\op \over \lambda_K}
	\end{align*}
	%$1-\cO(n^{-1})$.
\end{thm}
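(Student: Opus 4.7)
The plan is to reduce the bound on $\|A^\T\wh\theta-\beta\|_{\sz}$ to a comparison with an infeasible oracle that uses the latent factors directly, and then to quantify the gap between the two through a factor-model perturbation analysis tailored to the possibly non-uniform signal eigenvalues $\lambda_1\ge\cdots\ge\lambda_K$ of $A\sz A^\T$.

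First, I would introduce the infeasible OLS-on-latent-factors estimator $\wh\beta_Z = (\bZ^\T\bZ)^{+}\bZ^\T\bY$ and bound $\|\wh\beta_Z-\beta\|_{\sz}$ via standard finite-sample analysis of $K$-dimensional least squares. Using the normal equations $\EE[Z(Y-Z^\T\beta)]=0$ implied by Proposition \ref{prop_ls_rule}, concentration of $\bZ^\T\bZ/n$ around $\sz$, and a subGaussian/Bernstein bound on $n^{-1}\bZ^\T(\bY-\bZ\beta)$ yield the leading $\sqrt{K\log(n)/n}$ contribution to $\omega_n$.

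Next, on the high-probability event $\{\rank(\bX)=n\}$ provided by Proposition \ref{lem_lb_sigma_n_X}, the interpolator takes the form $\wh\theta=\bX^\T(\bX\bX^\T)^{-1}\bY$, and I would express $A^\T\wh\theta$ through the reduced SVD $\bX=\bU D\bV^\T$ with $\bV\in\cO_{p\times n}$. Splitting $\bV=[\bV_1,\bV_2]$ into the top $K$ right singular directions and the remaining $n-K$, the decomposition $A^\T\wh\theta = A^\T\bV_1 D_1^{-1}\bU_1^\T\bY + A^\T\bV_2 D_2^{-1}\bU_2^\T\bY$ exposes a PCR-type term that can be matched to $\wh\beta_Z$ up to an error of order $\sqrt{K\log(n)/n}\cdot\kappa n/{\rm r_e}(\sw)$, the second summand of $\omega_n$, plus a tail contribution driven by the leakage $\|\sz^{1/2}A^\T\bV_2\|_\op$ and by the bulk-noise scale $\sigma_n^2(\bX)\asymp\tr(\sw)$. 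The leakage itself demands a sin-$\Theta$-type bound for $\bV_1$ against the column span of $A$ in which the signal-to-noise ratio $\xi$ and the condition number $\kappa$ must enter separately.

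The main obstacle is precisely this leakage bound. Standard factor-model results (Bai, Fan et al., Stock--Watson) presume that all $K$ leading signal singular values of $\bX$ scale identically with $p$, whereas here only $\lambda_K/\|\sw\|_\op=\xi\to\infty$ is imposed and $\kappa$ may grow. I would derive the required sin-$\Theta$-type bound by propagating the cross terms $\bW A$ and $\bZ^\T\bW$ together with the bulk term $\bW\bW^\T$ through the eigenvector perturbation machinery while carefully tracking their dependence on $\kappa$ and $\xi$ separately. The operator-norm bound $\|\bW\bW^\T\|_\op\lesssim\|\sw\|_\op\cdot{\rm r_e}(\sw)$ in the over-parametrized regime, supplied by Lemma \ref{lem_op_norm}, together with subGaussian and Hanson--Wright type inequalities for the cross terms, then produces the remaining $\sqrt{n/{\rm r_e}(\sw)}$, $({\rm r_e}(\sw)/n)\sqrt{\kappa}/\xi$, and $\kappa/\sqrt{\xi}$ contributions to $\omega_n$. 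Finally, assembling the pieces by the triangle inequality on the joint high-probability event where all concentration estimates hold simultaneously yields the claim; the $\log n$ factors appear by applying the concentration bounds with $\delta=n^{-c'}$ for a suitably large $c'$, producing the overall failure probability of $O(n^{-c})$.
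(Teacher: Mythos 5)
Your plan is structurally plausible and you have most of the right toolbox — the $\sqrt{K\log(n)/n}$ term from the infeasible least squares on the latent factors, a split of the SVD of $\bX$ into a top-$K$ part and a tail, a sin-$\Theta$ bound relating the top-$K$ right singular space to the column span of $A$, Hanson--Wright and operator-norm estimates for $\bW\bW^\T$, and a union over high-probability events with $\delta = n^{-c'}$. But you take a genuinely different algebraic route, and the crux of the argument is asserted rather than established.

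The paper does not decompose $A^\T\wh\theta$ by splitting the SVD expansion of $\wh\theta$ itself. It first exploits the interpolation-event identities $\bZ^+\bZ = \bI_K$ and $\bX\bX^+ = \bI_n$ to obtain the exact identity $A^\T\wh\theta = \bZ^+\bY - \bZ^+\bW\bX^+\bY$, so that the infeasible-OLS term $\bZ^+\bY$ appears \emph{exactly}, with no matching step needed. It then decomposes the remaining error $\bZ^+\bW\bX^+\bY$ via the projections $V_KV_K^\T$ and $V_{-K}V_{-K}^\T$. Your decomposition $A^\T\wh\theta = A^\T V_K D_K^{-1}U_K^\T\bY + A^\T V_{-K} D_{-K}^{-1}U_{-K}^\T\bY$ instead isolates a PCR-type quantity and a tail, and then you must show the PCR-type term is close to $\wh\beta_Z = \bZ^+\bY$. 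That step is exactly where the heavy lifting lies, and you state a bound for it (``up to an error of order $\sqrt{K\log(n)/n}\,\kappa n/{\rm r_e}(\sw)$'') without giving a mechanism.

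The missing mechanism is the factor-score estimation argument, not a $\sin$-$\Theta$ bound. The paper's Section \ref{sec_technical} explicitly warns that even if $V_K V_K^\T$ is close to $V_A V_A^\T$, the bound remains ``highly nontrivial'': a subspace angle estimate does not control how $D_K^{-1}$ interacts with the signal and noise in the mixed terms. What the paper actually needs is the approximation $\wh\bZ := \sqrt{n}\,U_K \approx \bZ H$ for an explicit $K\times K$ matrix $H = (np)^{-1}A^\T A\bZ^\T\wh\bZ D_K^{-2}$, together with two-sided control of the singular values of $\sz^{1/2}H$ and $H^{-1}A^\T$ (Lemma \ref{lem_D_H}) and an operator-norm bound on $\|\wh\bZ - \bZ H\|_\op$ (Lemma \ref{lem_Z_diff}). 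This is the sharpened form of Bai's factor-score result, adapted to allow non-uniform signal eigenvalues ($\lambda_K \gtrsim \kappa\,\delta_W$ rather than $\lambda_K\asymp p$). In your write-up you name the sin-$\Theta$ bound as ``the main obstacle'' and treat the matching as routine; it is the other way around. Without the $\wh\bZ - \bZ H$ estimate, the PCR-vs-$\bZ^+\bY$ matching and, separately, the contribution of $A^\T V_{-K}D_{-K}^{-1}U_{-K}^\T\bY$ to the final rate are not substantiated, and the terms $({\rm r_e}(\sw)/n)\sqrt\kappa/\xi$ and $\kappa/\sqrt\xi$ in $\omega_n$ do not obviously emerge from a sin-$\Theta$-plus-operator-norm argument alone.
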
 
We refer to Section \ref{sec:proof:main} for the proof and  explain %at the end of this section 
the main difficulties of the analysis in Section \ref{sec_technical} below. 
When $\kappa = \cO(1)$, the set of assumptions (\ref{assumps}) is needed to ensure $\omega_n\to0$ in (\ref{def_omega_n}). The first condition puts a restriction on the latent dimension relative to the sample size, the second condition  holds in the over-parametrized setting $p\gg n$ with \eqref{cond_sw}, while the third one requires the signal $\lambda_K$ of predicting $\bZ$ from $\bX$ to exceed the sample level noise (cf. (\ref{bd_W_op})).  
In Remark \ref{rem:minimax} we simplify the expression of $\omega_n$ and provide an interpretation for each term.

\begin{cor}\label{cor_pred_dir}
	Under condition (\ref{assumps}), for any constant $c>0$,
	there exists a constant $C=C(\sigma) < \infty$  such that
	\[
	\PP\left\{
	| X^\T \wh\theta  - Z^\T \beta | \ge  C   ~ \omega_n\sqrt{\log (n)}
	\right\} = \cO(n^{-c}).
	\]
	%%   holds with probability tending to one, w.r.t. the law of $(\bX, \bY$), as $n\to \i$.
\end{cor}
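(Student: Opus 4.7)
The plan is to combine the decomposition (\ref{identiteit}) with the three preceding bounds in a straightforward way. Writing
\[
| X^\T \wh\theta - Z^\T \beta | \le | Z^\T(A^\T \wh\theta - \beta)| + | W^\T \wh\theta|,
\]
I would bound each term separately, on the high-probability event where $\wh\theta$ satisfies the conclusions of Proposition \ref{prop_hattheta} and Theorem \ref{thm_rate_Atheta}. Note that these two results only involve the training data, while $(X,Z,W)$ in the displayed bound refer to an independent new observation, so the randomness used by Proposition \ref{prop:GsG} is independent of the randomness controlling $\|\wh\theta\|_{\sw}$ and $\|A^\T \wh\theta-\beta\|_{\sz}$.

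First I would apply Proposition \ref{prop:GsG} with $\delta = n^{-(c+1)}$, conditionally on $\wh\theta$. For the noise term, this gives
\[
|W^\T \wh\theta| \le \sigma \,\|\wh\theta\|_{\sw}\sqrt{2(c+1)\log n}
\]
with conditional probability at least $1-2n^{-(c+1)}$, and then Proposition \ref{prop_hattheta} furnishes $\|\wh\theta\|_{\sw}^2 \le 8n/{\rm r_e}(\sw)$ off an event of probability at most $3\exp(-cn)$, so that
\[
|W^\T \wh\theta| \lesssim \sqrt{\frac{n\log n}{{\rm r_e}(\sw)}}.
\]
Similarly, for the signal term, Proposition \ref{prop:GsG} yields
\[
|Z^\T(A^\T \wh\theta - \beta)| \le \Bigl(\tfrac{1}{2\pi_0\pi_1} + \sqrt{(c+1)\log n}\Bigr)\,\|A^\T \wh\theta - \beta\|_{\sz}
\]
with conditional probability at least $1-4n^{-(c+1)}$, and combining with Theorem \ref{thm_rate_Atheta} gives
\[
|Z^\T(A^\T \wh\theta - \beta)| \lesssim \omega_n\sqrt{\log n}
\]
off an event of probability $\cO(n^{-c})$.

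The final step is to notice that $\omega_n$ already contains the summand $\sqrt{n/{\rm r_e}(\sw)}$, so that $\sqrt{n\log n/{\rm r_e}(\sw)} \le \omega_n\sqrt{\log n}$, and the noise term is absorbed into the same bound. A union bound over the handful of exceptional events, each of probability at most $\cO(n^{-c})$ (after renaming the constant $c$), yields the stated conclusion. There is no real obstacle here: the only subtlety is the careful separation of the randomness in $\wh\theta$ (training data) from the randomness in $(X,Z,W)$ (test observation) when invoking Proposition \ref{prop:GsG}, which is a conditional sub-Gaussian/Gaussian tail bound holding for every fixed value of $\wh\theta$.
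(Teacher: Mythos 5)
Your proposal is correct and follows exactly the paper's route: the paper's one-line proof ("Combination of Proposition \ref{prop:GsG}, Proposition \ref{prop_hattheta} and Theorem \ref{thm_rate_Atheta} immediately yields the result") is precisely the decomposition and union bound you spell out, including the observation that $\sqrt{n/{\rm r_e}(\sw)}$ already appears as a summand in $\omega_n$. You have simply made the choice $\delta = n^{-(c+1)}$ and the independence of training and test randomness explicit, which the paper leaves implicit.
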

\begin{proof}
	Combination of  Proposition \ref{prop:GsG}, Proposition \ref{prop_hattheta} and Theorem \ref{thm_rate_Atheta} immediately yields the result.
	%bound of the prediction error $|X^\T \wh\theta - Z^\T \beta|$ as stated in the following corollary. 
\end{proof}

In case  $\pi_0=\pi_1=1/2$, the intercept $\beta_0 $ does not need to be estimated as $\beta_0=0$. 
Coupled with (\ref{disp_margin}) and (\ref{def_P_Dt_t}), Corollary \ref{cor_pred_dir}   immediately gives a bound on the excess risk of the classifier $\wh g(x) %_{(\wh \theta, 0)}
=\1\{ x^\T \wh\theta >0\} $ that uses a hyperplane through the origin.

\begin{cor}\label{cor_excess_ris}
	Under condition (\ref{assumps}), assume that $\Dt \asymp 1$ and $\pi_0=\pi_1=1/2$. The classifier $\wh g(x) =\1\{x^\T \wh \theta >0\}$ satisfies 
	\[
	% \cE\left(
	%  \wh g_{(\wh \theta,  0)}
	%\right) 
	\PP\{\wh g(X)\ne Y\} - R_z^* ~  \lesssim  ~   \omega_n^2\log (n).
	\]
\end{cor}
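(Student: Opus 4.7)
The argument will be a direct combination of the margin inequality \eqref{disp_margin} with Corollary \ref{cor_pred_dir}, tuned by an appropriate choice of $t$. No new estimates are required; the heavy lifting has already been done in Theorem \ref{thm_rate_Atheta}.

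First, I would observe that when $\pi_0=\pi_1=1/2$ the formula \eqref{eq_beta_0_general} yields $\beta_0=0$, since both $1/2-\pi_0$ and $\log(\pi_1/\pi_0)$ vanish. Hence the Bayes rule $g_z^*(z)=\1\{z^\T\beta+\beta_0>0\}$ has no intercept, and the classifier $\wh g(x)=\1\{x^\T\wh\theta>0\}$ is exactly of the form \eqref{def_g_td} with the (trivially consistent) choice $\wt\beta_0 = \beta_0 = 0$. Consequently, the margin bound \eqref{disp_margin} specializes, for every $t>0$, to
\[
\PP\{\wh g(X)\ne Y\} - R_z^* \ \le\ \PP\bigl\{|X^\T\wh\theta - Z^\T\beta|>t\bigr\} + P(\Dt,t).
\]

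Next, I would pick a constant $c>0$ to be chosen later and set $t := C\,\omega_n\sqrt{\log n}$, with $C=C(\sigma)$ the constant provided by Corollary \ref{cor_pred_dir} for that $c$. By Corollary \ref{cor_pred_dir}, the first probability on the right-hand side is $\cO(n^{-c})$. Because $\Dt\asymp 1$, the first case of \eqref{def_P_Dt_t} gives $P(\Dt,t)\lesssim t^2 \asymp \omega_n^2\log n$, so that
\[
\PP\{\wh g(X)\ne Y\} - R_z^* \ \lesssim\ \omega_n^2 \log n + n^{-c}.
\]

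Finally, I would absorb the residual $n^{-c}$ into $\omega_n^2 \log n$. From the first summand in the definition \eqref{def_omega_n} of $\omega_n$ together with $K\ge 1$, one has $\omega_n^2 \log n \ge K\log^2(n)/n \ge \log^2(n)/n$, so taking $c\ge 2$ (which merely affects the constant $C$) makes $n^{-c}=o(\omega_n^2 \log n)$ and yields the claimed bound. There is no particular obstacle here: Theorem \ref{thm_rate_Atheta} and the margin inequality \eqref{disp_margin} do all the work, and the symmetric case $\pi_0=\pi_1$ bypasses the delicate intercept estimation that dominates the asymmetric analysis elsewhere in the paper.
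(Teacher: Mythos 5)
Your proof is correct and follows exactly the route the paper intends: observe $\beta_0=0$ in the symmetric case, invoke the margin inequality \eqref{disp_margin} together with the first case of \eqref{def_P_Dt_t}, and apply Corollary \ref{cor_pred_dir} with $t=C\omega_n\sqrt{\log n}$. The remark that $n^{-c}$ can be absorbed using $\omega_n^2\log n\gtrsim \log^2(n)/n$ is a useful bookkeeping detail the paper leaves implicit.
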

\medskip

Having successfully bounded  $|X^\T \wh \theta -Z^\T \beta| $, it remains  to bound  $|\wt \beta_0 - \beta_0|$ in order to apply    the excess risk bound (\ref{disp_margin}). Recall that $\wt \beta_0$ is given by \eqref{def_beta0tilde}.

\begin{prop} \label{prop_intercept}
%%	Under model (\ref{model_X}) and Assumptions {\rm (i) -- (vi)}, suppose 
Assume {$n'\gtrsim n$ and }$n/{\rm r_e}(\sw)\to0$ as $n\to\infty$. Then, for any $c>0$, there exists a $C=C(\sigma)<\infty$ such that
\begin{align*}
	%\lim_{n\to\infty}
	\PP\left\{
	\left| \wt\beta_0-\beta_0 \right| 
	\le 
	C  \sqrt{   \frac{\log(n)}{{n}} }+ C\| A^\T \wh\theta - \beta\|_{\sz}  \right\}=1-\cO(n^{-c}).
\end{align*}
%	for some constant $C=C(\pmin).$
%		for any $t>0$
%	\begin{align*}
	%	% \lim_{n\to\i}
	%	\PP\left\{  
	%	 \left|  \bar \a^\T (A\wh \theta -\beta) +  \wt\beta_0-\beta_0 \right| \gtrsim \frac{t}{\sqrt{n}} + \left(1+ \frac{t}{\sqrt{n}}\right)	\| A^\T \wh\theta - \beta\|_{\sz} +t  {  \frac{\sigma  } { \sqrt{ \rm r_e}(\sw) } }  \right\} \lesssim\exp\left(-t^2\right) +\exp(-C n).
	%	\end{align*}
\end{prop}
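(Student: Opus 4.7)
The plan is a triangle-inequality decomposition anchored at an oracle ``bridge'' intercept that replaces the hold-out empirical means $\wt\mu_k$ by the true conditional means $\EE[X \mid Y=k] = A\a_k$ (by (i)--(ii) and model \eqref{model_X}), while keeping the training-based $\wh\theta$ and $\wh\pi_k$ intact. Concretely, set
$$\beta_0^\star \;=\; -\tfrac{1}{2}(A\a_0 + A\a_1)^\T \wh\theta + \Bigl[1 - (A\a_1 - A\a_0)^\T \wh\theta\Bigr]\wh\pi_0\wh\pi_1 \log\!\frac{\wh\pi_1}{\wh\pi_0}$$
and split $|\wt\beta_0 - \beta_0| \leq |\wt\beta_0 - \beta_0^\star| + |\beta_0^\star - \beta_0|$. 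The first term isolates the sampling error of the hold-out means, the second term reduces to the quality of $\wh\theta$ as an estimator of $\beta$ together with the estimation error of the label proportions.

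For $|\wt\beta_0 - \beta_0^\star|$, the crucial leverage is that the hold-out sample is independent of both $\wh\theta$ and $\wh\pi_k$. Conditioning on $\wh\theta$ and on the class counts $\wt n_k$, each cross-term $(\wt\mu_k - A\a_k)^\T \wh\theta$ is an average of $\wt n_k$ i.i.d.\ mean-zero scalars that, under (iv) and (v), are subGaussian with variance proxy $\wt n_k^{-1}\bigl(\|A^\T \wh\theta\|_{\szy}^2 + \sigma^2 \|\wh\theta\|_{\sw}^2\bigr)$. Hoeffding's inequality applied to $\wt n_k$ gives $\wt n_k \gtrsim \pi_k n' \gtrsim n$ with probability $1-\cO(n^{-c})$; Proposition~\ref{prop_hattheta} gives $\|\wh\theta\|_{\sw}^2 \lesssim n/{\rm r_e}(\sw) \to 0$; and the law of total covariance $\sz = \szy + \pi_0\pi_1(\a_1-\a_0)(\a_1-\a_0)^\T$ yields $\szy \preceq \sz$, hence
$$\|A^\T \wh\theta\|_{\szy} \;\leq\; \|\beta\|_{\sz} + \|A^\T \wh\theta - \beta\|_{\sz} \;\lesssim\; 1 + \|A^\T \wh\theta - \beta\|_{\sz},$$
where $\|\beta\|_{\sz}^2 \leq \pi_0\pi_1$ follows from $\beta = \pi_0\pi_1 \sz^{-1}(\a_1-\a_0)$ and $(\a_1-\a_0)^\T \sz^{-1}(\a_1-\a_0) = \Dt^2/(1+\pi_0\pi_1\Dt^2) \leq 1/(\pi_0\pi_1)$. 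A subGaussian tail bound (as in Proposition~\ref{prop:GsG}) then delivers $|(\wt\mu_k - A\a_k)^\T \wh\theta| \lesssim \sqrt{\log(n)/n}\,(1 + \|A^\T \wh\theta - \beta\|_{\sz})$ with probability $1-\cO(n^{-c})$, and the conclusion $|\wt\beta_0 - \beta_0^\star| \lesssim \sqrt{\log(n)/n}\,(1 + \|A^\T \wh\theta - \beta\|_{\sz})$ follows after noting that $\wh\pi_0\wh\pi_1|\log(\wh\pi_1/\wh\pi_0)|$ is bounded on a high-probability event.

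For $|\beta_0^\star - \beta_0|$, the remaining discrepancies are $(A\a_k)^\T \wh\theta - \a_k^\T \beta = \a_k^\T(A^\T \wh\theta - \beta)$ and $\wh\pi_k - \pi_k$. Cauchy--Schwarz in the $\sz$ inner product yields $|\a_k^\T(A^\T \wh\theta - \beta)| \leq \|\a_k\|_{\sz^{-1}}\|A^\T \wh\theta - \beta\|_{\sz}$, and a short calculation using $\EE[Z] = \pi_0\a_0 + \pi_1\a_1 = \0_K$ and the Sherman--Morrison form of $\sz^{-1}$ shows that $\|\a_k\|_{\sz^{-1}}$ is bounded by a constant depending only on $\pi_0,\pi_1$. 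Hoeffding gives $|\wh\pi_k - \pi_k| \lesssim \sqrt{\log(n)/n}$ with probability $1-\cO(n^{-c})$; since $\pi_k$ is bounded away from zero under (vi), the first-order expansion of $\wh\pi_0\wh\pi_1\log(\wh\pi_1/\wh\pi_0)$ around $\pi_0\pi_1\log(\pi_1/\pi_0)$ is valid on the event $\wh\pi_k \geq \pi_k/2$, producing the same $\sqrt{\log(n)/n}$ rate. Combining with boundedness of $(\a_1-\a_0)^\T \beta$ (which again reduces to $\Dt^2/(1+\pi_0\pi_1\Dt^2) \leq 1/(\pi_0\pi_1)$) yields $|\beta_0^\star - \beta_0| \lesssim \sqrt{\log(n)/n} + \|A^\T \wh\theta - \beta\|_{\sz}$.

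The main obstacle is the careful bookkeeping for the logarithmic factor $\log(\wh\pi_1/\wh\pi_0)$, which is not globally Lipschitz and which multiplies terms involving $\wh\theta$, $\wt\mu_k$, and the $\a_k$; this is handled by restricting to the high-probability event $\wh\pi_k \geq \pi_k/2$ and using a first-order expansion. Summing the two contributions yields the stated bound $|\wt\beta_0 - \beta_0| \lesssim \sqrt{\log(n)/n} + \|A^\T \wh\theta - \beta\|_{\sz}$ with probability $1-\cO(n^{-c})$.
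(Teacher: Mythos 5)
Your proof is correct and follows essentially the same route as the paper's: both arguments reduce $|\wt\beta_0-\beta_0|$, via a triangle inequality, to controlling $\|A^\T\wh\theta-\beta\|_{\sz}$, the hold-out sampling error $(\wt\mu_k - A\a_k)^\T\wh\theta$ (via independence of the hold-out sample, sub-Gaussianity of $Z\mid Y$ and $W$, Hoeffding for $\wt n_k$, and Proposition~\ref{prop_hattheta} for $\|\wh\theta\|_{\sw}$), the label-frequency error $|\wh\pi_k - \pi_k|$, and the $\sz^{-1}$-norm bounds on $\a_k$ and $\a_1-\a_0$. The only difference is cosmetic: you anchor at the bridge $\beta_0^\star$ (replacing $\wt\mu_k$ by $A\a_k$), whereas the paper adds and subtracts $\bar\a^\T(A^\T\wh\theta-\beta)$ before splitting into $R_1+R_2$; both decompositions lead to the same terms to bound.
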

\begin{proof} See Appendix \ref{ss_prop_intercept}.  
\end{proof}

For ease of presentation, we state our results for $n'\gtrsim n$.
Tracking our proof reveals that 
for any $n'\to \i$, the statement above continues to hold when we replace  $n$ by $n'\wedge n$.
%with probability $\sqrt{\log(n)/n}$ replaced by $\sqrt{\log (n\wedge n') / (n\wedge n')}$.\\}

Finally, we can state  our main result:

\begin{thm}\label{thm:rates}
Assume (\ref{assumps}),  $n' \gtrsim n$  and $\Dt\asymp 1$. Then  $\wt g(x)$ in \eqref{def_g_td} satisfies 
\[
\PP\left\{ \wt g(X)\ne Y\right\} - R_z^*  ~ \lesssim  ~   \omega_n^2\log (n).
\]
\end{thm}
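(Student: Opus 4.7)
The plan is to apply the excess-risk bound (\ref{disp_margin}) with an appropriately chosen margin $t$, control the deviation $|X^\T \wh\theta - Z^\T \beta + \wt\beta_0 - \beta_0|$ on a single high-probability event via the results already established, and identify $\omega_n^2 \log n$ as the resulting rate.

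First I would exploit the assumption $\Dt \asymp 1$ to read off from (\ref{def_P_Dt_t}) that $P(\Dt, t) \asymp t^2$ for $t > 0$. By the triangle inequality,
\[
|X^\T \wh\theta - Z^\T \beta + \wt\beta_0 - \beta_0| \le |X^\T \wh\theta - Z^\T \beta| + |\wt\beta_0 - \beta_0|.
\]
Corollary \ref{cor_pred_dir} gives $|X^\T \wh\theta - Z^\T \beta| \le C\, \omega_n \sqrt{\log n}$ with probability $1 - \cO(n^{-c})$. For the intercept, Proposition \ref{prop_intercept} combined with the bound $\|A^\T \wh\theta - \beta\|_{\sz} \lesssim \omega_n$ from Theorem \ref{thm_rate_Atheta} yields $|\wt\beta_0 - \beta_0| \lesssim \sqrt{\log(n)/n} + \omega_n$ on an event of the same asymptotic probability. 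Since the first summand of $\omega_n$ in (\ref{def_omega_n}) dominates $\sqrt{\log(n)/n}$, both contributions are at most a constant multiple of $\omega_n \sqrt{\log n}$.

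Setting $t := C'\, \omega_n \sqrt{\log n}$ with $C'$ sufficiently large, the union of the two failure events gives
\[
\PP\bigl\{|X^\T \wh\theta - Z^\T \beta + \wt\beta_0 - \beta_0| > t\bigr\} = \cO(n^{-c}),
\]
while $P(\Dt, t) \lesssim \omega_n^2 \log n$. Plugging these into (\ref{disp_margin}) bounds the excess risk by $C'' \omega_n^2 \log n + \cO(n^{-c})$; since the high-probability bounds hold for any $c > 0$ and $\omega_n^2 \log n \gtrsim (\log n)^2 / n$, choosing $c$ large enough absorbs $n^{-c}$ into $\omega_n^2 \log n$, yielding the claim. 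The proof requires no new technical work beyond assembling the previously established pieces; the only point worth checking is the legitimacy of absorbing the tail probability, and that is immediate once $c$ is taken sufficiently large. The genuine heavy lifting was carried out in Theorem \ref{thm_rate_Atheta} and Proposition \ref{prop_intercept}.
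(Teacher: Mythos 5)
Your proposal is correct and follows essentially the same route as the paper's own proof: apply the excess-risk bound (\ref{disp_margin}) with $t \asymp \omega_n\sqrt{\log n}$, invoke Corollary \ref{cor_pred_dir} and Proposition \ref{prop_intercept} (the latter fed by Theorem \ref{thm_rate_Atheta}) to control the deviation term, and use $P(\Dt,t)\asymp t^2$ from (\ref{def_P_Dt_t}) in the regime $\Dt\asymp 1$. Your explicit remark that the $\cO(n^{-c})$ tail is absorbed because $\omega_n^2\log n \gtrsim K(\log n)^2/n$ is a useful spelling-out of a step the paper leaves implicit.
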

\begin{proof}
The result follows from the excess risk bound (\ref{disp_margin}) and (\ref{def_P_Dt_t}) with $t=\omega_n \sqrt{\log(n)}$, Corollary \ref{cor_pred_dir} and Proposition \ref{prop_intercept}.
\end{proof}

\begin{remark}[Simplified excess risk bound and minimax optimality]\label{rem:minimax}
We now discuss the case when $p\gg n  \gg K$, $\Dt\asymp1$, $n\gtrsim n'$, ${\rm r_e}(\sw) \asymp p$ and $\kappa\asymp 1$. In this scenario, 
we will argue that our classifier $\wt g(x)$ in (\ref{def_g_td}) is minimax-optimal, provided both the ambient dimension $p$ and the signal-to-noise ratio $\xi={\lambda_K / \|\sw\|_\op}$ are large. We first observe that
we have  $n/{\rm r_e}(\sw) \asymp n / p \to 0$ and $\omega_n$ in (\ref{def_omega_n}) can be  simplified to 
\begin{equation}\label{omega_n_simp}
\omega_n^2  ~  \asymp ~  {K \log(n)  \over n} + 
{n\over p} 
+   \left({p\over n ~\xi}\right)^2 +{1 \over \xi}.
\end{equation}
We can follow the discussion after Theorem 3 in \cite{BW22}
to summarize the first, third and fourth terms.
The first term $K\log(n)/ n$ is the optimal rate of the excess risk when the latent factors  $\bZ$ and $Z$ were observable, hence it reflects the benefit of having a  hidden, low-dimensional structure with $K\ll n$.
The last term is essentially $R_x^* - R_z^*$ given by \eqref{eq_RxRz}, representing the irreducible error of predicting $Z$ from $X$ at the population level (see \eqref{def_xi}), while the third term can be interpreted as the error of predicting $\bZ$ from $\bX$ at the sample level (see \eqref{bd_W_op}). Different from   \cite{BW22}, is the second term $n/p$ in \eqref{omega_n_simp}, which  is due to using $X^\T \wh\theta$ to predict $Z^\T \beta$ instead of the PCR method advocated  in \cite{BW22}. It reveals the benefit of over-parametrization.

%    As discussed after Theorem 3 in \cite{BW22}: the first term $K\log(n)/ n$ is the optimal rate of the excess risk even when $\bZ$ and $Z$ were observable, hence reflects the benefit of having a  hidden low-dimensional structure with $K\ll n$; the last term is essentially $R_x^* - R_z^*$ given by \eqref{eq_RxRz}, representing the irreducible error of predicting $Z$ from $X$ at the population level (see \eqref{def_xi}) while the third term can be interpreted as the error of predicting $\bZ$ from $\bX$ at the sample level (see \eqref{bd_W_op}). Differently, the second term in \eqref{omega_n_simp}  is due to using $X^\T \wh\theta$ to predict $Z^\T \beta$ instead of the PCR in \cite{BW22}. It reveals the benefit of over-parametrization.

In view of \eqref{omega_n_simp},  
consistency of the classifier $\wt g$  in (\ref{def_g_td}) requires  the signal-to-noise ratio $\xi$ to be sufficiently large in the precise sense that  $\xi  \gg  p/n$. 
Furthermore, under the stronger assumption of  $\xi \gtrsim   (p/ n) \cdot ({n/K })^{1/2}$, we find 
\begin{align}\label{omega_n_simp_2}
\PP\{ \wt g(X)\ne Y\} - R_z^*  &\lesssim  ~ \omega_n^2 \log(n)  ~ \asymp ~ 
{K \over n}\log^2 (n) + {n \over p} \log(n).
\end{align}
We emphasize that this condition on $\xi$ is  a much weaker requirement than the condition $\xi \gtrsim p$  commonly made in the existing literature of high-dimensional factor models. See, for instance, \cite{Bai-factor-model-03,fan2013large,SW2002_JASA}). 
% {\mike  The latter condition holds, for example, when $\lambda_1(\sw)=\cO(1)$, $\lambda_K^{-1}(\szy) = \cO(1)$ and a fixed proportion of rows of $A$ are independent  realizations of a sub-Gaussian random vector with covariance matrix having  eigenvalues bounded from below.}
Finally, \citet[Theorem 1]{BW22} proves that the minimax optimal rate is proportional to $(K/n) + (1/\xi)$, and we find that   
the above  rate (\ref{omega_n_simp_2}) coincides with the minimax optimal rate, up to the logarithmic factor in $n$, in the high-dimensional setting  $p\gtrsim n^2 / K$.\\ %, see \citet[Theorem 1]{BW22}. 

%   {\mike
%	In view of \eqref{omega_n_simp}, both
%	 consistency of $\wt g$ and condition (\ref{assumps}) require  the signal-to-noise ratio $\xi$ to be sufficiently large in the precise sense that  $\xi  \gg p/n$. Under the stronger  assumption that $\xi \gtrsim p/\sqrt{nK}$ (as we need $K\log (n) \ll n$ in Theorem \ref{thm:rates}), we find 
%	 \[
%	    \PP\{ \wt g(X)\ne Y\} - R_z^*  \lesssim  ~ \omega_n^2 ~ \asymp ~         {K\log^2(n)\over n} + {n\log(n) \over p}.
%    \]
%	 The above  rate matches the minimax optimal rate, up to the logarithmic factor in $n$, for $p\gtrsim n^2 / K$, see \citet[Theorem 1]{BW22}. It is worth mentioning that $\xi \gtrsim p/\sqrt{nK}$ is still a much weaker requirement than $\xi \gtrsim p$, a condition commonly made in the existing literature of high-dimensional factor models (see, for instance, \cite{Bai-factor-model-03,fan2013large,SW2002_JASA}). The later holds, for example, when $\lambda_1(\sw)=\cO(1)$, $\lambda_K^{-1}(\szy) = \cO(1)$ and a fixed proportion of rows of $A$ are i.i.d. realizations of a sub-Gaussian random vector with covariance matrix having  eigenvalues bounded from below.}
\end{remark}

\begin{remark}[Rates of the excess risk for $\Dt \to \i$ and $\Dt \to 0$]\label{rem:phasetrans}
Since $\Dt\asymp 1$ is the most realistic and interesting case as discussed after display (\ref{def_P_Dt_t}), we state our main result in Theorem \ref{thm:rates} in this regime. Nevertheless, the results for $\Dt \to\i$ and $\Dt\to 0$ can be easily obtained by combining the excess risk bound (\ref{disp_margin}), display (\ref{def_P_Dt_t}), Proposition \ref{prop:GsG}, Theorem \ref{thm_rate_Atheta} and
Proposition \ref{prop_intercept},  in conjunction with (\ref{def_P_Dt_t}) for $t = \omega_n\sqrt{\log n}$. 
%{\marten  See also Remark \ref{rem_Dtinfty}.   we just deleted this remark....}
\end{remark}

\subsection{Technical difficulties in the proof of Theorem \ref{thm_rate_Atheta}}\label{sec_technical} 

On the event $\cE(\bZ,\bX)=\{\sigma_K^2(\bZ \sz^{-1/2})\ge n/2$, $\sigma_n^{2}(\bX) \ge \tr(\sw)/8\}$, which can be shown to hold with overwhelming probability (see, Proposition \ref{lem_lb_sigma_n_X} in Section \ref{sec_interpolation} and Lemma \ref{lem_Z} in Section \ref{app_aux}), the identities $\bZ^+\bZ = \bI_K$ and $\bX\bX^+ = \bI_n$ lead to the following chain of identities
\begin{align*}
A^\T \wh\theta  &= \bZ^+ \bZ A^\T \bX^+ \bY\\
&= \bZ^+ (\bX - \bW)\bX^+ \bY \\
&= \bZ^+\bY - \bZ^+\bW\bX^+\bY
\end{align*}
so that 
\begin{align*}
\|A^\T \wh\theta - \beta\|_{\sz} &\le \|\bZ^+\bY - \beta\|_{\sz} + \|\bZ^+\bW\bX^+\bY\|_{\sz}.
\end{align*}
The first term is relatively easy to analyze and it can be bounded as $\cO_\PP(\sqrt{K\log(n)/n})$. The second term is technically challenging to analyze because
commonly used arguments  render meaningless bounds. To appreciate the difficulty of the problem, let us consider three types of arguments in the simplified case $\sw = \bI_p$ and ${\rm r_e(\sw)} = \tr(\sw) = p$. 
\begin{enumerate}[itemsep = 1pt]
\item[(i)] By the identity $\sz^{1/2}\bZ^+ = (\sz^{-1/2}\bZ^\T \bZ \sz^{-1/2})^+ \sz^{-1/2} \bZ^\T$, we have on   $\cE(\bZ,\bX)$,  
\begin{align*}
\|\bZ^+\bW\bX^+\bY\|_{\sz} \le {2\over n}\|
\sz^{-1/2}\bZ^\T \bW 
\|_{\op} \|\bX^+\bY\|_2.
\end{align*}
Since $\|\bX^+\bY\|_2 \le \sqrt{\|\bY\|_2 / \sigma_n(\bX)} \le \sqrt{8n/p}$ and standard concentration arguments ensure that $\|
\sz^{-1/2}\bZ^\T \bW 
\|_{\op}$ is at least of order $\cO_{\PP}(\sqrt{np})$, we end up with a trivial bound $\|\bZ^+\bW\bX^+\bY\|_{\sz} = \cO_\PP(1)$. Note that since $\bX = \bZ A^\T + \bW$ depends on both $\bW$ and $\bY$, the above arguments do not appear to be loose. 
\item[(ii)] In fact, by $\bX = \bZ A^\T + \bW$ and $\bX^+ = \bX^\T (\bX\bX^\T)^+$, we also have 
\begin{align*}
\|\bZ^+\bW\bX^+\bY\|_{\sz} &= \|\bZ^+\bW(\bZ A^\T + \bW)^\T(\bX\bX^\T)^+\bY\|_{\sz}\\
&\le \|\bZ^+ \bW  A \bZ^\T (\bX\bX^\T)^+\bY\|_{\sz} + \|\bZ^+\bW\bW^\T (\bX\bX^\T)^+\bY\|_{\sz}.
\end{align*}
By similar arguments,
the second term could be bounded by 
\begin{align*}
\|\bZ^+\bW\bW^\T\|_\op  \|(\bX\bX^\T)^+\bY\|_{2} \le {2\over n}\|\sz^{-1/2}\bZ^\T\bW\bW^\T\|_\op   \sqrt{8n \over p}.
\end{align*}
As $\|\sz^{-1/2}\bZ^\T\bW\bW^\T\|_\op$ has the order of $\cO_\PP(p\sqrt{n})$, we again obtain a trivial bound.

\item[(iii)] Since $\bX = \bZ A^\T + \bW$ and especially for $p\gg n \gg K$, intuitively, we would expect that $\|\bZ^+\bW\bX^+\bY\|_{\sz}$ contains, or is mainly about, the term
\[
\|\bZ^+\bW\bW^+\bY\|_{\sz} = \|\bZ^+ \bI_n \bY\|_{\sz} = \|\bZ^+ \bY\|_{\sz}.
\]
As we have argued that $ \|\bZ^+\bY\|_{\sz}$ concentrates around $\|\beta\|_{\sz} = \pi_0\pi_1 \Dt$ in attempt (i), it seems hopeless for the rate of $\|\bZ^+\bW\bW^+\bY\|_{\sz}$   to converge to zero for non-vanishing $\Dt$.
\end{enumerate}

Despite these failed attempts, the situation can be salvaged 
%and we are able to show the counter-intuitive fast rate of $\|\bZ^+\bW\bX^+\bY\|_{\sz}$ 
via a more delicate argument. This is done by splitting $\|\bZ^+\bW\bX^+\bY\|_{\sz}$  into two parts, 
%%based on $V_K$ ($V_{-K}$), the first $K$ (last $p-K$) right-singular vectors of $\bX$,
\[
\|\bZ^+\bW\bX^+\bY\|_{\sz} \le \|\bZ^+\bW V_K V_K^\T \bX^+\bY\|_{\sz} + \|\bZ^+\bW V_{-K} V_{-K}^\T \bX^+\bY\|_{\sz}
\]
based on $V_K$, the first $K$   right-singular vectors of $\bX$, and  $V_{-K}$, the last $(p-K)$ right-singular vectors of $\bX$.
The key is to recognize and capture the implicit regularization of $\wh\theta = \bX^+\bY$ in the second term in the high-dimensional regime (see, Sections \ref{app_sec_part_II} and \ref{app_sec_part_III}). This is highly nontrivial even in the ideal case where $V_KV_K^\T$ is close to the projection onto the column space of $A$. 
We use a key observation made in \cite{Bai-factor-model-03} that $\bX V_{-K}$ estimates $\bZ Q$ well for a certain $K\times K$ transformation matrix $Q$. We sharpen this result in Lemmas \ref{lem_D_H} \& \ref{lem_Z_diff} by relaxing the stringent condition $\lambda_1(A\sz A^\T) \asymp \lambda_K(A\sz A^\T) \asymp p$  imposed by  \cite{Bai-factor-model-03}.

\section{Simulation Study}\label{sec_sim}

In this section we first verify the inconsistency of the naive classifier that uses the naive plug-in estimator of $\beta_0$ and contrast with other consistent classifiers. We then evaluate the performance of our propose classifier in terms of its misclassification error as well as its estimation errors of $\beta$ and $\beta_0$. We also examine their dependence on the dimensions $p$ and $K$ as well as the signal-to-noise ratio $\xi$.

We generated the data as follows:  We set $\pi_0 = \pi_1 = 0.5$, $\alpha_0 = -\alpha_1$, $\alpha_1  = \b1_K\sqrt{2/K}$ and $\szy = \bI_K$ such that $\Dt^2 = 8$. The entries of  $\bW$ and $A$ are independent  realizations of $N(0,1)$  and $N(0, 0.3^2)$, respectively.

\subsection*{Inconsistency of the Naive Classifier} 
We refer as GLS-Naive the classifier $\wh g(x)= \1\{ x^\T \wh \theta + \wh \beta_0 > 0\}$ with $\wh\beta_0$ being the naive plug-in estimator in (\ref{def_theta_0_hat}),  while  GLS-Oracle, GLS-Plugin and GLS-ERM represent the classifiers 
$\1\{ x^\T \wh \theta + \bar \beta_0 > 0\}$
with $\bar \beta_0$ chosen as 
the true $\beta_0$,
the plug-in estimate (\ref{def_theta_0_hat2}) based on data splitting, and % and proposed in the next section, and
the estimate (\ref{erm}) based on empirical risk minimization in Remark \ref{rem_erm}, respectively. 
Besides the optimal Bayes classifier (Bayes), we also choose the oracle procedure (Oracle-LS) that % Oracle-LS 
uses both $\bZ$ and $Z$ to estimate $\beta$ and  $\beta_0$ in (\ref{Bayes_rule}) as our benchmark.

In the left panel of Figure \ref{fig_error_gls}, we plot the performance of all classifiers on 200 test data points  by fixing $K=5$ and  $n = 100$, while varying $p\in\{300, 600, 1000, 2000, 4000, 6000\}$. Each setting is repeated 100 times and the averaged results are reported. For GLS-Plugin and GLS-ERM, we additionally generate $100$ data points as the validation set. Clearly, GLS-Naive is inconsistent while the other three GLS-based classifiers 
get closer to the Oracle-LS as $p$ increases. Moreover, the performance of GLS-Plugin is as good as GLS-Oracle (that uses the true $\beta_0$) and better than GLS-ERM. 

We also plot the training misclassification errors of all classifiers in the right panel of Figure \ref{fig_error_gls}. As expected from Proposition \ref{lem_GLS}, GLS-Naive interpolates the training data despite its inconsistency. As discussed after Lemma \ref{lem_beta_0_sign}, by recalling that $\pi_0 = 1/2$ hence $\beta_0 = 0$, GLS-Oracle also interpolates the training data. On the other hand, neither GLS-Plugin nor GLS-ERM interpolates. This is because their estimates of $\beta_0$ are centered around zero and only the non-positive ones lead to interpolation according to our discussion after Lemma \ref{lem_beta_0_sign}. Furthermore, if we encode $Y\in \{-1,1\}$, simulation shows that both GLS-Plugin and GLS-ERM also interpolate the training data in addition to GLS-Oracle and GLS-Naive.

\begin{figure}[ht]
\centering
\includegraphics[width = .48\textwidth]{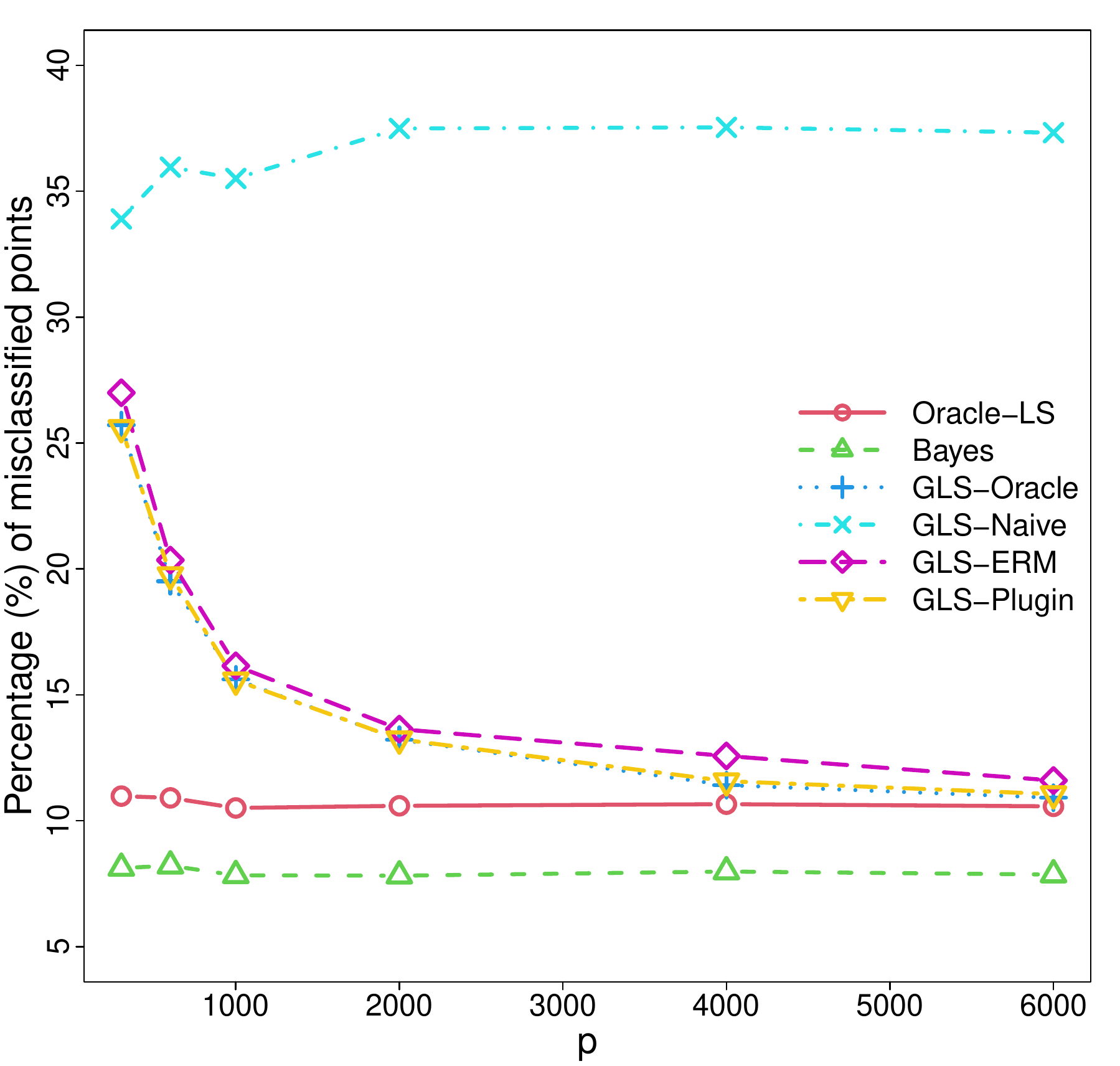}
\includegraphics[width = .48\textwidth]{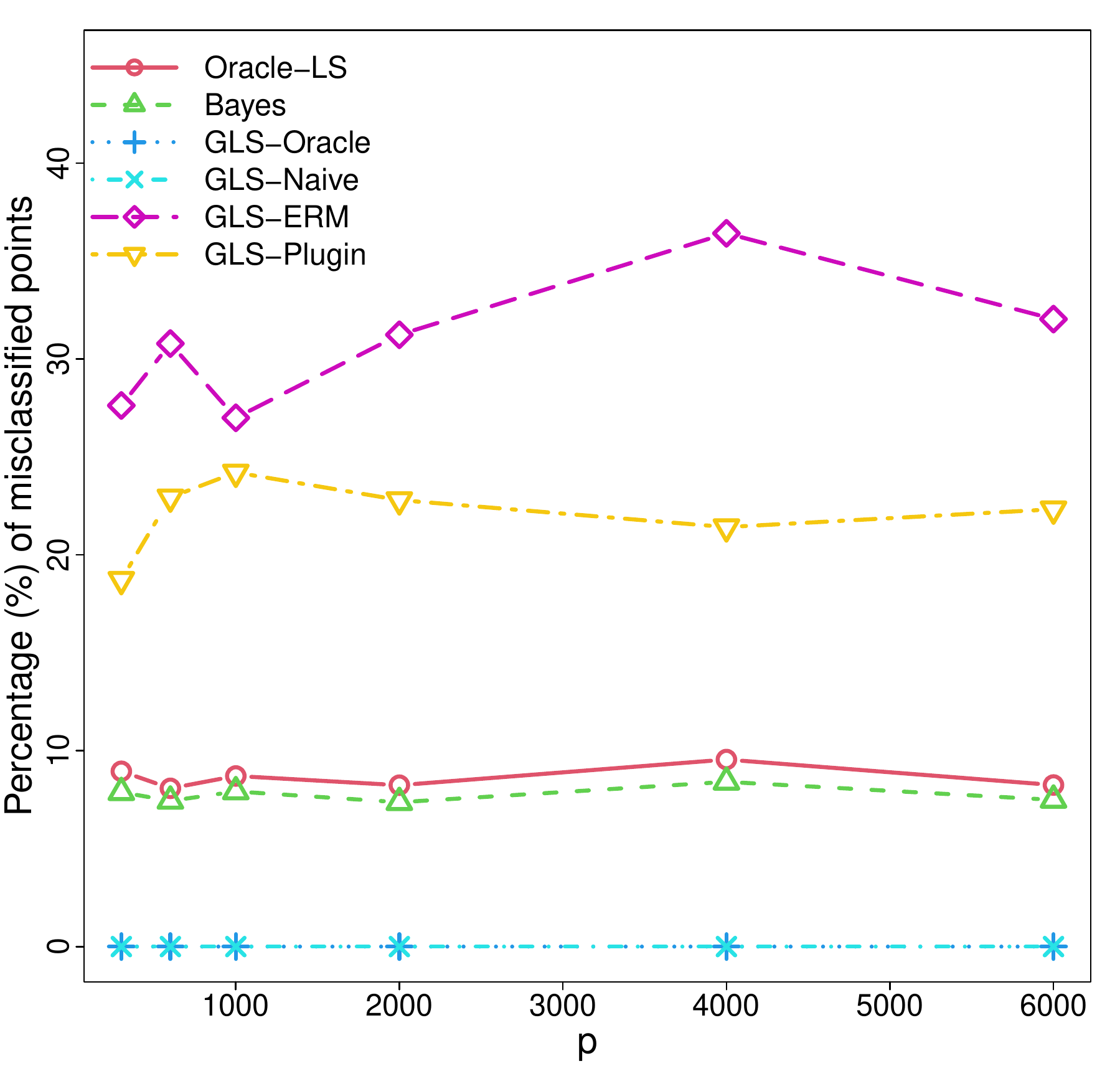}
\caption{The averaged misclassification errors of each algorithm for various choices of  $p$. The left panel depicts the misclassification errors of the training data while the right one shows the test misclassification errors.}  %We choose $n = 400$, $K = 5$, $\pi_1 = 0.85$, $\eta = 8$, $\sw = \bI_p$ and vary $p \in \{1000, 2000, \cdots, 7000\}.$}
\label{fig_error_gls}
\end{figure}

\subsection*{Performance of the Proposed Classifier}

We evaluate the performance of our proposed classifier, GLS-Plugin, and examine its dependence on $p$, $K$ and $\xi$ by varying them one at a time. We consider three metrics: the misclassification error on $200$ test data points, the estimation error of $\beta$, $\|\beta-A^\T\wh\theta\|_{\sz}$, as analyzed in Theorem \ref{thm_rate_Atheta}, and the estimation error of $\beta_0$, $|\wt\beta_0 -\beta_0|$. The sample size is fixed as $n = 100$ and we use a validation set with $100$ data points to compute $\wt \beta_0$. To vary the signal-to-noise ratio $\xi$, we choose the standard deviation $\sigma_A$ of each entries of $A$ from $\{0.01, 0.05, 0.1, 0.2\}$. Note that a larger $\sigma_A$ implies a larger $\xi$. 

We repeat each setting 100 times and the averaged metrics as well as the standard errors are reported  in Table \ref{tab_error}. In line with Theorem \ref{thm_rate_Atheta}, Proposition \ref{prop_intercept} and Theorem \ref{thm:rates},
all three metrics decrease in $p$ and $\xi$, while they increase in $K$.
%{\mike We notice that the errors of estimating $\beta_0$ have smaller magnitude than those of estimating $\beta$, suggesting a small leading constant $C$ in Proposition \ref{prop_intercept}.}
%{\marten Hmmm, yes, it's one dimensional, however, the bound in prop 10 bounds the error of estimating $\beta_0$ by that of $\beta$, so based in that bound, I'm actually surprised. It shows that the bound in prop 10 is rather loose.
%perhaps it's because $\Dt$ is relatively very large, so the effect of $\beta$ in $\beta_0$ is limited. see our remark 4.}

\begin{table}[ht]
\centering
\caption{The averaged metrics of GLS-Plugin over 100 repetitions (the numbers within parentheses are the standard errors).}
\label{tab_error}
{\renewcommand{\arraystretch}{1.3}
\resizebox{\textwidth}{!}{
\begin{tabular}{lccc}
	\hline
	Setting	& Misclassification errors & Errors of estimating $\beta$ &  Errors of estimating $\beta_0$   \\\hline  
	\multicolumn{1}{l}{
		$K=5$, $\sigma_A = 0.3$}\\
	$p=300$ & 0.256 (0.046) & 0.144 (0.052) & 0.040 (0.031) \\ 
	$p=600$ & 0.198 (0.037) & 0.127 (0.046) & 0.034 (0.023) \\ 
	$p=1000$ & 0.156 (0.032) & 0.117 (0.041) & 0.029 (0.021) \\ 
	$p=2000$ & 0.132 (0.034) & 0.115 (0.039) & 0.029 (0.024) \\ 
	$p=4000$ & 0.116 (0.027) & 0.112 (0.032) & 0.027 (0.020) \\ \hline
	\multicolumn{4}{l}{$p = 1000$, $\sigma_A = 0.3$}\\
	$K=3$ & 0.152 (0.033) & 0.091 (0.039) & 0.028 (0.020) \\ 
	$K=5$ & 0.161 (0.029) & 0.117 (0.039) & 0.032  (0.022) \\ 
	$K=10$ & 0.178 (0.036) & 0.180 (0.036) & 0.033 (0.027) \\ 
	$K=15$ & 0.186 (0.038) & 0.219 (0.040) & 0.030 (0.022) \\\hline 
	\multicolumn{4}{l}{$p = 1000$, $K = 5$}\\
	$\sigma_A=0.01$ & 0.479 (0.038) & 0.397 (0.004) & 0.048 (0.039) \\ 
	$\sigma_A=0.05$ & 0.282 (0.039) & 0.239 (0.024) & 0.034 (0.026) \\ 
	$\sigma_A=0.1$ & 0.187 (0.035) & 0.124 (0.037) & 0.029 (0.019) \\ 
	$\sigma_A=0.24$ & 0.161 (0.033) & 0.109 (0.034) & 0.029 (0.022)\\\hline 
\end{tabular}
}}
\end{table}

\section{Main proofs}\label{sec_proofs}

\subsection{Proof of Theorem \ref{thm_rate_Atheta}} \label{sec:proof:main}
%Before presenting the main proof, we first introduce some notation. 
Let $\bX = \bZ A^\T  + \bW$ be the matrix version of model (\ref{model_X}) based on independent observations and write the singular value decomposition  of $(np)^{-1/2} \bX$ as 
\begin{align}\label{svd}
( {np})^{-1/2} \bX = \sum_{k=1}^{n} d_k u_k v_k^\T = U D V^\T= U_KD_KV_K^\T + U_{-K}D_{-K}V_{-K}^\T
\end{align}
with $D = (D_K; D_{-K})=  \diag(d_1, \ldots, d_n)$,
$D_K=  \diag(d_1, \ldots, d_K)$, $D_{-K}=  \diag(d_{K+1}, \ldots, d_n)$,
$U = (U_K; U_{-K})= (u_1,\ldots, u_n)$,
$U_K = (u_1, \ldots, u_K)$, $U_{-K} = (u_{K+1}, \ldots, u_n)$,
and $V = (V_K; V_{-K})= (v_1, \ldots, v_n)$,
$V_K = (v_1, \ldots, v_K)$, $V_{-K} = (v_{K+1}, \ldots, v_n)$.	
Define	 
\begin{equation}\label{def_err_w}
\errw := \|\sw\|_\op \left( 1  + {{\rm r_e}(\sw) \over n}\right).
\end{equation}
%\begin{proof}
We  define  the events
\begin{align*}
&\cE_Z := \left\{
\frac12 \le {1\over n}\sigma_K^2(\bZ \sz^{-1/2})  \le {1\over n}\sigma_1^2(\bZ  \sz^{-1/2}) \le 2
\right\}\\
&\cE_W := \left\{
{1\over n}\sigma_1^2(\bW) \le 12\sigma^2 \errw
\right\}\\
&\cE_X := \left\{\sigma_n^2(\bX) \ge \frac{1}{8}  \tr(\sw) \right\}
\end{align*}
In the sequel, we work on the event 
\[ \cE := \cE_Z \cap \cE_W \cap \cE_X.\] This event 
%which, according to (\ref{bd_W_op}) and Lemmas \ref{lem_Z} \& \ref{lem_lb_sigma_n_X},
holds with probability greater than $1- \cO(n^{-c})$, see Proposition \ref{lem_lb_sigma_n_X} and Lemmas \ref{lem_Z} and \ref{lem_op_norm}.
Observe that we need at this point our assumptions  $n/{\rm r_e}(\sw)\to0$ and $K\log(n)/n\to0$ in (\ref{assumps}).

Since $\bZ^+\bZ = \bI_K$ and $\bX\bX^+ = \bI_n$ on the event $\cE_Z \cap \cE_X$, we find 
\begin{align}\label{decomp1}
A^\T \wh\theta  = \bZ^+ \bZ A^\T \bX^+ \bY = \bZ^+ (\bX - \bW)\bX^+ \bY  = \bZ^+\bY - \bZ^+\bW\bX^+\bY
\end{align}
so that 
\begin{align}\label{decomp2}
\|\sz^{1/2}(A^\T \wh\theta - \beta)\|_2 &\le \|\sz^{1/2}(\bZ^+\bY - \beta)\|_2 + \|\sz^{1/2}\bZ^+\bW\bX^+\bY\|_2\\
&\le  \|\sz^{1/2}(\bZ^+\bY - \beta)\|_2 +\|\sz^{1/2}\bZ^+\bW V_KV_K^\T \bX^+\bY\|_2\nonumber\\
&\quad +  \|\sz^{1/2}\bZ^+\bW  V_{-K}V_{-K}^\T\bX^+\bY\|_2.\nonumber
\end{align} 
We bound
the first term $\|\sz^{1/2}(\bZ^+\bY - \beta)\|_2$ in (\ref{decomp2})   by $C\sqrt{K\log(n)/n}$ in  Lemma \ref{lem_ZY_beta} below. The third term,
$\|\sz^{1/2}\bZ^+\bW  V_{-K}V_{-K}^\T\bX^+\bY\|_2$, turns out to be more difficult to bound and we analyze it separately in the next section \ref{app_sec_part_II}. 
We first bound  the second term $\|\sz^{1/2}\bZ^+\bW V_KV_K^\T \bX^+\bY\|_2$ in (\ref{decomp2}).  Let  $V_A\in \cO_{p\times K}$ be the matrix with   the left-singular vectors of $A$ as its columns.  We have
\begin{align*}
&\|\sz^{1/2}\bZ^+\bW V_KV_K^\T \bX^+\bY\|_2\\
& ={1\over \sqrt{np}} \|\sz^{1/2}\bZ^+\bW V_KV_K^\T V_KD_K^{-1}U_K^\T\bY\|_2 \\
&\le \frac{2}{n\sqrt{p}}  \| \bW V_KV_K^\T V_K \|_\op \|D_K^{-1}U_K^\T\bY\|_2 \\
& \le \frac{2}{n\sqrt{p}}  \left(
\| \bW V_A V_A^\T V_K\|_\op +
\| \bW (V_A V_A^\T - V_KV_K^\T) V_K\|_\op %V_K^\T V_K \|_\op 
\right) \|D_K^{-1}U_K^\T\bY\|_2
\\
%	 &\le  \left(\|\sz^{1/2}\bZ^+\bW V_A\|_\op + \|\sz^{1/2}\bZ^+\bW\|_\op \|V_KV_K^\T - V_AV_A^\T\|_\op\right) { \|D_K^{-1}U_K^\T\bY\|_2   \over \sqrt{np}}\\
&\le  {2\over n\sqrt{p}}\left( \|\bW V_A\|_\op +  \|\bW\|_\op  \|V_KV_K^\T - V_AV_A^\T\|_\op\right) { \|D_K^{-1}U_K^\T\bY\|_2   }.
\end{align*}
%{\marten The last inequlaity seems wrong.}
In first inequality, we used $\|\sz^{1/2}\bZ^+\|_\op = 1 / \sigma_K(\bZ\sz^{-1/2}) \le 2$ on $\cE_Z$. 	The last inequality uses  $\| V_A\|_\op\le1$ and $\| V_K\|_\op\le 1$ since   $V_K, V_A\in \cO_{p\times K}$.   Lemma \ref{lem_D_H} ensures
\[
{ \|D_K^{-1}U_K^\T\bY\|_2   \over \sqrt{np}} \le  {\|\bY\|_2 \over d_K\sqrt{np}} \le \frac{1}{d_k\sqrt{p}} \asymp \sqrt{1\over \lambda_K}.
\]
Note that we need $\lambda_K\gg \kappa \delta_W$ in (\ref{assumps}) in order to apply this lemma.
Invoke Lemma \ref{lem_W} and Lemma \ref{lem_PA_diff}
(that require $K\log(n) \ll n$ and $\delta_W\ll \lambda_K$), %together with (\ref{bd_W_op}), 
and  conclude 
\begin{equation}\label{bd_part1}
\|\sz^{1/2}\bZ^+\bW V_KV_K^\T \bX^+\bY\|_2 \lesssim
\sqrt{\|\sw\|_\op \over \lambda_K}+
{\errw \over \lambda_K} \sqrt{\kappa}.
\end{equation}
The proof of Theorem \ref{thm_rate_Atheta} is completed by collecting the bounds in Lemma \ref{lem_ZY_beta}, inequality (\ref{bd_part1}) and inequality (\ref{ineq:difficult}) in Appendix \ref{app_sec_part_II}.
\qed

\subsubsection{Bound of $\|\sz^{1/2}\bZ^+\bW  V_{-K}V_{-K}^\T\bX^+\bY\|_2$}\label{app_sec_part_II}
We will prove that, for any $c>0$, there exists a finite $C>0$, independent of $n$, such that
\begin{align}\label{ineq:difficult}
C^{-1}	\|\sz^{1/2}\bZ^+\bW  V_{-K}V_{-K}^\T\bX^+\bY\|_2 &\le   {\kappa \sqrt{nK\log n}  \over {\rm r_e}(\sw)}+   \sqrt{n\over {\rm r_e}(\sw)} +  \kappa\sqrt{\|\sw\|_\op \over \lambda_K}
\end{align}
holds with probability  larger than $1-\cO(n^{-c})$.	
By  $\bX^+ = \bX^\T (\bX\bX^\T)^+$ and the definition of the event $\cE_Z$, we have
\begin{align*}
\|\sz^{1/2}\bZ^+\bW  V_{-K}V_{-K}^\T\bX^+\bY\|_2& \le 
{2\over n}\|\sz^{-1/2}\bZ^\T \bW V_{-K}V_{-K}^\T \bX^\T (\bX \bX^\T)^+ \bY\|_2\\
&= {2\over n}\|\sz^{-1/2}\bZ^\T \bW \bX^\T U_{-K}U_{-K}^\T (\bX \bX^\T)^+ \bY\|_2\\
&\le \rI + \rII 
\end{align*}
on the event $\cE_Z$,
where 
\begin{align*}
\rI & = {2\over n}\|\sz^{-1/2}\bZ^\T \bW A \bZ^\T U_{-K}U_{-K}^\T (\bX \bX^\T)^+ \bY\|_2,\\
\rII &= {2\over n}\|\sz^{-1/2}\bZ^\T \bW \bW^\T U_{-K}U_{-K}^\T (\bX \bX^\T)^+ \bY\|_2.
\end{align*}
We bound $\rI$ and $\rII$ in the sequel separately. Following \cite{Bai-factor-model-03}, we define
\[
\wh \bZ = \sqrt{n} ~  U_K \in \RR^{n\times K}
\]
and 
\[
H = {1\over np}A^\T A \bZ^\T \wh\bZ D_K^{-2} \in  \RR^{K\times K},
\]
such that $\bZ H  = (np)^{-1}\bZ A^\T A \bZ^\T \wh\bZ D_K^{-2}$ is invariant to different parametrizations of $A$ and $\szy$.
Note that, on the event $\cE_Z\cap \cE_W$ and under the set of assumptions (\ref{assumps}),  the results  of Lemma \ref{lem_D_H} hold, and, in particular, 
\begin{equation}\label{bd_H}
\sigma_K\left(\sz^{1/2}H\right) \gtrsim \sqrt{\lambda_K \over \lambda_1},\qquad \sigma_1\left(H^{-1} A^\T\right) \lesssim \sqrt{\lambda_1}.
\end{equation}

\paragraph{Bound of $\rI$:}

Notice that 
\begin{align}\label{bd_een}
\rI  &\le {1\over n}\|\sz^{-1/2}\bZ^\T \bW A (H^{\T})^{-1} \|_\op  \|H^\T  \bZ^\T U_{-K}  U_{-K}^\T (\bX \bX^\T)^+ \bY\|_2.
\end{align}
%	Use the definition of the event $\cE_X$, Proposition \ref{lem_lb_sigma_n_X}, the fact that $ U_{-K} U_{-K}^\T$ is a projection matrix  and Lemma \ref{lem_Z_diff} to find that, 
We find that, on the event $\cE_X$, 
\begin{align}\label{bd_ZUXY}\nonumber
& \|H^\T  \bZ^\T U_{-K}  U_{-K}^\T (\bX \bX^\T)^+ \bY\|_2\\  &= 	\|(\bZ H - \wh\bZ)^\T U_{-K}  U_{-K}^\T (\bX \bX^\T)^+ \bY\|_2 &&\text{since $\wh\bZ ^\T = \sqrt{n} U_{K}$} \nonumber \\\nonumber
%	 &\le 	\|\bZ H - \wh\bZ \|_\op \|  U_{-K}  U_{-K}^\T (\bX \bX^\T)^+ \bY\|_2 \\\nonumber
&\le 	\|\wh\bZ - \bZ H  \|_\op \| (\bX \bX^\T)^+ \bY\|_2 && \text{since $ U_{-K} U_{-K}^\T$ is a projection} \nonumber\\
&\le    \|\wh\bZ - \bZ  H \|_\op {\|\bY\|_2 \over \sigma_n^2(\bX)} \\ 
&\le 8 \|\wh\bZ - \bZ  H  \|_\op {\sqrt n  \over \tr(\sw)} && \text{on $\cE_X$} \nonumber \\
&\lesssim  \sqrt{ \|\sw\|_\op \over \lambda_K} \sqrt{\lambda_1 \over \lambda_K} { n  \over \tr(\sw)} &&\text{by Lemma \ref{lem_Z_diff}}\nonumber
\end{align} 
Lemma  \ref{lem_ZWVA} and (\ref{bd_H}) ensure that
\begin{align}\label{bd_twee}
{1\over n}\|\sz^{-1/2}\bZ^\T \bW A (H^{\T})^{-1} \|_\op  &~ \le  ~  {1\over n}\|\sz^{-1/2}\bZ^\T \bW V_A\|_\op \|A  (H^{\T})^{-1}\|_\op\nonumber\\
& ~ = ~  {1\over n}\|\sz^{-1/2}\bZ^\T \bW V_A\|_\op \|A  (H^{-1})^\T\|_\op \\
& ~ \le  ~ C \sqrt{K\log n \over n}\sqrt{\lambda_1 \|\sw\|_\op},\nonumber
\end{align}
with probability $1-\cO(n^{-c})$. 
%	{\marten replace $\log (n)$ by $\rho_n$?} 
%Lemma 12 (6.2) is about $H^{-1}A^\T$ Do we have $(H^\T)^{-1} = H^{-1}$?????} 
From (\ref{bd_een}), (\ref{bd_ZUXY}) and (\ref{bd_twee}),
we conclude 
\begin{equation}\label{bd_I}
\rI\le C {\lambda_1 \over \lambda_K}{ \|\sw\|_\op \over \tr(\sw)} \sqrt{nK\log n} = C {\kappa \over {\rm r_e}(\sw)}\sqrt{nK\log n}
\end{equation}
with probability $1-\cO(n^{-c})$. 

\paragraph{Bound of $\rII$:} By adding and subtracting $\tr(\sw)\bI_n$, we have $\rII \le \rII_1 + \rII_2$ where 
\begin{align*}
\rII_1 &= {1\over n}\|\sz^{-1/2}\bZ^\T (\bW \bW^\T - \tr(\sw)\bI_n) U_{-K}U_{-K}^\T (\bX \bX^\T)^+ \bY\|_2,\\
\rII_2 &=  { \tr(\sw)\over n}\|\sz^{-1/2}\bZ^\T  U_{-K}U_{-K}^\T (\bX \bX^\T)^+ \bY\|_2.
\end{align*}
Next, we invoke the definitions of the events $\cE_X$, $\cE_Z$, and   apply Lemma \ref{lem_W} to obtain
\begin{align}\label{bd_rII_1}
\rII_1 &\lesssim {1\over \sqrt n}\|\bW \bW^\T - \tr(\sw)\bI_n\|_\op{\|\bY\|_2 \over \sigma_n^2(\bX)} \le C \sqrt{ n \|\sw\|_\op \over \tr(\sw)} = C \sqrt{n\over {\rm r_e}(\sw)}
\end{align} 
with probability $1-\cO(n^{-c})$.
Regarding $\rII_2$,  we observe that 
\begin{align}\label{bd_rII_2}
\rII_2  &\le    { \tr(\sw)\over n}\|\sz^{-1/2} (H^{\T})^{{-1}}\|_\op \|H^\T\bZ ^\T  U_{-K}U_{-K}^\T (\bX \bX^\T)^+ \bY\|_2\nonumber\\
&\lesssim \frac{ \tr(\sw)}{n} \sqrt{ \lambda_1\over\lambda_K} \sqrt{ \|\sw\|_\op \over\lambda_K} \sqrt{\lambda_1\over \lambda_K} {n \over \tr(\sw)} &&\text{ using (\ref{bd_H}) and  (\ref{bd_ZUXY})} \nonumber\\
&=
%{\lambda_1 \over \lambda_K}   \sqrt{ \|\sw\|_\op \over \lambda_K} = 
\kappa\sqrt{\|\sw\|_\op \over \lambda_K}.
\end{align} 
%    plugging in (\ref{bd_H}) together with (\ref{bd_ZUXY}) gives 
%   \begin{align}\label{bd_rII_2}
%  	\rII_2 & \lesssim {\lambda_1 \over \lambda_K}   \sqrt{ \|\sw\|_\op \over \lambda_K} = \kappa\sqrt{\|\sw\|_\op \over \lambda_K}.
% \end{align}
Collecting the bounds in  (\ref{bd_I}), (\ref{bd_rII_1}) and (\ref{bd_rII_2}) yields the desired result. \qed
%   	\end{proof}

\subsubsection{Technical lemmas used in the proof of Theorem \ref{thm_rate_Atheta}}\label{app_sec_part_III}

\begin{lemma}\label{lem_W}
Assume $K\le n$.
With probability $1-\exp(-n)$,  we have 
\begin{align*}
{1\over n}\sigma_1^2(\bW V_A) \le 14 \sigma^2  \|\sw\|_\op
\end{align*}
Assume
${\rm r_e}(\sw)\ge n$. With probability $1-2\exp(-n)$, we have for some positive, universal constant $C$,  
\begin{align*}
\left\|\bW\bW^\T - \tr(\sw)  \bI_n\right\|_\op &\le C  \sigma^2\sqrt{n\|\sw\|_\op \tr(\sw)}.
\end{align*} 
\end{lemma}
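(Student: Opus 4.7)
The plan is to treat the two bounds separately using standard concentration inequalities for sub-Gaussian random vectors and matrices.

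For the first bound, I write $\bW = \bV\sw^{1/2}$, where $\bV\in\RR^{n\times p}$ has independent rows distributed as $V$. Then $\bW V_A = \bV\,\sw^{1/2} V_A$ has independent sub-Gaussian rows in $\RR^K$ with parameter at most $\sigma$ and covariance $V_A^\T \sw V_A$, whose operator norm is bounded by $\|\sw\|_\op$ since $V_A\in\cO_{p\times K}$. A standard non-asymptotic estimate for the largest singular value of a matrix with independent sub-Gaussian rows gives, for every $t>0$,
\[
\sigma_1(\bW V_A)\le \sqrt{\|\sw\|_\op}\bigl(c_1\sigma(\sqrt{n}+\sqrt{K})+t\bigr)
\]
with probability at least $1-2\exp(-t^2/2)$. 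Choosing $t=\sqrt{n}$ and using $K\le n$ gives $\sigma_1^2(\bW V_A)\le Cn\sigma^2\|\sw\|_\op$ with probability $1-\exp(-n)$. The explicit constant $14$ is recovered by tracking numerical factors carefully.

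For the second bound, I reduce the operator norm to controlling a quadratic form via an $\varepsilon$-net. Let $\mathcal{N}\subset \mathbb{S}^{n-1}$ be a $1/4$-net with $|\mathcal{N}|\le 9^n$, and use that $M:=\bW\bW^\T-\tr(\sw)\bI_n$ is symmetric so that $\|M\|_\op\le 2\sup_{u\in\mathcal{N}}|u^\T M u|$. For a fixed $u\in\mathbb{S}^{n-1}$, set $\tilde V := \bV^\T u = \sum_i u_i V_i$. Since $V_1,\ldots,V_n$ are independent sub-Gaussian vectors with parameter $\sigma$ and identity covariance, $\tilde V$ is itself a sub-Gaussian vector in $\RR^p$ with parameter $\sigma$ and $\EE[\tilde V \tilde V^\T]=\bI_p$. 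A direct computation gives
\[
u^\T M u = u^\T \bV \sw \bV^\T u - \tr(\sw) = \tilde V^\T \sw \tilde V - \tr(\sw).
\]
Applying the Hanson-Wright inequality for sub-Gaussian vectors,
\[
\PP\!\left\{|\tilde V^\T \sw \tilde V - \tr(\sw)|>t\right\}\le 2\exp\!\left(-c\min\!\left(\frac{t^2}{\sigma^4\|\sw\|_F^2},\frac{t}{\sigma^2\|\sw\|_\op}\right)\right),
\]
and union-bounding over $\mathcal{N}$ to absorb the $\log|\mathcal{N}|\lesssim n$ cost yields
\[
\|M\|_\op\lesssim \sigma^2\bigl(\|\sw\|_F\sqrt{n}+n\|\sw\|_\op\bigr)
\]
with probability at least $1-2\exp(-n)$. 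The proof concludes by using $\|\sw\|_F^2\le \|\sw\|_\op\tr(\sw)$ together with the assumption ${\rm r_e}(\sw)=\tr(\sw)/\|\sw\|_\op\ge n$, which forces $n\|\sw\|_\op\le \sqrt{n\|\sw\|_\op\tr(\sw)}$, so that both terms collapse into $\sigma^2\sqrt{n\|\sw\|_\op\tr(\sw)}$.

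The main obstacle I anticipate is pinning down the explicit numerical constant $14$ in the first bound without being wasteful in the concentration estimate; the rest is a routine combination of Hanson-Wright-type concentration and a covering argument on the sphere.
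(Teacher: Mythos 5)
Your proof follows essentially the same route as the paper's. The second bound matches step for step: a $1/4$-net of $\cS^{n-1}$ with cardinality at most $9^n$, Hanson--Wright applied to $\tilde V^\T\sw\tilde V-\tr(\sw)$ for $\tilde V=\bV^\T u$, the bound $\|\sw\|_F^2=\tr(\sw^2)\le\|\sw\|_\op\tr(\sw)$, and ${\rm r_e}(\sw)\ge n$ to collapse both branches into $\sqrt{n\|\sw\|_\op\tr(\sw)}$. For the first bound the paper instead applies Lemma \ref{lem_op_norm} with $H=\sw^{1/2}V_AV_A^\T\sw^{1/2}$, obtaining the explicit constant $14$ from $\tr(H)\le K\|\sw\|_\op$, $\|H\|_\op\le\|\sw\|_\op$, $K\le n$, and $(a+b)^2\le 2a^2+2b^2$; your generic singular-value tail for a sub-Gaussian-row matrix does the same job up to constants. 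Two small slips to tidy: with $t=\sqrt n$ the tail you quote gives probability $1-2\exp(-n/2)$, not $1-\exp(-n)$, so take $t$ to be a suitable constant multiple of $\sqrt n$; and the rows of $\bW V_A$ have sub-Gaussian norm $\sigma\sqrt{\|\sw\|_\op}$, not $\sigma$---that is precisely the $\sqrt{\|\sw\|_\op}$ factor you pull out in front, so your displayed bound is correct, but the phrase ``parameter at most $\sigma$'' paired with ``covariance $V_A^\T\sw V_A$'' is a slightly loose way of accounting for the scaling.
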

\begin{proof}
To prove the fist result, recall that
$\bW = \wt \bW \sw^{1/2}$.  An application of Lemma \ref{lem_op_norm} gives
\[
\PP\left\{{1\over n}\| \bW V_AV_A^\T \bW^\T \|_{{\rm op}} \le \sigma^2\left( \sqrt{{\rm tr}(M) \over n} + \sqrt{6\|M\|_{\op}}
\right)^2\right\} \ge  1 -  \exp(-n).
\] 
Here $M = \sw^{1/2}V_AV_A^\T \sw^{1/2}$. The first claim now follows from
$
\tr(M) \le K \|M\|_\op$, $\| M\|_\op \le \|\sw\|_\op
$,  the inequality $(x+y)^2 \le 2x^2 + 2y^2$
and our assumption $K \le  n$. 

For the proof of the second claim, we use a standard discretization argument to obtain
\begin{align*}
\left\|\bW\bW^\T - \tr(\sw)  \bI_n\right\|_\op &= \sup_{u \in \cS^{n-1}} u^\T \left(\bW\bW^\T - \tr(\sw)  \bI_n\right) u\\
&\le 2 \max_{u\in \cN_n(1/4)}u^\T \left(\bW\bW^\T - \tr(\sw)  \bI_n\right) u
\end{align*}
Here $\cN_n(1/4)$ is a minimal $(1/4)$-net  of $\cS^{n-1}$.  It has cardinality $|\cN_n(1/4)|\le 9^n$ (see, for instance, Lemma 5.4 of \cite{vershynin_2012}).
For any fixed $u\in \cN_n(1/4)$, we apply  the Hanson-Wright inequality (see, \cite{rudelson2013hanson}) to find, for any $t\ge 0$, 
\begin{align*}
\PP\left\{
\left|u^\T \left(\bW\bW^\T - \tr(\sw)  \bI_n\right) u \right| > t
\right\} & \le 2\exp \left\{
-c \min \left(
{t^2 \over \sigma^4 \tr(\sw^2)}, ~ {t\over \sigma^2 \|\sw\|_\op}
\right)
\right\}\\
&\le 2\exp \left\{
-c  \min \left(
{t^2\over \sigma^4\|\sw\|_\op \tr(\sw)}, ~{t\over \sigma^2 \|\sw\|_\op}
\right)
\right\}.
\end{align*}
Here $c\le 1$ is some universal constant.
%	The second line uses $\tr(\sw^2)\le \|\sw\|_\op \tr(\sw)$. 
Next, we 
choose 
\[ t =  C  \sigma^2 \sqrt{n \|\sw\|_\op \tr(\sw)}\] with $C= \log(9e)/ {c} \ge 1$ and
we take a union bound over $u\in \cN_n(1/4)$ to conclude
\begin{align*}
\left\|\bW\bW^\T - \tr(\sw)  \bI_n\right\|_\op \le \frac{ \log(9e)}{c}  \sigma^2  \sqrt{n \|\sw\|_\op \tr(\sw)}
\end{align*}
with probability at least 
\begin{align*}
1 - 2|\cN_{1/4}|\exp \left\{
-  c  \min \left( nC^2
, ~ C \sqrt{n \tr(\sw) \over \|\sw\|_\op}
\right)
\right\} &\ge 1 - 2 \cdot 9^n \exp({-c C n})\\
&\ge 1- 2\exp(-n)
\end{align*}
We used our assumption ${\rm r_e}(\sw)\ge n$ in the first inequality.  
\end{proof}

\bigskip

The following lemma states the rates of the first $K$ singular values of $\bX$ and provides a lower bound for $\sigma_K(\sz^{1/2} H)$.
\begin{lemma}\label{lem_D_H}
Assume $\lambda_K \ge 48 \sigma^2 \kappa \errw  $.
%for sufficiently large $C=C(c_Z, C_W)>0$. 
On the event $\cE_Z \cap \cE_W$, we have 
\begin{align*}
&	\frac12	\sqrt{\lambda_k / p}\le 		d_k \le 4 \sqrt{\lambda_k / p} \qquad \forall k\in [K]
\\		
&
\sigma_K^2 \left(\sz^{1/2}H\right) \ge  { \frac{ \lambda_K }{2 \lambda_1}}= {1\over 2\kappa}
\\
& \sigma_1^2\left(H^{-1} A^\T\right) \le 4{\lambda_1}
\end{align*}
\end{lemma}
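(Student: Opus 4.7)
All three bounds rest on two ingredients: the spectral identity $\bX\bX^\T \wh\bZ = np\,\wh\bZ D_K^2$ coming from the SVD \eqref{svd}, and the decomposition $\bX = \bZ A^\T + \bW$. The first bound is a direct Weyl's inequality argument, while the latter two are derived from a common ``perturbation'' expansion $\wh\bZ = \bZ H + R$ obtained by substituting $\bX = \bZ A^\T + \bW$ into the spectral identity.

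For the $d_k$ bounds, apply Weyl's inequality to $\bX = \bZ A^\T + \bW$ to get $|\sigma_k(\bX) - \sigma_k(\bZ A^\T)| \le \|\bW\|_\op$. On $\cE_Z$ the relation $(n/2)\sz \preceq \bZ^\T\bZ \preceq 2n\sz$ together with the fact that the nonzero eigenvalues of $A\bZ^\T\bZ A^\T$ equal those of $\sz^{1/2} A^\T A \sz^{1/2}$ (which are $\lambda_1,\ldots,\lambda_K$) yields $\sqrt{n\lambda_k/2} \le \sigma_k(\bZ A^\T) \le \sqrt{2n\lambda_k}$. On $\cE_W$, $\|\bW\|_\op \le \sqrt{12n\sigma^2\errw}$. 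The assumption $\lambda_K \ge 48\sigma^2\kappa\errw$ makes the noise contribution negligible in both directions; dividing by $\sqrt{np}$ yields $(1/2)\sqrt{\lambda_k/p} \le d_k \le 4\sqrt{\lambda_k/p}$.

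For the remaining bounds, substitute $\bX = \bZ A^\T + \bW$ into $\bX\bX^\T \wh\bZ = np\, \wh\bZ D_K^2$ and multiply through by $D_K^{-2}/(np)$ to obtain
\[
\wh\bZ = \bZ H + R,\qquad R := (np)^{-1}\bigl[\bZ A^\T \bW^\T + \bW A\bZ^\T + \bW\bW^\T\bigr]\wh\bZ D_K^{-2}.
\]
Using $\|\bZ A^\T\|_\op \le \sqrt{2n\lambda_1}$ on $\cE_Z$, $\|\bW\|_\op \le \sqrt{12n\sigma^2\errw}$ on $\cE_W$, $\|\wh\bZ\|_\op = \sqrt{n}$, and the freshly-established bound $d_K^{-2} \le 4p/\lambda_K$, a triangle-inequality estimate bounds $\|R\|_\op/\sqrt{n}$ by a small absolute constant, precisely because the threshold $\lambda_K \ge 48\sigma^2\kappa\errw$ is calibrated so that each summand $\sqrt{\lambda_1\sigma^2\errw}/\lambda_K$ and $\sigma^2\errw/\lambda_K$ is controlled by a tiny multiple of $1$.

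For $\sigma_K^2(\sz^{1/2}H)$, expand $nI_K = \wh\bZ^\T \wh\bZ = (\bZ H + R)^\T(\bZ H + R)$ to get $H^\T \bZ^\T \bZ H = nI_K - \wh\bZ^\T R - R^\T \wh\bZ + R^\T R$, so $\lambda_K(H^\T\bZ^\T \bZ H) \ge n - 2\sqrt{n}\|R\|_\op - \|R\|_\op^2$; then $\bZ^\T \bZ \preceq 2n\sz$ on $\cE_Z$ gives $\lambda_K(H^\T \sz H) \ge \lambda_K(H^\T \bZ^\T \bZ H)/(2n)$, which with $\|R\|_\op$ controlled yields the claim. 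For $\sigma_1^2(H^{-1}A^\T)$, exploit that $\wh\bZ/\sqrt{n}$ has orthonormal columns, so $\|\wh\bZ M\|_\op = \sqrt{n}\|M\|_\op$ for any conformable $M$. With $M = H^{-1}A^\T$ and $\wh\bZ H^{-1}A^\T = \bZ A^\T + R H^{-1}A^\T$, this gives $\sqrt{n}\,\sigma_1(H^{-1}A^\T) \le \|\bZ A^\T\|_\op + \|R\|_\op\, \sigma_1(H^{-1}A^\T)$; rearranging and using $\|\bZ A^\T\|_\op \le \sqrt{2n\lambda_1}$ closes the argument. The main obstacle is tight constant-tracking in the bound on $\|R\|_\op$: the deterministic estimates on $\|\bZ A^\T \bW^\T\|_\op$ and $\|\bW\bW^\T\|_\op$ via a triangle inequality are quite loose, and the numerical threshold $48\sigma^2\kappa\errw$ is chosen precisely to force $\|R\|_\op/\sqrt{n}$ below the margin needed by both the $\sigma_K$ and $\sigma_1$ bounds simultaneously.
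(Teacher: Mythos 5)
Your argument for the $d_k$ bounds via Weyl's inequality is correct and matches the paper. The gap is in the remaining two bounds, and it is a real one: the quantity $\|R\|_\op/\sqrt{n}$ with $R := (np)^{-1}\bigl[\bZ A^\T \bW^\T + \bW A\bZ^\T + \bW\bW^\T\bigr]\wh\bZ D_K^{-2}$ is \emph{not} small under the assumption $\lambda_K\ge 48\sigma^2\kappa\errw$. Track the constants: on $\cE_Z\cap\cE_W$ one has $\|\bZ A^\T\|_\op\le\sqrt{2n\lambda_1}$, $\|\bW\|_\op\le\sqrt{12n\sigma^2\errw}$, $\|\wh\bZ\|_\op=\sqrt{n}$, and $d_K^{-2}\le 4p/\lambda_K$, which gives
\[
\frac{\|R\|_\op}{\sqrt{n}}\;\le\;\frac{8\sqrt{24}\,\sqrt{\lambda_1\sigma^2\errw}}{\lambda_K}+\frac{48\sigma^2\errw}{\lambda_K}.
\]
The hypothesis yields $\sqrt{\lambda_1\sigma^2\errw}/\lambda_K\le 1/\sqrt{48}$ and $48\sigma^2\errw/\lambda_K\le 1/\kappa$, so the right side is about $8/\sqrt{2}+1/\kappa\approx 6.7$, \emph{not} a small constant. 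Your step $\lambda_K(H^\T\bZ^\T\bZ H)\ge n-2\sqrt{n}\|R\|_\op-\|R\|_\op^2$ then produces a negative lower bound, and similarly the rearrangement $\sigma_1(H^{-1}A^\T)(\sqrt{n}-\|R\|_\op)\le\|\bZ A^\T\|_\op$ requires $\|R\|_\op<\sqrt{n}$, which fails. The large term is dominated by the uncentered $\bW\bW^\T$ block (whose operator norm is of order $n\errw$, not the much smaller $\sqrt{n\|\sw\|_\op\tr(\sw)}$ of the centered version used in Lemma \ref{lem_Z_diff}), but even centering or projecting $\bW$ onto $V_A$ leaves the cross-term coefficient near $6$, so the route cannot close.

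The paper avoids this obstruction by never requiring $\wh\bZ-\bZ H$ to be small relative to $\sqrt{n}$. Instead it rewrites $\sigma_K^2(\sz^{1/2}H)$ as $\frac{\lambda_K}{n^2p^2}\sigma_K^2(D_K^{-2}\wh\bZ^\T\bZ A^\T)$, writes $\bZ A^\T=\bX-\bW$, and applies Weyl's inequality directly to $D_K^{-2}\wh\bZ^\T\bZ A^\T = D_K^{-2}\wh\bZ^\T\bX - D_K^{-2}\wh\bZ^\T\bW$. The crucial point is the exact identity $\wh\bZ^\T\bX = n\sqrt{p}\,D_K V_K^\T$ from the SVD, so $\sigma_K(D_K^{-2}\wh\bZ^\T\bX)=n\sqrt{p}/d_1\asymp np/\sqrt{\lambda_1}$, while the perturbation $\sigma_1(D_K^{-2}\wh\bZ^\T\bW)\le\sigma_1(\bW)\,\sigma_1(\wh\bZ)\,d_K^{-2}$ is linear in $\bW$, with no $\bW\bW^\T$ term at all. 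The threshold $48\sigma^2\kappa\errw$ is calibrated precisely so that this perturbation is at most \emph{half} of $np/\sqrt{\lambda_1}$; this is a much weaker requirement than asking the perturbation to be a small fraction of $\sqrt{n}$, and it is the step your proposal misses.
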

\begin{proof}
We work on $\cE_Z \cap \cE_W$. For the first claim.
For any $k\in [K]$, we have
\begin{align*}
d_k = {1\over \sqrt{np}} \sigma_k(\bX) &\ge {1\over \sqrt{np}}\left[
\sigma_k(\bZ A^\T) - \sigma_1(\bW)
\right]&&\textrm{using Weyl's inequality }\\
&\ge 
\sqrt{1 \over2 p}\sigma_k(\sz^{1/2}A^\T) -  \sqrt{12 \sigma^2\errw \over p}
&&\textrm{on }\cE_Z\cap \cE_W\\
&\ge \frac12 \sqrt{\lambda_k\over p} &&\textrm{since }\lambda_K \ge  48 \sigma^2\errw.
\end{align*}
Similarly, we also have 
\[
d_k \le {1\over \sqrt{np}}\left[
\sigma_k(\bZ A^\T) + \sigma_1(\bW)
\right] \le 4  \sqrt{\lambda_k\over p}.
\]
We bound from below $\sigma_K(H)$ as follows:
\begin{align*}
\sigma_K^2(\sz^{1/2}H) &= {1\over n^2p^2}\lambda_K\left(
D_K^{-2}\wh\bZ^\T \bZ\sz^{-1/2}\left(\sz^{1/2}A^\T A\sz^{1/2}\right)^2 \sz^{-1/2}\bZ^\T \wh\bZ D_K^{-2}
\right)\\
& \ge {\lambda_K \over  n^2p^2}\lambda_K\left(
D_K^{-2}\wh\bZ^\T \bZ  A^\T A  \bZ^\T \wh\bZ D_K^{-2}
\right)\\
&={\lambda_K \over  n^2p^2}\sigma_K^2(D_K^{-2}\wh\bZ^\T \bZ A^\T).
\end{align*}
Since 
\begin{align}\label{lb_sig_DZZA}\nonumber
\sigma_K(D_K^{-2}\wh\bZ^\T \bZ A^\T)  &\ge \sigma_K(D_K^{-2}\wh\bZ^\T \bX) - \sigma_1(D_K^{-2}\wh\bZ^\T \bW) \ &&\text{by Weyl's inequality}\\ \nonumber
&\ge  n\sqrt{p} ~ \sigma_K(D_K^{-1}) - \sigma_1(\bW) \sigma_1(D_K^{-2}) \sigma_1(\wh\bZ) && \textrm{by }\wh \bZ=\sqrt{n}~ U_K\\\nonumber
&\ge {np \over \sqrt{\lambda_1}} - {np}\sqrt{12\sigma^2 \errw \over \lambda_K^2} \ &&\text{since $\sigma_1(\wh \bZ)=\sqrt{n}$}\\
& \ge \frac12  {np  \over \sqrt{\lambda_1}} &&\textrm{by } \lambda_K^2\ge  48\sigma^2 \lambda_1 \errw.
\end{align}
We used the first result   in the last two steps.

Finally, by analogous arguments, the last result follows from 
\begin{align*}
\sigma_1^2(H^{-1}A^\T) &= \lambda_1\left(
(\sz^{1/2}H)^{-1} \sz^{1/2}A^\T A \sz^{1/2} 	(H^\T \sz^{1/2})^{-1}
\right)\\
&= \left[
\lambda_K\left(H^\T \sz^{1/2} (\sz^{1/2}A^\T A \sz^{1/2} )^{-1}  \sz^{1/2}  H \right)
\right]^{-1}\\
& = n^2p^2\left[
\lambda_K\left(D_K^{-2} \wh\bZ^\T \bZ A^\T A \bZ^\T \wh \bZ D_K^{-2} \right)
\right]^{-1}\\
&=  n^2p^2\left[
\sigma_K^2\left(D_K^{-2} \wh\bZ^\T \bZ A^\T \right)
\right]^{-1}\\
&\le 4 \lambda_1 && \textrm{by (\ref{lb_sig_DZZA})}.
\end{align*}
This completes our proof.
\end{proof}

\begin{lemma}\label{lem_Z_diff}
Assume $\lambda_K \ge 8\errw (1 \vee 6\sigma^2 \kappa )$,  $K\le  n$ and ${\rm r_e}(\sw)\ge n$.  
On the event $\cE_Z\cap \cE_W$, we have, with probability $1-3\exp(-n)$,  
\[
\| \wh \bZ - \bZ  H\|_\op \lesssim  \sqrt{n \|\sw\|_\op \over \lambda_K} \sqrt{\lambda_1 \over \lambda_K}.
\]
\end{lemma}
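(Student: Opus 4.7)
The plan is to leverage the eigen-identity that defines $U_K$, namely $U_K D_K^2 = (np)^{-1}\bX\bX^\T U_K$, which after scaling by $\sqrt n$ and multiplying by $D_K^{-2}$ on the right can be rewritten as $\wh\bZ = (np)^{-1}\bX\bX^\T \wh\bZ D_K^{-2}$. Substituting $\bX = \bZ A^\T + \bW$ and recognising the $\bZ A^\T A\bZ^\T$ piece as $np\cdot\bZ H$ by definition of $H$, I obtain the expansion
\[
\wh\bZ - \bZ H = \frac{1}{np}\bigl[\bZ A^\T \bW^\T + \bW A\bZ^\T + \bW\bW^\T\bigr]\wh\bZ D_K^{-2},
\]
so the task reduces to bounding the operator norm of each of the three summands on the event $\cE_Z\cap \cE_W$.

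For the two cross terms the crucial step, relative to naive bounds, is to use the SVD $A = V_A\Sigma_A\tilde V_A^\T$ with $V_A\in \cO_{p\times K}$ and to insert $\sz^{1/2}\sz^{-1/2}$, so that the noise factor appears as $\|\bW V_A\|_\op$, which Lemma \ref{lem_W} controls by $\sqrt{n\|\sw\|_\op}$, instead of as $\|\bW\|_\op\lesssim\sqrt{n\errw}$ (which carries an unwanted $\sqrt{\mathrm{r_e}(\sw)/n}$ factor). Combining this with $\|\bZ\sz^{-1/2}\|_\op\le\sqrt{2n}$ from $\cE_Z$, the identity $\|\sz^{1/2}A^\T\|_\op=\sqrt{\lambda_1}$, the trivial $\|\wh\bZ\|_\op=\sqrt n$, and $\|D_K^{-2}\|_\op\le 4p/\lambda_K$ from Lemma \ref{lem_D_H}, a direct calculation bounds each cross term by a constant multiple of $\sqrt{n\|\sw\|_\op\lambda_1}/\lambda_K$, matching the announced rate.

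For the $\bW\bW^\T$ summand I propose the split $\bW\bW^\T = (\bW\bW^\T - \tr(\sw)\bI_n) + \tr(\sw)\bI_n$. The centred piece is controlled by the second bound of Lemma \ref{lem_W}, contributing of order $\sqrt{\|\sw\|_\op\tr(\sw)/n}/\lambda_K$, which the hypothesis $\lambda_K\ge 8\errw(1\vee 6\sigma^2\kappa)$ shows is within the target. The diagonal $(np)^{-1}\tr(\sw)\wh\bZ D_K^{-2}$ piece is where I expect the main difficulty: a crude operator-norm bound $\tr(\sw)/(\sqrt n\,\lambda_K)$ can exceed the target once $\mathrm{r_e}(\sw)\gg n$. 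To handle this I plan to substitute $\wh\bZ = \bZ H + (\wh\bZ - \bZ H)$ in the right-hand side of the decomposition, collect the part proportional to $\bZ H$ (which is of the desired order, using $\|\bZ H D_K^{-2}\|_\op\le \|\wh\bZ D_K^{-2}\|_\op + \|(\wh\bZ-\bZ H)D_K^{-2}\|_\op$), and close the resulting self-referencing inequality
\[
\|\wh\bZ - \bZ H\|_\op \le C_1\sqrt{n\|\sw\|_\op\lambda_1}/\lambda_K + C_2\sqrt{\kappa/\xi}\,\|\wh\bZ - \bZ H\|_\op
\]
by invoking $\lambda_K\ge 8\errw(1\vee 6\sigma^2\kappa)$, which forces the coefficient $C_2\sqrt{\kappa/\xi}$ to be bounded away from one and so can be absorbed into the left-hand side.

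The overall probability $1-3\exp(-n)$ collects the failure probabilities of $\cE_Z$ (Lemma \ref{lem_Z}), $\cE_W$ (Lemma \ref{lem_op_norm}), and the second concentration in Lemma \ref{lem_W}. The technical innovation relative to the factor-model literature \citep{Bai-factor-model-03,Bai-Ng-forecast} is the consistent use of $\|\sz^{1/2}A^\T\|_\op=\sqrt{\lambda_1}$ and $\|\bW V_A\|_\op$ in place of $\sigma_1(A)$ and $\|\bW\|_\op$, which is exactly what allows one to abandon the stringent $\lambda_1\asymp\lambda_K\asymp p$ regime and to let $\lambda_1$ and $\lambda_K$ enter the final rate separately through $\sqrt{\lambda_1/\lambda_K}=\sqrt\kappa$.
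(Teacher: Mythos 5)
Your starting identity $\wh\bZ - \bZ H = (np)^{-1}\bigl[\bZ A^\T\bW^\T + \bW A\bZ^\T + \bW\bW^\T\bigr]\wh\bZ D_K^{-2}$ is correct, and your treatment of the two cross terms (inserting $V_A$ so that $\|\bW V_A\|_\op\lesssim\sqrt{n\|\sw\|_\op}$ from Lemma~\ref{lem_W} replaces the much larger $\|\bW\|_\op$) is exactly the sharpening the paper needs. The genuine gap is in the $\bW\bW^\T$ term: the leftover diagonal piece $(np)^{-1}\tr(\sw)\wh\bZ D_K^{-2}$ is \emph{not} controllable by the bootstrap you propose.

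Writing $\wh\bZ = \bZ H + (\wh\bZ - \bZ H)$ inside this piece produces two parts. The part $(np)^{-1}\tr(\sw)(\wh\bZ - \bZ H)D_K^{-2}$ has operator-norm coefficient $4\tr(\sw)/(n\lambda_K)\le 1/2$ by $\lambda_K\ge 8\errw\ge 8\tr(\sw)/n$, so it can be absorbed into the left-hand side. But the part $(np)^{-1}\tr(\sw)\bZ H D_K^{-2}$ is not ``of the desired order'': the triangle inequality $\|\bZ H D_K^{-2}\|_\op\le\|\wh\bZ D_K^{-2}\|_\op + \|(\wh\bZ-\bZ H)D_K^{-2}\|_\op$ you cite is circular, since the first summand is exactly what generated the term you are trying to kill. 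The non-circular bound, using $\|\bZ\sz^{-1/2}\|_\op\le\sqrt{2n}$ on $\cE_Z$, $\|\sz^{1/2}H\|_\op\lesssim\kappa$, and $\|D_K^{-2}\|_\op\le 4p/\lambda_K$ from Lemma~\ref{lem_D_H}, gives
\[
(np)^{-1}\tr(\sw)\|\bZ H D_K^{-2}\|_\op \ \lesssim\ \frac{\kappa\,\tr(\sw)}{\sqrt n\,\lambda_K}.
\]
Comparing against the target $\sqrt{n\|\sw\|_\op\lambda_1}/\lambda_K$, the ratio equals $({\rm r_e}(\sw)/n)\sqrt{\kappa/\xi}$, which under the hypothesis $\lambda_K\gtrsim\kappa\errw$ is only controlled as $\lesssim\sqrt{{\rm r_e}(\sw)/n}$; this diverges in the over-parametrized regime ${\rm r_e}(\sw)\gg n$, so the self-referencing inequality does not close.

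The device that avoids this entirely is to perform the centering \emph{inside} the eigen-identity rather than after the fact: since $(np)^{-1}\bX\bX^\T\wh\bZ = \wh\bZ D_K^2$, one also has $(np)^{-1}\bigl(\bX\bX^\T - \tr(\sw)\bI_n\bigr)\wh\bZ = \wh\bZ J$ with $J = D_K^2 - (np)^{-1}\tr(\sw)\bI_K$, and the hypothesis $\lambda_K\ge 8\errw$ ensures $\sigma_K(J)\ge\lambda_K/(8p)$ so $J$ is as well-conditioned as $D_K^2$. Expanding $\bX = \bZ A^\T + \bW$ in \emph{this} identity, the $\tr(\sw)\bI_n$ cancels against the $\tr(\sw)\bI_n$ in $\bW\bW^\T$ from the start, and the decomposition only involves $\bZ A^\T\bW^\T$, $\bW A\bZ^\T$, and the already-centered $\bW\bW^\T - \tr(\sw)\bI_n$, each right-multiplied by $\wh\bZ J^{-1}$. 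The problematic diagonal piece never appears. With that modification, your estimates for the three remaining terms go through essentially unchanged (replacing $\|D_K^{-2}\|_\op$ by $\|J^{-1}\|_\op\le 8p/\lambda_K$) and yield the claimed rate.
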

\begin{proof} 
We work on the event $\cE_Z \cap \cE_W$.
%which further ensures that the results of the previous Lemma \ref{lem_D_H} hold. 
First, by the SVD of $\bX= UD V^\T/ \sqrt{np}$, we find  the following identity
\[
{1\over np}\bX\bX^\T \wh\bZ = UD^2U^\T U_K \sqrt{n} = \sqrt{n}U_KD_K^2 = \wh\bZ D_K^2.
\]
Further observe that 
\[
{1\over np} \left(\bX\bX^\T - \tr(\sw) \bI_n\right) \wh\bZ   = \wh \bZ \left(
D_K^2 -{\tr(\sw)\over np}\bI_K 
\right).
\]
Define the matrix 
\[
J =  D_K^2 -{\tr(\sw)\over np} \bI_K 
\]
and note that, by our assumption $\lambda_K\ge 8 \errw$,
\begin{align*}
\sigma_K(J) &\ge d_K^2 - {\tr(\sw)\over np} &&\text{by the definition of $J$}\\
&
\ge {1\over p}\left(
{\lambda_K\over 4} - {\tr(\sw)\over n}	\right)  &&\text{using Lemma \ref{lem_D_H} with   $\lambda_K \ge  48\sigma^2 \kappa \errw$}\\
&	\ge  {\lambda_K \over8 p} &&\text{ since  $\lambda_K \ge 8\errw \ge 8\tr(\sw)/n$}
\end{align*}
Plugging in $\bX = \bZ A^\T + \bW$ and rearranging terms yield 
\begin{align*}
\wh \bZ - \bZ  H = {1\over np}\left[
\bZ A^\T \bW^\T +  \bW A \bZ^\T + \left(
\bW\bW^\T - \tr(\sw) \bI_n
\right)
\right]\wh\bZ J^{-1}.
\end{align*}
The previous two displays and the inequality $\|\wh \bZ\|_\op   \le \sqrt{n}$ (as $U_K\in \cO_{p\times n}$), further imply
\[
\|\wh \bZ - \bZ H\|_\op \le {8\over \lambda_K\sqrt{n}}\left(2 \|\bZ A^\T \bW\|_\op + \left\|\bW\bW^\T - \tr(\sw)  \bI_n\right\|_\op\right).
\]
On the event $\cE_Z$,    Lemma \ref{lem_W} yields, with probability $1-3\exp(-n)$, 
\[
{1\over \sqrt n}\|\bZ A^\T \bW^\T\|_\op   \le 2 \|\bW V_A\|_\op~ \sigma_1(A \sz^{1/2}) \le 2 \sqrt{14 \sigma^2 n\lambda_1 \|\sw\|_\op}
\]
and 
\[
{1\over \sqrt n}\left\|\bW\bW^\T - \tr(\sw)  \bI_n\right\|_\op \le C \sigma^2 \sqrt{\|\sw\|_\op \tr(\sw)}.
\]
The result follows after we use $\lambda_1 \ge \lambda_K \ge \errw/8 \ge \tr(\sw)/(8n)$ and collect terms.
\end{proof}

\subsection{Proof of Proposition \ref{prop_intercept}}
\label{ss_prop_intercept}
\begin{proof}
Set $\Delta \wt\mu= (\wt\mu_0+\wt\mu_1)/2$ and  $\bar{\alpha}=(\a_0+\a_1)/2$. 
We first recall \citet[Fact 1 in Appendix C]{BW22} that, for any semi-positive definite matrix $M$, we have 
%$$    \pi_0\a_0^\T M \a_0 + \pi_1\a_1^\T M \a_1 = \pi_0\pi_1 (\a_1-\a_0)^\T M (\a_1-\a_0)$$
%so that
\begin{align}\label{norms}
\a_0^\T M \a_0 +  \a_1^\T M \a_1 \le \max(\pi_0,\pi_1) (\a_1-\a_0)^\T M (\a_1-\a_0)
\end{align}
and the identity \cite[(A.11) in Appendix A]{BW22}  
\begin{align}\label{norm}
(\a_1-\a_0)^\T \sz^{-1} (\a_1-\a_0) &= \frac{\Dt^2}{1+\pi_0\pi_1 \Dt^2 } \le \frac{1}{\pi_0\pi_1} 
\end{align}
which is mainly a consequence of    Woodbury's formula.
By the triangle inequality, we find 
\begin{align}\label{normm}
|\wt \beta_0 - \beta_0| &\le \left|
\bar \a^\T (A ^\T\wh \theta -\beta) +  \wt\beta_0-\beta_0
\right| +\left |\bar \a^\T (A ^\T\wh \theta -\beta) \right| \nonumber\\
&\le \left|
\bar \a^\T (A ^\T\wh \theta -\beta) +  \wt\beta_0-\beta_0
\right| + {1 \over \sqrt{\pi_0\wedge \pi_1}}  \|A ^\T\wh \theta -\beta\|_{\sz}
\end{align} 
The second inequality used
\begin{align*}
|\bar \a^\T (A ^\T\wh \theta -\beta)| &\le {1\over 2}\left(
\|\sz^{-1/2}\a_1\|  + \|{\sz^{-1/2}}\a_0\|
\right) \|A ^\T\wh \theta -\beta\|_{\sz} &&\text{by Cauchy-Schwarz}\\
&\le {1\over 2}\left(\sqrt{\pi_1} + \sqrt{\pi_0}\right)\|\a_1-\a_0\|_{\sz^{-1}} \|A ^\T\wh \theta -\beta\|_{\sz}&&\text{by (\ref{norms})}\\
&\le  {1 \over \sqrt{\pi_0\wedge \pi_1}}  \|A ^\T\wh \theta -\beta\|_{\sz}&&\text{by (\ref{norm})}.
\end{align*}
% where the last two steps use, respectively, \cite[Fact 1 in Appendix C and (A.11) in Appendix A]{BW22}.
%On the other hand, 
For the first term on the right of (\ref{normm}), we notice that $\bar \a^\T (A ^\T\wh \theta -\beta) +  \wt\beta_0-\beta_0 = R_1 + R_2$ with 
\begin{align*}
R_1 &=   \left(   A\bar\a    -   \Delta\wt\mu \right) ^\T \wh \theta \\
R_2 &= \left \{ 1 - (\widetilde \mu_1-\widetilde \mu_0)^\T \wh\theta \right \} \wh\pi_0\wh\pi_1 \log\frac{\wh\pi_1}{\wh \pi_0} -
\left   \{ 1 - (\a_1-\a_0)^\T \beta \right \} \pi_0\pi_1 \log\frac{\pi_1}{ \pi_0}
\end{align*}
For $R_2$, we notice, using the same tedious calculation as in \citet[Proof of Lemma 15]{BW22}, that 
\begin{align*}
|R_2| &\le   | (\wt \mu_0- \wt\mu_1)^\T \wh \theta - (\a_0-\a_1) ^\T \beta| + \frac{3}{\wh\pi_0\wedge \wh \pi_1}  %+\frac{1}{\wh \pi_0 \pi_1 \vee \pi_0\wh \pi_1} \right) 
| \wh \pi_0 - \pi_0 |. 
%     \\
%  {\marten   &\le 2   | (\wt \mu_0- \wt\mu_1)^\T \wh \theta - (\a_0-\a_1) ^\T \beta| + \left(4 +\frac{2}{  \pi_0  \wedge  \pi_1} \right) | \wh \pi_0 - \pi_0 |}
\end{align*} 
For the first term on the right, we have
\begin{align*}
&	|(\alpha_0-\alpha_1)^\T \beta  - (\wt\mu_0 - \wt\mu_1)^\T \wh\theta| \\
& \le |(\alpha_0 - \a_1)^\T (\beta -A^\T \wh\theta)|  + | (\alpha_0-\alpha_1)^\T A^\T \wh\theta - (\wt\mu_0 - \wt\mu_1)^\T \wh\theta|\\
&\le   \|\a_1-\a_0\|_{\sz^{-1}} \| A^\T\wh\theta -\beta\|_{\sz} + | (\alpha_0-\alpha_1)^\T A^\T \wh\theta - (\wt\mu_0 - \wt\mu_1)^\T \wh\theta|\\
&\le {1\over \sqrt{\pi_0\pi_1}} \| A^\T\wh\theta -\beta\|_{\sz} + |R_1|.
\end{align*}
Hence, 		
\begin{align} \label{new1}
\left|  \bar \a^\T (A ^\T\wh \theta -\beta) +  \wt\beta_0-\beta_0 \right| 
& \le  
{\| A^\T\wh\theta -\beta\|_{\sz}\over \sqrt{\pi_0\pi_1}} + 2\left| \left(   A\bar\a    -   \Delta\wt\mu \right) ^\T \wh \theta  \right|  + {3| \wh \pi_0 - \pi_0 |\over \wh \pi_0 \wedge\wh \pi_1}.
\end{align}		
We have bounded the first term $ \| A^\T\wh\theta -\beta\|_{\sz}$ in Theorem \ref{thm_rate_Atheta} above. By Hoeffding's inequality,
$ \wh\pi_k \ge \pi_k/2$ with probability larger than $ 1- \exp(-n\pi_k^2 / 8)$, for $k\in \{0,1\}$. By further applying Hoeffding's inequality, we bound
the third term on the right in (\ref{new1}) by %, we use 
%% that $\max\{\wh \pi_0,   \wh \pi_1\} \ge 1/2$, and
%%% Hoeffding's inequality,  
%%			\[ \PP\left\{  \wh \pi_0 - \pi_0  \ge 2t/\sqrt{n} \right\} \le \exp(-t^2/2)
%%			\]
%		We then have 
%\begin{align*}
%    \PP\left\{ \max\{\wh \pi_0,   \wh \pi_1\}\le \frac{\pi_0 \vee \pi_1 }{2} \right\}  &\le 
%     \PP\left\{  \wh \pi_0 \le \frac{\pi_0}{2} \right\} +  \PP\left\{  \wh \pi_1 \le \frac{\pi_1}{2} \right\} \\
%    &\le \exp(-n\pi_0^2/8) + \exp(-n\pi_1^2/8)
%\end{align*}	 
%%and
%\begin{align*}
%    \PP\left\{ \max\{\pi_1\wh \pi_0,\pi_0\wh \pi_1\}\le \frac{\pi_0\pi_1}{2} \right\}  &\le 
%     \PP\left\{  \wh \pi_0 \le \frac{\pi_0}{2} \right\} +  \PP\left\{  \wh \pi_1 \le \frac{\pi_1}{2} \right\} \\
%    &\le \exp(-n\pi_0^2/8) + \exp(-n\pi_1^2/8).
%\end{align*}
%%		Hence, the third term on the right in (\ref{new1}) can be bounded by
\begin{eqnarray}\label{new2} 
{6\over \pi_0\wedge \pi_1} \frac{2t}{\sqrt{n}}
\end{eqnarray}
with probability $1-2\exp(-t^2/2)-2\exp(-n (\pi_0\wedge \pi_1)^2/8)$. %%%% -  2\exp(-c n)$.

We bound the second term	on the right in (\ref{new1})	as follows.
For each $k\in \{0,1\}$,
\begin{equation*}\label{def_W_bar}
\wt \mu_k = {1\over \wt n_k}\sum_{i=1}^{n'} X_i' \1\{ Y_i' = k\} = {1\over\wt  n_k}\sum_{i=1}^{n'} (AZ_i' +W_i')\1\{Y_i' = k\} := A\wt \a_k +  \bar W_{(k)}^\prime,
\end{equation*} hence
we can write
\begin{align*}
\left| (A\a_k - \wt \mu_k )^\T   \wh\theta\, \right|  
%	\left|(\a_k - \wh \a_k)^\T A^\T \wh \theta\,  \right| + \left| (\bar W_{(k)}' )^\T \wh\theta\, \right| \\
&\le 	\left|(\a_k - \wt \a_k)^\T \beta \right| + \left|  (\wt\alpha_k-\a_k) ^\T( A^\T\wh\theta-\beta) \right| + \left| (\bar W_{(k)}')^\T \wh\theta\, \right|.
\end{align*}
Next, we use the normal assumption (iv),  the independence of $\wt \a_k$ and $\wh \theta$ and the fact that
$\sz- \szy = \pi_0\pi_1(\a_1-\a_0)(\a_1-\a_0)^\T$ is positive definite to deduce that the first two terms are bounded by 
\[ \frac{t} {\sqrt{\wt n_k}}    \left({  {\|\beta\|_{\sz} +  \| A^\T \wh \theta - \beta \|_{\sz} } }   \right)
\]
with probability $1-4\exp(-t^2/2)$.
Since 	$ \bar W_{(k)}' $ is subGaussian with parameter $\sigma (\wh\theta ^\T \sw\wh \theta / \wt n_k)^{1/2}$
and is independent of $\wh \theta$ and the labels $Y_1',\ldots,Y_{n'}'$, we have 
\[ \left| (\bar W_{(k)}')^\T \wh\theta\, \right| \le t \sigma \| \wh \theta\|_{\sw} / \sqrt{{  \wt n_k}}.\]
with probability $1-2\exp(-t^2/2)$. 
Now use the inequality 
\begin{align*}
\| \beta\|_{\sz}^2 =  ({\pi_0\pi_1})^2 \| \a_1-\a_0\|_{\sz^{-1}} ^2 \le {\pi_0\pi_1}\le 1/4,
\end{align*}
using  (\ref{norm}), %%\citet[display (A.11)]{BW22},
% $ \| \beta\|_{\sz}^2 \le  {\pi_0\pi_1}$, see \cite[{\marten where}]{BW22}
and the bound
\begin{align*}
\| \wh \theta\, \|^2_{\sw} & \le \|\sw\|_\op \| \bX^+ \bY \|^2 \le  \|\sw\|_\op \frac{ \| \bY\|^2 }{ \sigma_n^2 (\bX) } \le \frac{8 n}{ {\rm r_e}(\sw)} \to0
\end{align*}
The last inequality  holds with probability at least $1- 3\exp(-C'n)$, by Proposition \ref{prop_hattheta}, for some $C'>0$. By Hoeffding's inequality,
$ \wt n_k \le n' \pi_k/2$ with probability larger than $ 1- \exp(-n'\pi_k^2 / 8)$. We conclude that for any $c>0$,  the combination of above bounds with $n'\asymp n$, $t=C (\log n)^{1/2}$ for  some finite $C=C(c,\pi_0,\pi_1,\sigma)$ large enough, yields  
\begin{align}
\label{new3}
\PP\left\{ 	\left| (A\a_k - \wt \mu_k )^\T   \wh\theta\, \right|  \ge C \sqrt{ \log n\over n} \left(1 + \|A^\T \wh\theta -\beta\|_{\sz}\right)  \right\} \lesssim n^{-c}
\end{align}
Finally, (\ref{normm}), (\ref{new1}), (\ref{new2}) and (\ref{new3}) prove 
our result.
\end{proof}

\section{Auxiliary lemmas}\label{app_aux}

We restate the following lemmas which are proved in \cite{BW22}. 

\begin{lemma}\cite[Lemma 31]{BW22}\label{lem_Z}
Under assumptions {\rm (ii)} and {\rm (iv)} and  $K\log n\ll   n$,  for any constant $c>0$, 
\[
\PP\left\{		\frac12 \le {1\over n}\sigma_K^2(\bZ \sz^{-1/2})  \le {1\over n}\sigma_1^2(\bZ  \sz^{-1/2}) \le 2\right\} = 1- \cO(n^{-c}).
\]
%% for some constants $0<c_Z\le C_Z<\i$ depending on $c$ only.
\end{lemma}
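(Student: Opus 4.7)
\textbf{Proof proposal for Lemma \ref{lem_Z}.} The plan is to recognize that $\sigma_k^2(\bZ \sz^{-1/2})/n$ are precisely the eigenvalues of the whitened sample covariance
\[
\wh S := \frac{1}{n} \sz^{-1/2} \bZ^\T \bZ \sz^{-1/2} = \frac{1}{n}\sum_{i=1}^n \tilde Z_i \tilde Z_i^\T,\quad \tilde Z_i := \sz^{-1/2} Z_i,
\]
and that $\EE[\wh S] = \sz^{-1/2}\EE[Z_1 Z_1^\T]\sz^{-1/2} = \bI_K$ since $\EE[Z]=\0_K$. It therefore suffices to show $\|\wh S - \bI_K\|_\op \le 1/2$ with probability $1-\mathcal{O}(n^{-c})$, which immediately yields the claim.

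The key preparatory step is to verify that the centered rows $\tilde Z_i$ are sub-Gaussian with a constant parameter. Conditionally on $Y_i = k$, one has $\tilde Z_i \mid Y_i = k \sim N_K(\sz^{-1/2} \a_k,\, \sz^{-1/2}\szy \sz^{-1/2})$. The covariance satisfies $\sz^{-1/2}\szy\sz^{-1/2}\preceq \bI_K$ because $\sz = \szy + \pi_0\pi_1(\a_1-\a_0)(\a_1-\a_0)^\T \succeq \szy$. For the mean, the relation $\EE[Z]=\0_K$ gives $\a_0 = -(\pi_1/\pi_0)\a_1$, hence $\a_1 - \a_0 = \a_1/\pi_0$, and the Woodbury identity (already invoked in the proof of Lemma \ref{lem_beta_0_sign}) yields $(\a_1-\a_0)^\T \sz^{-1}(\a_1-\a_0) = \Dt^2/(1+\pi_0\pi_1\Dt^2) \le 1/(\pi_0\pi_1)$. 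Consequently $\|\sz^{-1/2}\a_k\|_2^2 \le \pi_{1-k}/\pi_k = \mathcal{O}(1)$. Combining these bounds, each $\tilde Z_i$ is sub-Gaussian with parameter of order $1$ by assumption (vi).

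With this in hand, the conclusion follows from a standard concentration inequality for sample covariance matrices of sub-Gaussian vectors: via an $\epsilon$-net on $\cS^{K-1}$ together with the Hanson--Wright inequality (or equivalently the Matrix Bernstein argument in Vershynin's book), one obtains
\[
\PP\left\{ \|\wh S - \bI_K\|_\op \ge C\sqrt{\frac{K+t}{n}} + C\,\frac{K+t}{n} \right\} \le 2e^{-t}
\]
for every $t>0$ and a universal constant $C$ depending only on the sub-Gaussian parameter. Taking $t = (c+1)\log n$ and using the hypothesis $K\log n \ll n$ makes the right-hand bound smaller than $1/2$ for $n$ large enough, which completes the proof.

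The only nontrivial piece is the boundedness of the whitened class means, for which the Woodbury identity via the mixture decomposition of $\sz$ is essential; everything else is a direct application of standard sub-Gaussian sample covariance concentration.
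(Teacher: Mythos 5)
Your proof is correct. Note that the paper itself gives no proof of this lemma—it is stated purely as a citation to Lemma~31 of \cite{BW22}—so there is nothing in the present manuscript to compare against line by line, but your argument is the natural one: rewrite the singular values of $\bZ\sz^{-1/2}$ as eigenvalues of the whitened empirical second-moment matrix $\wh S$, which has $\EE[\wh S]=\bI_K$ because $\sz=\EE[ZZ^\T]$; then show the whitened rows $\tilde Z_i$ are sub-Gaussian with an $\mathcal{O}(1)$ parameter by combining $\sz^{-1/2}\szy\sz^{-1/2}\preceq \bI_K$ (from $\sz=\szy+\pi_0\pi_1(\a_1-\a_0)(\a_1-\a_0)^\T$) with $\|\sz^{-1/2}\a_k\|_2^2\le \pi_{1-k}/\pi_k=\mathcal{O}(1)$ (from $\a_k=\pm\pi_{1-k}(\a_1-\a_0)$ and the Woodbury bound $\|\a_1-\a_0\|_{\sz^{-1}}^2\le 1/(\pi_0\pi_1)$); and finally invoke the standard isotropic sub-Gaussian sample-covariance concentration with $t\asymp\log n$, which indeed gives $\|\wh S-\bI_K\|_\op\le 1/2$ with probability $1-\mathcal{O}(n^{-c})$ under $K\log n\ll n$. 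One minor caveat worth mentioning explicitly: the $\tilde Z_i$ are not mean-zero, but the Vershynin-type result you cite only requires isotropy ($\EE\tilde Z_i\tilde Z_i^\T=\bI_K$) and a bounded sub-Gaussian norm of each row, both of which you have verified, so the argument goes through.
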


\begin{lemma}\cite[Lemma 32]{BW22}\label{lem_ZWVA}
Under assumptions {\rm (i) -- (v)}, for any $c>0$, there exists a 
$C<\infty$ such that
\begin{align*}
&\PP\left\{
{1\over n}\|\sz^{-1/2} \bZ^\T \bW V_A\|_{\op} \le C\sqrt{\|\sw\|_\op} \sqrt{K\log n\over n} 
\right\} = 1-\cO(n^{-c}).
\end{align*}
\end{lemma}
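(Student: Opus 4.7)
My plan is to bound the operator norm of $M := \sz^{-1/2}\bZ^\T \bW V_A \in \RR^{K\times K}$ by a standard discretization combined with the conditional sub-Gaussianity of $\bW$ given $\bZ$, exploiting that $V_A\in\cO_{p\times K}$ has orthonormal columns so that the sub-Gaussian scale of the noise entering the bilinear form is controlled by $\|\sw\|_\op$ alone.

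Concretely, I would first fix a minimal $1/4$-net $\cN$ of $\cS^{K-1}$, which has cardinality at most $9^K$, so that $\|M\|_\op \le 2\max_{u,v\in\cN} u^\T M v$ by the usual net estimate. For each pair $(u,v)$, I rewrite $u^\T M v = a^\T \bW b$ with $a := \bZ\sz^{-1/2}u \in \RR^n$ depending only on $\bZ$ and $b := V_A v \in \RR^p$ satisfying $\|b\|_2 = 1$. Under assumption (v), $W_i = \sw^{1/2}V_i$ with $V_i$ isotropic sub-Gaussian of parameter $\sigma$, whence $W_i^\T b = V_i^\T \sw^{1/2} b$ is sub-Gaussian with parameter $\sigma\sqrt{b^\T\sw b} \le \sigma\sqrt{\|\sw\|_\op}$. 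By independence of the $W_i$ and of $\bW$ from $\bZ$, the scalar $a^\T\bW b$ conditional on $\bZ$ is sub-Gaussian with parameter $\sigma\sqrt{\|\sw\|_\op}\|a\|_2$, giving
\[
\PP\bigl(|a^\T \bW b| > t \,\big|\, \bZ\bigr) \le 2\exp\bigl(-t^2/(2\sigma^2\|\sw\|_\op\|a\|_2^2)\bigr).
\]
By Lemma \ref{lem_Z} (applicable because $K\log n\ll n$ is in force), on an event $\cE_Z$ of probability $1-\cO(n^{-c})$ one has $\|a\|_2^2 \le \sigma_1^2(\bZ\sz^{-1/2}) \le 2n$ uniformly in $u\in\cS^{K-1}$. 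Union-bounding over the $81^K$ pairs in $\cN\times\cN$ and choosing $t$ proportional to $\sigma\sqrt{n\|\sw\|_\op(K+\log n)}$ with a constant depending on $c$, I obtain $\|M\|_\op \lesssim \sigma\sqrt{n\|\sw\|_\op K\log n}$ on $\cE_Z$ with the required probability, which after dividing by $n$ is precisely the claimed bound.

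The only real subtlety lies in the bilinear-form step: one must use that $V_A$ has orthonormal columns so that $\|b\|_2=1$, and thus the sub-Gaussian proxy variance of each coordinate of $\bW b$ is bounded by the \emph{operator} norm $\|\sw\|_\op$ rather than the much larger trace $\tr(\sw)$. Any cruder argument that first pulls out $\|\bW\|_\op$ and then $\|V_A\|_\op$ lets the ambient dimension $p$ re-enter through $\tr(\sw)$, destroying the $p$-free rate; preserving the bilinear-form decomposition before applying concentration is what makes the advertised $\sqrt{K\log n/n}$ scaling possible.
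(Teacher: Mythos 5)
Your proof is correct, and it follows what is essentially the canonical route for this kind of statement. Note that the paper itself does not supply a proof here: the lemma is imported verbatim from \citet[Lemma 32]{BW22}, so there is no in-paper argument to compare against. Your argument is nonetheless the natural one, and each step checks out: the $1/4$-net of $\cS^{K-1}$ with cardinality $\le 9^K$ converts the operator norm of the $K\times K$ matrix $M=\sz^{-1/2}\bZ^\T\bW V_A$ into a maximum of $\le 81^K$ bilinear forms $a^\T \bW b$ with $a=\bZ\sz^{-1/2}u$, $b=V_A v$, $\|b\|_2=1$; the independence of $\bW$ from $\bZ$ lets you condition and treat $a$ as fixed; assumption (v) gives $W_i^\T b = V_i^\T \sw^{1/2}b$ sub-Gaussian with proxy $\sigma^2 b^\T\sw b \le \sigma^2\|\sw\|_\op$; and on $\cE_Z$, $\|a\|_2\le\sigma_1(\bZ\sz^{-1/2})\le\sqrt{2n}$ uniformly in $u$. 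Union-bounding and taking $t\asymp\sigma\sqrt{n\|\sw\|_\op(K+\log n)}$ then delivers the claimed $C\sqrt{\|\sw\|_\op}\sqrt{K\log n/n}$ after dividing by $n$, with the failure probability $\cO(n^{-c})$ absorbing the $81^K$ factor through the constant $C=C(c,\sigma)$.

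Your closing paragraph correctly identifies the crux: one \emph{must} keep the bilinear form intact before applying concentration, so that the relevant variance proxy is $b^\T\sw b\le\|\sw\|_\op$ rather than $\tr(\sw)$. The naive alternative $\|\sz^{-1/2}\bZ^\T\|_\op\cdot\|\bW V_A\|_\op$ gives only $\cO(\sqrt{\|\sw\|_\op})$ after normalizing by $n$, losing the decisive $\sqrt{K\log n / n}$ factor precisely because it ignores the independence of $\bZ$ and $\bW$. Two small remarks: (1) your appeal to Lemma \ref{lem_Z} implicitly adds $K\log n\ll n$ to the hypothesis list, a condition not written in the lemma's header but part of the paper's standing assumptions (\ref{assumps}); this is a harmless bookkeeping discrepancy, but worth stating explicitly. (2) When scaling assumption (v) from unit vectors to general $w$, one uses $\EE[\exp(w^\T V)]\le\exp(\sigma^2\|w\|_2^2/2)$, which is the standard reading of the assumption even though the paper literally writes the MGF bound only at $\|u\|_2=1$; the paper itself relies on this reading elsewhere (e.g.\ Proposition \ref{prop:GsG} and Lemma \ref{lem_op_norm}), so you are consistent with its intent.
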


\begin{lemma}\cite[Lemma 18]{BW22}\label{lem_ZY_beta}
Under assumptions {\rm (ii)} and {\rm (iv)} and $K\log n\ll   n$, for any $c>0$, there exists a 
$C<\infty$ such that
\[
\PP\left\{\left\|\bZ^+ \bY - \beta\right\|_{\sz}\le C \sqrt{K\log n\over n}\right\} = 1-\cO(n^{-c}).
\]
\end{lemma}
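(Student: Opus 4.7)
The plan is to reduce the claim to two concentration bounds: one for the ``linear'' term $\bZ^\T \bY/n - \EE[ZY]$ and one for the ``quadratic'' term $(\bZ^\T\bZ/n - \sz)\beta$, both measured in the $\sz^{-1/2}$-norm. The starting point is Proposition \ref{prop_ls_rule}, which gives the identification $\beta = \sz^{-1}\EE[ZY]$, and Lemma \ref{lem_Z}, which guarantees on a high-probability event $\cE_Z$ that $\bZ$ has full column rank $K$ and $\|\sz^{1/2}(\bZ^\T\bZ/n)^{-1}\sz^{1/2}\|_{\op}\le 2$. On $\cE_Z$ one has $\bZ^+ = (\bZ^\T\bZ)^{-1}\bZ^\T$, so writing $\widehat S = \bZ^\T\bZ/n$ and $\widehat m = \bZ^\T\bY/n$ gives
\[
\bZ^+\bY-\beta \;=\; \widehat S^{-1}\bigl(\widehat m - \EE[ZY]\bigr)\; -\; \widehat S^{-1}(\widehat S - \sz)\beta,
\]
whence $\|\bZ^+\bY-\beta\|_{\sz} \le 2\|\sz^{-1/2}(\widehat m-\EE[ZY])\|_2 + 2\|\sz^{-1/2}(\widehat S-\sz)\beta\|_2$.

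For the linear term, conditional on $Y_i$ the vector $\sz^{-1/2}Z_i$ is Gaussian with mean $\sz^{-1/2}\a_{Y_i}$ and covariance $\sz^{-1/2}\szy\sz^{-1/2}\preceq \bI_K$ (since $\sz = \szy + \pi_0\pi_1(\a_1-\a_0)(\a_1-\a_0)^\T \succeq \szy$), and $Y_i\in\{0,1\}$ is bounded. Hence $\sz^{-1/2}Z_iY_i$ is a subGaussian random vector in $\RR^K$ with bounded $\psi_2$-norm, and a standard $\eps$-net argument (or Bernstein for the squared norm) yields
\[
\bigl\|\sz^{-1/2}(\widehat m - \EE[ZY])\bigr\|_2 \;\lesssim\; \sqrt{\frac{K}{n}} + \sqrt{\frac{\log(1/\delta)}{n}}
\]
with probability at least $1-\delta$; take $\delta = n^{-c}$ to get the $\sqrt{K\log n/n}$ rate.

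For the quadratic term, note that $\|\beta\|_{\sz}^2 = (\pi_0\pi_1)^2\|\a_1-\a_0\|_{\sz^{-1}}^2 \le \pi_0\pi_1 \le 1/4$ by the identity (\ref{norm}) cited in the proof of Proposition \ref{prop_intercept}. Conditional on $Y_i$, the centered summand $\sz^{-1/2}(Z_iZ_i^\T-\sz)\beta$ is a product of two correlated Gaussian quantities: a scalar $Z_i^\T\beta$ of variance $\le\|\beta\|_{\sz}^2\le 1/4$, and a $K$-vector $\sz^{-1/2}Z_i$ of bounded covariance. So each coordinate is subexponential with bounded $\psi_1$-norm. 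A Hanson--Wright / vector Bernstein bound, combined with a discretization of the unit sphere in $\RR^K$, produces
\[
\bigl\|\sz^{-1/2}(\widehat S - \sz)\beta\bigr\|_2 \;\lesssim\; \sqrt{\frac{K}{n}} + \frac{\log(1/\delta)}{\sqrt{n}},
\]
with probability at least $1-\delta$, and again setting $\delta = n^{-c}$ yields the desired rate under $K\log n \ll n$.

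The main technical step is the second (quadratic) bound: while the linear term reduces to a textbook subGaussian vector concentration, the quadratic piece requires a Bernstein-type inequality for sums of heavier-tailed (subexponential) vectors, and one must exploit the boundedness of $\|\beta\|_{\sz}$ to keep the subexponential norm independent of dimension. Conditioning on the labels $\bY$ and then integrating over them (or handling the label-dependent mean separately via assumption (iv)) is the cleanest route, since it lets one invoke Gaussian Hanson--Wright directly. Combining the two bounds on $\cE_Z$, whose complement has probability $\cO(n^{-c})$ by Lemma \ref{lem_Z}, completes the proof.
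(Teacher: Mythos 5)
The paper does not contain its own proof of this lemma; it is cited verbatim from \citet[Lemma 18]{BW22}, so there is no internal argument to compare against. On its own merits, your proposal follows the natural and standard route and is essentially correct: the identity $\beta = \sz^{-1}\EE[ZY]$ turns the problem into two concentration bounds, the event $\cE_Z$ from Lemma~\ref{lem_Z} controls $\|\sz^{1/2}\widehat S^{-1}\sz^{1/2}\|_\op$, and the linear and quadratic pieces are handled by sub-Gaussian and sub-exponential vector concentration respectively. The algebraic decomposition
\[
\bZ^+\bY-\beta = \widehat S^{-1}(\widehat m - \EE[ZY]) - \widehat S^{-1}(\widehat S-\sz)\beta
\]
is exactly the residual representation $(\bZ^\T\bZ)^{-1}\bZ^\T(\bY-\bZ\beta)$ with $\EE[Z(Y-Z^\T\beta)]=0$, and the use of \eqref{norms}--\eqref{norm} to control $\|\sz^{-1/2}\a_k\|$ and $\|\beta\|_{\sz}$ is the right way to keep the $\psi_1$/$\psi_2$ parameters dimension- and model-free.

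One imprecision worth fixing: your stated bound for the quadratic term,
\[
\bigl\|\sz^{-1/2}(\widehat S-\sz)\beta\bigr\|_2 \lesssim \sqrt{\tfrac{K}{n}} + \tfrac{\log(1/\delta)}{\sqrt{n}},
\]
does not quite deliver $C\sqrt{K\log n/n}$ when $K$ is small; with $\delta = n^{-c}$ and, say, $K=1$, the second term is $\log n/\sqrt n$, which exceeds $\sqrt{\log n/n}$ by a $\sqrt{\log n}$ factor. The correct conclusion from the Bernstein--net argument in the regime $K+\log(1/\delta)\lesssim n$ is $\sqrt{(K+\log(1/\delta))/n}$ (the sub-exponential ``linear'' part of the exponent is never active because the deviation $t$ is $o(1)$). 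With that corrected form, $\sqrt{(K+c\log n)/n} \le C\sqrt{K\log n/n}$ for all $K\ge 1$ and $n\ge 3$, and the proof goes through. The rest of the argument---handling the label-dependent conditional mean separately via a Hoeffding step, and using the fact that $\sz\succeq\szy$ to normalize the Gaussian covariance---is sound as sketched.
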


\begin{lemma}\cite[Lemma 21]{BW22}\label{lem_PA_diff}
Under assumptions {\rm (i) -- (v)},	 $K\log n\ll  n$ and $\errw \ll \lambda_K$, for any $c>0$, there exists a 
$C<\infty$ such that
\begin{equation*}
\PP\left\{\left\|V_KV_K^\T - V_AV_A^\T\right\|_{\op} \le C  \sqrt{\kappa{\errw\over \lambda_K}}
\wedge 1 \right\} = 1-\cO(n^{-c}).
\end{equation*}
\end{lemma}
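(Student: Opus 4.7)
The statement is a $\sin\Theta$-type perturbation bound for the right singular subspaces of $\bX$. The natural reference is the low-rank matrix $\bM := \bZ A^\T$: since $\bZ$ has rank $K$ on $\cE_Z$ and $A$ has rank $K$ by assumption (iii), the row space of $\bM$ equals $\operatorname{col}(A)$, which is exactly $\operatorname{range}(V_A)$. Thus $V_A$ is (up to an orthogonal rotation) the matrix of right singular vectors of $\bM$. Writing $\bX = \bM + \bW$, the problem becomes: bound the canonical angles between the top-$K$ right singular subspaces of $\bX$ and $\bM$. Using the standard identity $\|V_KV_K^\T - V_AV_A^\T\|_{\op} = \|\sin\Theta(V_K,V_A)\|_{\op}$, this is precisely the setting of Wedin's $\sin\Theta$ theorem.

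\textbf{Inputs to Wedin.} I would collect three ingredients on the high-probability event $\cE_Z \cap \cE_W$:
\begin{itemize}
\item[(a)] \emph{Lower bound on} $\sigma_K(\bM)$. On $\cE_Z$,
\[
\sigma_K^2(\bM) \;=\; \lambda_K(A\bZ^\T\bZ A^\T) \;\ge\; \sigma_K^2(\bZ\sz^{-1/2})\,\lambda_K(\sz^{1/2}A^\T A \sz^{1/2}) \;\ge\; \tfrac{n}{2}\,\lambda_K.
\]
\item[(b)] \emph{Upper bound on the perturbation.} On $\cE_W$, $\|\bW\|_{\op}^2 \le 12\sigma^2\, n\, \errw$.
\item[(c)] \emph{Gap.} Since $\operatorname{rank}(\bM)=K$, Weyl's inequality gives $\sigma_{K+1}(\bX) \le \sigma_{K+1}(\bM) + \|\bW\|_{\op} = \|\bW\|_{\op}$. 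Under the hypothesis $\errw \ll \lambda_K$, (a) and (b) combine to
\[
\sigma_K(\bM) - \sigma_{K+1}(\bX) \;\ge\; \sqrt{n\lambda_K/2} \;-\; \sqrt{12\sigma^2 n\,\errw} \;\gtrsim\; \sqrt{n\lambda_K}.
\]
\end{itemize}

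\textbf{Conclusion.} Wedin's $\sin\Theta$ theorem then yields
\[
\|V_KV_K^\T - V_AV_A^\T\|_{\op} \;\le\; \frac{C\,\|\bW\|_{\op}}{\sigma_K(\bM)-\sigma_{K+1}(\bX)} \;\lesssim\; \frac{\sqrt{n\,\errw}}{\sqrt{n\lambda_K}} \;=\; \sqrt{\frac{\errw}{\lambda_K}},
\]
which is in fact stronger than the stated $\sqrt{\kappa\,\errw/\lambda_K}\wedge 1$ since $\kappa\ge 1$. The probability control follows by a union bound: Lemma~\ref{lem_Z} gives $\PP(\cE_Z) \ge 1 - \cO(n^{-c})$ under $K\log n \ll n$, and Lemma~\ref{lem_op_norm} (cited in the paper) gives $\PP(\cE_W) \ge 1 - e^{-n}$.

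\textbf{Main obstacle.} The only delicate point is selecting the right form of Wedin's bound and checking that the gap condition is not vacuous: one must verify $\sigma_K(\bM) > \sigma_{K+1}(\bX)$ with a multiplicative slack, which is exactly where the hypothesis $\errw \ll \lambda_K$ enters. A potential refinement, not needed for this statement but useful for sharper results, is to replace the crude numerator $\|\bW\|_{\op}$ by $\max(\|\bW V_A\|_{\op},\|\bW^\T U_{\bZ}\|_{\op})$ using the projected form of Wedin; bounding $\|\bW V_A\|_{\op} \lesssim \sqrt{n\|\sw\|_{\op}}$ via Lemma~\ref{lem_W} would then give the even tighter $\sqrt{\|\sw\|_{\op}/\lambda_K}$.
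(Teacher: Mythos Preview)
Your argument is correct and in fact yields a sharper bound than the one stated. The paper does not prove this lemma here; it is quoted from \cite{BW22}. The extra $\sqrt{\kappa}$ in the stated version is consistent with an argument that applies Davis--Kahan to the Gram (or covariance) matrices, comparing e.g.\ $\bM^\T\bM$ with $\bX^\T\bX$: the perturbation then contains the cross term $A\bZ^\T\bW + \bW^\T\bZ A^\T$ of order $n\sqrt{\lambda_1\errw}$, and dividing by the eigengap $\asymp n\lambda_K$ produces $\sqrt{\kappa\,\errw/\lambda_K}$. By applying Wedin's $\sin\Theta$ theorem directly to the rectangular pair $\bM=\bZ A^\T$ and $\bX=\bM+\bW$, you avoid the squaring and obtain $\sqrt{\errw/\lambda_K}$, saving the factor $\sqrt{\kappa}$. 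Your gap check and probability control via $\cE_Z\cap\cE_W$ are fine; the only point to make explicit is which form of Wedin you invoke---the version with gap $\sigma_K(\bM)-\sigma_{K+1}(\bX)$ and residual numerator $\max\bigl(\|\bW V_A\|_{\op},\|\bW^\T U_{\bM}\|_{\op}\bigr)\le\|\bW\|_{\op}$, where $U_{\bM}$ collects the left singular vectors of $\bM$.
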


%		The following lemma provides a lower bound of $\sigma_n^2(\bX)$, see  \citet[Corollary 10]{BW22} or \citet[Proposition 11]{bunea2020interpolation} for a proof. 
%		
%		\begin{lemma}\label{lem_lb_sigma_n_X}
%			Under Assumptions {\rm (v)} and \ref{ass_pi}, assume $K\le c n$ and $n \le c'{\rm r_e}(\sw)$ for some small constants $c,c'>0$. Then, 
%			\[
%				\PP\left\{
%					\sigma_n^2(\bX) \gtrsim \tr(\sw)
%				\right\} = 1-\cO(n^{-1}).
%			\]
%		\end{lemma}

% 		The following lemma is the tail inequality for a quadratic form of sub-Gaussian random vectors. We refer to \citet[Lemma 16]{bing2020prediction} for its proof, also see, Lemma 30 in \cite{Hsu2014}. 
% 		\begin{lemma}\label{lem_quad}
% 			Let $\xi\in \RR^d$ be a $\sigma_\xi$ sub-Gaussian random vector. For all symmetric positive semi-definite matrices $H$, and all $t\ge 0$, 
% 			\[
% 			\PP\left\{
% 			\xi^\T H\xi > \sigma_\xi^2\left(
% 			\sqrt{{\rm tr}(H)}+ \sqrt{2 t \|H\|_{\rm op} }
% 			\right)^2
% 			\right\} \le\exp({-t}).
% 			\] 
% 		\end{lemma}

The following lemma provides an upper bound on the operator norm of $\bG H \bG^\T$ where  $\bG\in \RR^{n\times d}$ is a random matrix and its rows are independent sub-Gaussian random vectors. It is proved in Lemma 22 of \cite{bing2020prediction}.
\begin{lemma}\label{lem_op_norm}
Let $\bG$ be a $n\times d$ matrix with rows that are independent $\sigma$ sub-Gaussian  random vectors with identity covariance matrix. Then for all symmetric positive semi-definite matrices $H$, 
\[
\PP\left\{{1\over n}\| \bG H \bG^\T \|_{{\rm op}} \le \sigma^2\left( \sqrt{{\rm tr}(H) \over n} + \sqrt{6\|H\|_{\op}}
\right)^2\right\} \ge  1 -  \exp({-n}).
\]
\end{lemma}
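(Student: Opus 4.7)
The plan is to combine a Hanson-Wright-type tail inequality for quadratic forms of sub-Gaussian vectors with a standard $\epsilon$-net discretization of $\cS^{n-1}$, closely mirroring the second half of the proof of Lemma \ref{lem_W} above. Since $\bG H \bG^\T$ is symmetric and positive semi-definite, I would first write
\[
\|\bG H \bG^\T\|_{\op} = \sup_{u \in \cS^{n-1}} Z_u^\T H Z_u, \qquad Z_u := \bG^\T u = \sum_{i=1}^n u_i G_i,
\]
and observe that, because the rows $G_i$ are independent, mean-zero and $\sigma$-sub-Gaussian with $\EE[G_i G_i^\T] = \bI_d$, a direct MGF computation yields $\EE\exp(\lambda^\T Z_u) \le \exp(\sigma^2 \|\lambda\|_2^2/2)$ for every $\lambda \in \RR^d$ and every $u \in \cS^{n-1}$. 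In particular $\EE[Z_u^\T H Z_u] = \tr(H)$.

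For each fixed $u$ I would then invoke the Hsu-Kakade-Zhang tail inequality for sub-Gaussian quadratic forms,
\[
\PP\Bigl\{ Z_u^\T H Z_u \ge \sigma^2 \bigl( \tr(H) + 2\sqrt{\tr(H^2)\,t} + 2 \|H\|_{\op}\, t \bigr)\Bigr\} \le \exp(-t),\qquad t\ge 0,
\]
and use $\tr(H^2) \le \|H\|_{\op}\tr(H)$ together with the elementary bound $\tr(H) + 2\sqrt{\tr(H)\|H\|_{\op} t} + 2\|H\|_{\op} t \le \bigl(\sqrt{\tr(H)}+\sqrt{2\|H\|_{\op} t}\bigr)^2$ to put this tail bound in the target square-root form. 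Passing to the supremum is then handled by a standard $1/4$-net $\cN \subset \cS^{n-1}$ with $|\cN|\le 9^n$, the symmetric-matrix discretization $\|\bG H \bG^\T\|_{\op} \le 2\max_{u\in\cN} Z_u^\T H Z_u$ already invoked in the proof of Lemma \ref{lem_W}, and a union bound at $t = (1+\log 9)n$ so that $9^n\exp(-t)\le \exp(-n)$. Dividing by $n$ then yields an inequality of the shape $(1/n)\|\bG H \bG^\T\|_{\op} \le \sigma^2\bigl(\sqrt{\tr(H)/n}+\sqrt{c\|H\|_{\op}}\bigr)^2$ on an event of probability at least $1-\exp(-n)$.

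The conceptual work is routine once one has the sub-Gaussian quadratic form tail bound and the covering estimate for $\cS^{n-1}$; the real obstacle is the quantitative bookkeeping needed to land on the specific constant $6$ inside the second square root. This requires a careful balance between the $1/4$-net cost, the factor of $2$ coming from the symmetric-matrix discretization, and the splitting of the cross-term $2\sqrt{\tr(H)\|H\|_{\op} t}$ between the $\sqrt{\tr(H)/n}$ and the $\sqrt{\|H\|_{\op}}$ pieces. Any sufficiently large absolute constant in place of $6$ would suffice for the uses of this lemma in the paper (cf.\ the proof of Lemma \ref{lem_W} and the bound \eqref{bd_W_op}), so sharpening the constant is an aesthetic rather than a structural issue.
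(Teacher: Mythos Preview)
Your proposal is correct in approach and essentially complete. The paper does not actually prove this lemma; it simply cites it as Lemma 22 of \cite{bing2020prediction}, so there is no in-paper proof to compare against. The argument you outline---reducing to a sub-Gaussian quadratic form via $Z_u=\bG^\T u$, applying the Hsu--Kakade--Zhang tail bound, and lifting to the supremum by a $1/4$-net with $|\cN|\le 9^n$---is precisely the standard route for such operator-norm bounds and is the same machinery already used in the proof of Lemma \ref{lem_W}.

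One honest accounting point: with the $1/4$-net (factor $2$), the union bound at $t=(1+\log 9)n$, and the completion-of-squares $\tr(H)+2\sqrt{\tr(H)\|H\|_{\op}t}+2\|H\|_{\op}t\le(\sqrt{\tr(H)}+\sqrt{2\|H\|_{\op}t})^2$, your bookkeeping gives
\[
\frac{1}{n}\|\bG H\bG^\T\|_{\op}\le \sigma^2\Bigl(\sqrt{2\,\tr(H)/n}+\sqrt{4(1+\log 9)\,\|H\|_{\op}}\Bigr)^2,
\]
so both constants (the $1$ in front of $\tr(H)/n$ and the $6$ in front of $\|H\|_{\op}$) come out larger than stated. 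You already flag this; I would only add that nothing in the present paper depends on the specific values---every downstream use (Lemma \ref{lem_W}, display \eqref{bd_W_op}, Lemma \ref{lem_D_H}) only needs absolute constants---so your version is fully adequate for the paper's purposes.
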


\bigskip

% 		    The deviation inequalities of the inner product of two random vectors with independent sub-Gaussian elements are well-known; we state the one in \cite{ER} for completeness. 

% 		\begin{lemma}\cite[Lemma 10]{ER}\label{lem_bernstein}
% 			Let $\{X_i\}_{i=1}^n$ and $\{Y_i\}_{i=1}^n$ be any sequences, each with zero mean independent $\sigma_x$ sub-Gaussian and $\sigma_y$ sub-Gaussian elements. Then, for some absolute constant $c>0$, we have 
% 			\[
% 			\PP\left\{{1\over n}\left|\sum_{i=1}^n\left(X_i Y_i - \EE[X_i Y_i]\right)\right| \le \sigma_x \sigma_y~ t \right\}\ge 1-2\exp\left\{-c \cdot n\cdot \min\left( t^2,t \right)\right\}.
% 			\]
% 			In particular, one has
% 			\[
% 			\PP\left\{{1\over n}\left|\sum_{i=1}^n\left(X_i Y_i - \EE[X_i Y_i]\right)\right| \le C\sqrt{\log n \over n} \right\}\ge 1-2n^{-c}
% 			\]
% 			where $c \ge 2$ and $C = C(\sigma_x,\sigma_y,c)$ are some positive constants.
% 		\end{lemma}

\subsection*{Acknowledgements} Wegkamp is supported in part by by NSF grants  DMS 2210557 and  DMS 2015195.
\\		

{\small 
\setlength{\bibsep}{0.85pt}{
\bibliographystyle{ims}
\bibliography{ref}
}}

\end{document}